\title{Domain Adaptation under Open Set Label Shift}
\author{%
Saurabh Garg,\, Sivaraman Balakrishnan,\, Zachary C. Lipton \\
Machine Learning Department, \\ 
Department of Statistics and Data Science, \\
Carnegie Mellon University\\
\small{\texttt{\{\href{mailto:sgarg2@andrew.cmu.edu}{sgarg2},\href{mailto:sbalakri@andrew.cmu.edu}{sbalakri},\href{mailto:zlipton@andrew.cmu.edu}{zlipton}\}@andrew.cmu.edu
}}}
\definecolor{cobalt}{rgb}{0.0, 0.28, 0.67}
\definecolor{carnelian}{rgb}{0.7, 0.11, 0.11}
\definecolor{bole}{rgb}{0.47, 0.27, 0.23}
\newcommand{\update}[1]{\textcolor{black}{#1}}
\begin{document}

\maketitle

\begin{abstract}
 We introduce the problem of domain adaptation 
under Open Set Label Shift (OSLS) \update{where}
the label distribution can change arbitrarily
and a new class may arrive during deployment,
but the class-conditional distributions $p(x|y)$ 
are domain-invariant.
OSLS subsumes domain adaptation under label shift  
and Positive-Unlabeled (PU) learning.  
The learner's goals here are two-fold:
(a) estimate the target label distribution,
including the novel class;
and (b) learn a target classifier. 
First, we establish necessary and sufficient conditions
for identifying these quantities.
Second, motivated by advances in label shift and PU learning,
we propose practical methods for both tasks
that leverage black-box predictors. 
Unlike typical open set domain adaptation problems,
which tend to be ill-posed and amenable only to heuristics, 
OSLS offers a well-posed problem
amenable to more principled machinery. 
Experiments across numerous semi-synthetic 
benchmarks on vision, language,
and medical datasets 
demonstrate that our methods
consistently outperform open set domain adaptation  baselines, 
achieving $10$--$25\%$ improvements 
in target domain accuracy. 
Finally, we analyze the proposed methods, 
establishing finite-sample convergence 
to the true label marginal and convergence to
optimal classifier for linear models in a Gaussian setup\footnote{Code is available at \url{https://github.com/acmi-lab/Open-Set-Label-Shift}.}.
\end{abstract}

\section{Introduction}
%
%
Suppose that we wished to deploy a machine learning system
to recognize diagnoses based on their clinical manifestations.
If the distribution of data were static over time,
then we could rely on the standard machinery of statistical prediction.
However, disease prevalences are constantly changing,
violating the assumption  of independent and identically distributed (iid) data. 
In such scenarios, we might reasonably 
apply the \emph{label shift} assumption,
where prevalences can change but
clinical manifestations cannot.
When only the relative proportion 
of previously seen diseases can change, 
principled methods can detect
and correcting for label shift on the fly
\citep{saerens2002adjusting, zhang2013domain, lipton2018detecting, azizzadenesheli2019regularized, alexandari2019adapting, garg2020labelshift}.
But what if a new disease, like COVID-19,
were to arrive suddenly?

Traditional label shift adaptation techniques break
when faced with a previously unseen class.
A distinct literature on Open Set Domain Adaptation (OSDA) seeks to handle such cases
\citep{panareda2017open, baktashmotlagh2018learning, cao2019learning, tan2019weakly, lian2019known, you2019universal, saito2018open, saito2020universal, fu2020learning}). 
Given access to labeled \emph{source} data and unlabeled \emph{target} data,
the goal in OSDA is to adapt classifiers
in general settings where previous classes
can shift in prevalence (and even appearance),
and novel classes separated 
out from those previously seen can appear.
Most work on OSDA is driven by the creation of 
and progress on benchmark datasets
(e.g., DomainNet, OfficeHome).
Existing OSDA methods are heuristic in nature,
addressing settings where the right answers 
seem intuitive but are not identified mathematically. 
However, absent assumptions on: 
(i) the nature of distribution shift 
among source classes and 
(ii) the relation between source 
classes and novel class, 
standard impossibility results for domain adaptation
condemn us to guesswork~\citep{ben2010impossibility}.
 
\begin{figure*}[t!]
    \centering 
    \subfigure{\includegraphics[width=\linewidth]{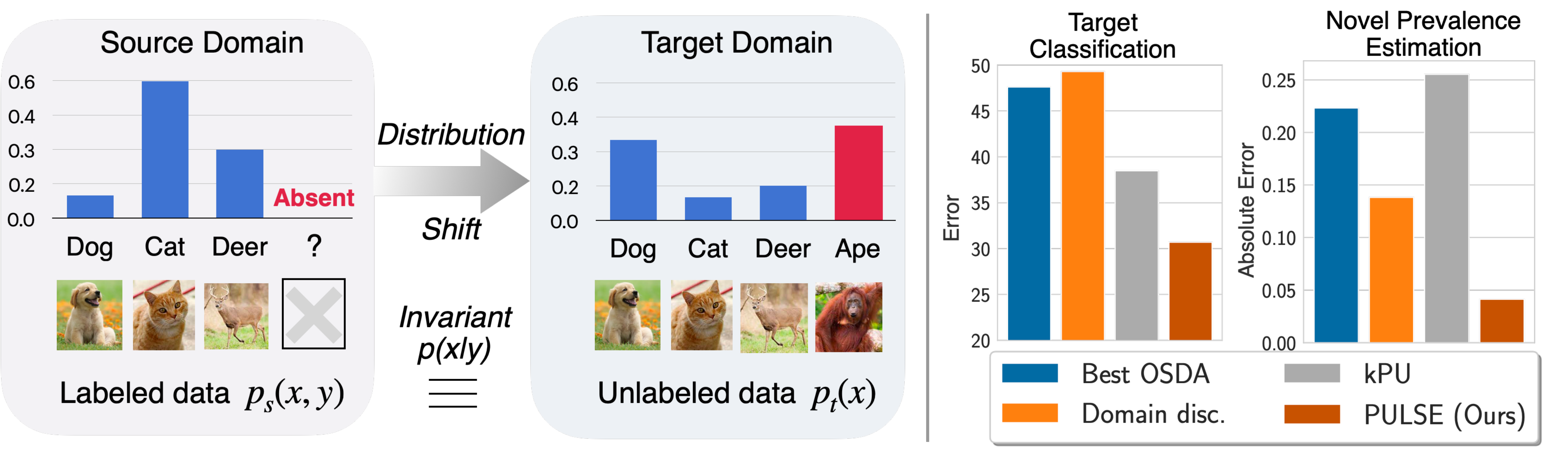}}
    \caption{\textbf{Left:} \emph{Domain Adaptation under OSLS}. An instantiation of OSDA that applies label shift assumption but allows for a new class to show up in target domain. \textbf{Right:} \emph{Aggregated results across seven semi-synthetic benchmark datasets}. For both target classification and novel class prevalence estimation, PULSE significantly outperforms other methods (lower error is better). For brevity, we only include result for the best OSDA method. For detailed comparison, refer \secref{sec:exp}.   
    }
    \label{fig:intro}
  \end{figure*}

In this work, we introduce domain adaptation
under Open Set Label Shift (OSLS), 
a coherent instantiation of OSDA 
that applies the label shift assumption
but allows for a new class to show up 
in the target distribution.
Formally, the label distribution may shift
between source and target $p_s(y) \neq p_t(y)$,
but the class-conditional distributions 
among previously seen classes may not
(i.e., $\forall y \in \{1,2, \dots,k\}, p_s(x|y) =p_t(x|y)$).
Moreover, a new class $y=k+1$ may arrive 
in the target period. 
Notably, OSLS subsumes label shift~\citep{saerens2002adjusting, storkey2009training, lipton2018detecting} (when $p_t(y=k+1)=0$)
and learning from Positive and Unlabeled (PU) data~\citep{de1999positive, letouzey2000learning, elkan2008learning} (when $k=1$). 
As with label shift and PU learning, 
our goals are two-fold.
Here, we must (i) estimate the target 
label distribution $p_t(y)$ (including the novel class \update{prevalence}); 
(ii) train a $(k+1)$-way target-domain classifier.

First, 
we characterize when the parameters of interest are identified
 (\secref{sec:identifiability}). 
Namely, we define a (necessary) \emph{weak positivity} condition,
which states that there exists a subset of each label's support
that has zero probability mass under the novel class
and that \update{the submatrix} of $p(x|y)$ consisting 
only of rows outside the novel class's support is full rank. 
Moreover, we prove that weak positivity alone is not sufficient. 
We introduce two sufficient conditions:
\emph{strong positivity} and \emph{separability},
either of which (independently) ensures identifiability. 

Focusing on cases with {strong positivity},
we show that OSLS reduces 
to $k$ PU learning problems (\secref{sec:reduction}). 
However, we demonstrate that straightforward applications of this idea 
fail because
(i) bias accumulates across the $k$ mixture proportion estimates
leading to grossly underestimating the novel class's prevalence;
and (ii) naive combinations of the $k$ PU classifiers are biased and inaccurate. 

Thus motivated, we propose the PULSE framework,
which combines methods from Positive and Unlabeled learning
and Label Shift Estimation, 
yielding two-stage techniques for both
label marginal estimation and classification (\secref{sec:framework}). 
Our methods build on recent advances in label shift
\citep{lipton2018detecting, azizzadenesheli2019regularized, alexandari2019adapting, garg2020labelshift} and PU learning \citep{kiryo2017positive,ivanov2019dedpul,garg2021PUlearning}, 
that leverage appropriately chosen black-box predictors  
to avoid the curse of dimensionality.
PULSE first estimates the label shift 
among previously seen classes,
and then re-samples the source data 
to formulate a single PU learning problem 
between (reweighted) source and target data
to estimate fraction of novel class and to learn the target classifier. 
In particular, our procedure builds on the BBE and CVIR 
techniques proposed in \citet{garg2021PUlearning}.
PULSE is simple to implement and compatible 
with arbitrary hypothesis classes (including deep networks).

We conduct extensive semi-synthetic experiments 
adapting seven benchmark datasets 
spanning vision (CIFAR10, CIFAR100, Entity30), 
natural language (Newsgroups-20), 
biology (Tabula Muris),
and medicine (DermNet, BreakHis)
(\secref{sec:exp}).
Across numerous data modalities, 
draws of the label distributions,
and model architectures,
PULSE consistently outperforms generic OSDA methods, 
improving by $10$--$25\%$ in accuracy on target domain. 
%
Moreover, PULSE outperforms methods 
that naively solve $k$ PU problems on both 
label distribution estimation and classification.

Finally, we analyze our framework (\secref{sec:analysis}).
First, we extend \citet{garg2021PUlearning}'s analysis of BBE
to derive finite-sample error bounds 
for our estimates of the label marginal.
Next, we develop new analyses of the CVIR objective \citep{garg2021PUlearning}
that PULSE relies in the classification stage. 
Focusing on a Gaussian setup and linear models
optimized by gradient descent,
we prove that CVIR converges
to a true positive versus negative classifier in population. 
Addressing the overparameterized setting 
where parameters exceed dataset size,
we conduct an empirical study
that helps to elucidate why, on separable data,
CVIR outperforms other consistent objectives,
including nnPU~\citep{kiryo2017positive} and uPU~\citep{du2015convex}.

\section{Related Work} \label{sec:related}
\textbf{(Closed Set) Domain Adaptation (DA) {} {}}
Under DA, the goal is to adapt a predictor
from a source distribution with labeled data
to a target distribution from which 
we observe only unlabeled examples. 
DA is classically explored under 
two distribution shift scenarios~\citep{storkey2009training}: 
(i) Covariate shift~\citep{zhang2013domain,zadrozny2004learning,cortes2010learning,cortes2014domain,gretton2009covariate}
where $p(y|x)$ remains invariant
among source and target; 
and (ii) Label shift~\citep{saerens2002adjusting, lipton2018detecting, azizzadenesheli2019regularized, alexandari2019adapting, garg2020labelshift, zhang2020coping} where $p(x|y)$
is shared across source and target.
%
%
In these settings most theoretical analysis 
requires that the target distribution's support 
is a subset of the source support~\citep{ben2010impossibility}. 
%
%
However, recent empirically work in 
DA~\citep{long2015learning,long2017deep, sun2016deep, sun2017correlation, zhang2019bridging, zhang2018collaborative, ganin2016domain, sohn2020fixmatch}     
focuses on settings 
motivated by benchmark datasets
(e.g., WILDS~\citep{ sagawa2021extending, wilds2021}, 
Office-31~\citep{saenko2010adapting}
OfficeHome~\citep{venkateswara2017deep}, 
DomainNet~\citep{peng2019moment}) 
where such overlap assumptions are violated. 
Instead, they rely on some intuitive
notion of semantic equivalence across domains. 
These problems are not well-specified
and in practice, 
despite careful hyperparameter tuning, 
these methods 
often do not improve over 
standard empirical risk minimization
on source data alone for practical, 
and importantly, previously unseen
datasets~\citep{sagawa2021extending}.

\textbf{Open Set Domain Adaptation (OSDA) {} {}} 
OSDA~\citep{panareda2017open, bendale2015towards, 6365193}
extends DA to settings where along with distribution 
shift among previously seen classes, 
we may observe a novel class in the target data. 
This setting is also known 
as \emph{universal domain adaptation}~\citep{you2019universal, saito2020universal}. 
\update{Rather than making precise assumptions 
about the nature of shift between source and target as in OSLS}, 
the OSDA literature is primarily governed by 
semi-synthetic problems on benchmark DA datasets
(e.g. DomainNet, Office-31 and OfficeHome). 
Numerous OSDA methods have been proposed~\citep{baktashmotlagh2018learning, cao2019learning, tan2019weakly, lian2019known, you2019universal, saito2018open, saito2020universal, fu2020learning, bucci2020effectiveness}. 
At a high level, most OSDA methods perform two steps: 
(i) align source and target representation
for previously seen classes; and (ii) train a 
discrimination to reject 
novel class from previously seen classes. 
The second step typically uses 
novelty detection heuristics
to identify  novel samples.

\textbf{PU Learning {} {}} 
\update{Positive and Unlabeled (PU) learning 
is the base case of OSLS}.
Here, we observe labeled data a single source class  
and unlabeled target data contains 
data from both the novel class and the source class. 
In PU learning, our goals are: 
(i) Mixture Proportion Estimation (MPE), i.e., 
determining the fraction of previously seen class 
in target ; and (ii) PU classification, 
i.e., learning to discriminate between
the novel and the positive (source) class.
Several classical methods were proposed 
for both MPE~\citep{elkan2008learning,du2014class,scott2015rate,jain2016nonparametric,bekker2018estimating,bekker2020learning} 
and classification~\citep{elkan2008learning, du2014analysis,du2015convex}. 
However, classical MPE methods
do not scale to high-dimensional settings~\citep{ramaswamy2016mixture}. 
More recent methods alleviate these issues
by operating in classifier output space~\citep{garg2021PUlearning, ivanov2019dedpul}. 
For classification, traditional methods fail 
when deployed with models classes with high capacity
due to their capacity of fitting random labels~\citep{zhang2016understanding}. 
Recent methods~\citep{garg2021PUlearning, kiryo2017positive, chen2020self}, 
avoid over-fitting by employing regularization 
or self-training techniques.


%


\textbf{Other related work {} {}}
%
A separate line of work looks at the problem 
of Out-Of-Distribution (OOD) detection
\citep{hendrycks2016baseline,geifman2017selective,lakshminarayanan2016simple, jiang2018trust, ovadia2019can, zhang2020hybrid}. 
Here, the goal is to identify novel examples, 
i.e., samples that lie out of the support of training distribution. 
The main different between OOD detection and OSDA 
is that in OOD detection we do not have access to 
unlabeled data containing a novel class. 
Recently, \citet{cao2021open} proposed 
open-world semi-supervised learning,
where the task is to not only identify 
novel classes in target but also to separate out different 
novel classes in an unsupervised manner. 

\update{Our work takes a step back 
from the hopelessly general OSDA setup,
introducing OSLS, a well-posed OSDA setting
where the sought-after parameters can be identified.}

\section{Open Set Label Shift} \label{sec:setup}
%
%

\textbf{Notation {} {}} 
For a vector $v\in \Real^d$, 
we use $v_j$ to denote its $j^\text{th}$ entry, 
and for an event $E$, we let $\indict{E}$ 
denote the binary indicator of the event.
By $\abs{A}$, we denote the cardinality of set $A$.  

Let $\inpt \in \Real^d $ be the input space 
and $\out = \{ 1,2, \ldots, k+1\}$ be the output space
for multiclass classification.
Let $\ProbS$ and $\ProbT$ be the 
source and target distributions and 
let $\ps$ and $\pt$ denote the corresponding 
probability density (or mass) functions. 
By $\mathbb{E}_s$ and $\mathbb{E}_t$, 
we denote expectations over 
the source and target distributions.
We assume that we are given a loss function 
$\lossell: \Delta^{k} \times \out \to \Real $,
such that $\ell(z, y)$ is the loss incurred 
by predicting $z$ when the true label is $y$. 
Unless specified otherwise,
we assume that $\ell$ is the cross entropy loss. 
As in standard unsupervised domain adaptation, 
we are given independently and identically distributed (iid) 
samples from labeled source data 
$\{(\x_1,y_1), (\x_2, y_2), \ldots , (\x_n, y_n)\}\sim \ProbS^n$ 
and iid samples from unlabeled target data
$\{\x_{n+1}, \x_{n+2}, \ldots, \x_{n+m}\}\sim \ProbT^m$.  

Before formally introducing OSLS, we describe 
label shift and PU learning settings. 
Under label shift, we observe data 
from $k$ classes in both source 
and target where the conditional 
distribution remain invariant 
(i.e., $p_s(x|y) = p_t(x|y)$ 
for all classes $y \in [1,k]$) 
but the target label marginal  
may change (i.e., $p_t(y) \ne p_s(y)$). 
Additionally, for all classes 
in source have a non-zero support , i.e., 
for all $y\in [1, k]$, $p_s(y) \ge c$, where $c > 0$. 
Under PU learning, we possess labeled source data 
from a positive class and unlabeled target data 
from a mixture of positive and negative class
with a goal of learning a positive-versus-negative classifier
on target. 
We now introduce the OSLS setting:


\begin{definition}[Open set label shift]
Define $\out_t = \out$ and 
$\out_s = \out \setminus \{k+1\}$. 
Under OSLS, the label distribution among 
source classes $\out_s$ may change
but the class conditional $p(x|y)$ 
for those classes remain invariant 
between source and target, 
and the target domain may contain a novel class, i.e., 
\begin{align}
    \ps(x|y = j) = \pt(x|y=j) \quad \forall j \in \out_s \qquad \text{and} \qquad \ps(y = k+1) = 0 \,. \label{eq:OSLS}
\end{align}
Additionally, we have non-zero support 
for all $k$ (previously-seen) labels 
in the source distribution, 
i.e., for all $y \in \out_s$, 
$\ps(y) \ge c$ for some $c > 0$. 
\end{definition}

Note that the label shift and PU learning problems
can be obtained as special cases of OSLS.  
When no novel class is observed in target 
(i.e., when $p_t(y = k+1) = 0$),
we recover the label shift problem,
and when we observe only one class in source 
(i.e., when $k=1$), 
the OSLS problem reduces to PU learning.
%
Under OSLS, our goal naturally breaks 
down into two tasks: 
(i) estimate the target label marginal $p_t(y)$ 
for each class $y \in \out$; 
(ii) 
train a classifier $f: \inpt \to \Delta^k$ 
to approximate $p_t(y|x)$. 
\section{Identifiablity of OSLS} \label{sec:identifiability} 
We now introduce conditions for OSLS,
under which the solution is identifiable. 
Throughout the section, 
we will assume access to population 
distribution for labeled source data and unlabeled target data, 
i.e., $p_s(x,y)$ and $p_t(x)$ is given. To keep
the discussion simple, we assume finite input domain $\inpt$
which can then be relaxed to continuous inputs. 
We relegate proofs to \appref{sec:proof_identi_app}. 

We first make a connection between target 
label marginal $\pt(y)$ estimation 
and learning the target classifier $\pt(y|x)$
showing that recovering $\pt(y)$ is 
enough to identify $p_t(y|x)$. 
\update{In population, given access to $p_t(y)$, the 
class conditional $\pt(x|y=k+1)$ can be 
obtained in closed form as $\nicefrac{\left(\pt(x) - \sum_{j=1}^k \pt(y=j) \ps(x|y=j)\right)}{p_t(y = k+1)}$.} 
We can then apply Bayes rule to obtain $\pt(y|x)$.  
Henceforth, we will focus our discussion on identifiability 
of $\pt(y)$ which implies identifiability of $p_t(y|x)$. 
In following proposition, we present \emph{weak positivity},
a necessary condition for $p_t(y)$ to be identifiable. 
%
%
\begin{restatable}[Necessary conditions]{reprop}{necessary} \label{prop:necessary}
Assume $p_t(y) > 0$ for all $y\in \out_t$. Then 
$p_t(y)$ is identified only if $\pt(x|y=k+1)$ and 
$\ps(x|y)$ for all $y\in \out_s$ satisfy weak positivity, i.e., there must  
exists a subdomain $X_\wpos \subset X$ such that:
\begin{enumerate}[(i)]
  \setlength{\itemsep}{3pt}
  \setlength{\parskip}{0pt}
  \setlength{\parsep}{0pt}
    \item  $\pt(X_\wpos | y = k+1) = 0$; and 
    \item the matrix $\left[\ps(x|y)\right]_{x\in X_\wpos, y\in \out_s}$ is full column-rank. 
\end{enumerate}
\end{restatable}

Intuitively, \propref{prop:necessary} states 
that if the target marginal doesn't lie
on the vertex of the simplex $\Delta^k$,
then their must exist a subdomain $X_\wpos$ 
where the support of novel class is zero 
and within $X_\wpos$, 
$\pt(y)$ for source classes is identifiable. 
While it may seem that existence of 
a subdomain $X_\wpos$ is enough, 
we show that for the OSLS problem, 
existence doesn't imply uniqueness. 
In \appref{subsec:counter}, 
we construct an example,
where the weak positivity condition is not sufficient. 
In that example, we show that there can 
exist two subdomains $X_\wpos$ 
and $X_\wpos^\prime$ satisfying weak positivity, 
both of which lead to separate solutions for $\pt(y)$.
Next, we extend weak positivity to 
two stronger conditions, 
either of which (alone) implies identifiability. 
\begin{restatable}[Sufficient conditions]{reprop}{sufficient} \label{prop:sufficiency}
The target marginal $p_t(y)$ is identified 
if for all $y\in \out \setminus \{k+1\}$, 
$\pt(x|y=k+1)$ and  $\ps(x|y)$ 
satisfy either:
\begin{enumerate}[(i)]
  \setlength{\itemsep}{3pt}
  \setlength{\parskip}{0pt}
  \setlength{\parsep}{0pt}
    \item Strong positivity, i.e.,
there exists $X_\spos \subset \inpt$ 
such that $p_t(X_\spos| y= k+1) = 0$ 
and the matrix $\left[\ps(x|y)\right]_{x\in X_\spos, y\in \out_s}$
is full-rank and diagonal; or
    \item Separability, i.e., there exists $X_\sep \subset \inpt$,
such that $p_t(X_\sep | y= k+1 ) = 0\,$, $p_s(X_\sep) = 1\,,$ 
and the matrix $\left[\ps(x|y)\right]_{x\in X_\sep, y\in \out_s}$ 
is full column-rank. 
\end{enumerate}
\end{restatable}
Strong positivity generalizes the irreducibility 
condition~\citep{blanchard2010semi}, 
which is sufficient for identifiability under PU learning,
to $k$ PU learning problems. 
Note that while the two conditions 
in \propref{prop:sufficiency} overlap, 
they cover independent set of OSLS problems.  
Informally, strong positivity extends weak positivity 
by making an additional assumption that the matrix 
formed by $p(x|y)$ on inputs in $X_\wpos$ is diagonal 
and the separability assumption
extends the weak positivity condition 
to the full input domain of source classes
instead of just $X_\wpos$. 
Both of these conditions identify 
a support region of $\inpt$ 
which purely belongs to source classes 
where we can either individually estimate 
the proportion of each source classes 
(i.e., under strong positivity) 
or jointly estimate the proportion 
(i.e., under separability). 

%

To extend our identifiability conditions 
for continuous distributions, 
the linear independence 
conditions on the matrix 
$\left[\ps(x|y)\right]_{x\in X_\sep, y\in \out_s}$
has the undesirable property of being 
sensitive to changes on sets of measure zero.
We may introduce stronger notions 
of linear independence as in Lemma 1 
of \citet{garg2020labelshift}.
We discuss this in \appref{subsec:extend_ident_continuos}.


\section{Reduction of OSLS to $k$ PU Problems} \label{sec:reduction}
Under the strong positivity condition, 
the OSLS problem can be broken down
into $k$ PU problems as follows: 
By treating a given source class $y_j \in \out_s$
as \emph{positive} and grouping
all other classes together as \emph{negative}
we observe that the unlabeled target data 
is then a mixture of data from 
the positive and negative classes.
This yields a PU learning problem 
and the corresponding mixture proportion 
is the 
fraction $\pt(y = j)$ (proportion of class $y_j$) among the target data.
By iterating this process for all source classes,
we can solve for the entire target label marginal $\pt(y)$.
Thus, OSLS reduces to $k$ instances of PU learning problem. 
Formally, note that $\pt(x)$ can be written as:
\begin{align}
\pt(x) = \pt(y=j) \ps(x|y = j) + \left(1- \pt(y=j)\right) \left(\sum\nolimits_{ i \in \out \setminus \{j\}} \mfrac{\pt(y = i)}{ 1- \pt(y = j)} \ps(x|y = i)\right) \label{eq:reduction}\,, 
\end{align}
individually for all $j \in \out_s$. 
By repeating this reduction for all classes,
we obtain $k$ separate PU learning problems. 
Hence, a natural choice is to leverage 
this structure and solve $k$ PU problems
to solve the original OSLS problem. 
\update{In particular, for each class $j \in \out_s$, 
we can first estimate its prevalence $\hat \pt(y = j)$
in the unlabeled target. 
Then the target marginal for the novel class is given by 
$\smash{\hat \pt( y = k+1) = 1 - \sum_{i=1}^k \hat \pt (y = i) }$.}
Similarly, for classification,
we can train $k$ PU learning classifiers $f_i$,
where $f_i$ is trained to classify a source class $i$ 
versus others in target. 
An example is classified as belonging to the class
$y=k+1$, if it rejected by all classifiers $f_i$ as other in target. 
We explain this procedure more formally in \appref{subsec:reduction}. 

This reduction has been mentioned in past work
\citep{sanderson2014class, xu2017multi}. 
However, to the best of our knowledge, 
no previous work has empirically investigated 
both classification and target label marginal 
estimation jointly. 
\citet{sanderson2014class} 
focuses only on target marginal estimation 
for 
tabular datasets 
and \citet{xu2017multi} assumes that 
the target marginal is known and only 
trains $k$ separate PU classifiers. 

In our work, 
%
we perform the first large scale experiments 
to evaluate efficacy of the reduction 
of the OSLS problem to $k$-PU problems.
With plugin state-of-the-art PU learning algorithms, 
%
we observe that this naive reduction doesn't scale 
to datasets with large number of classes
because of error accumulation in each 
of the $k$ MPEs 
and $k$ one-versus-other PU classifiers.
To mitigate the error accumulation problem, 
we propose the PULSE framework in the next section. 

\section{The PULSE Framework for OSLS 
} \label{sec:framework}

We begin with presenting our framework for OSLS problem 
under strong positivity condition. 
First, we explain the structure of OSLS 
that we leverage in PULSE framework
and then elaborate design decisions 
we make
to exploit the identified structure.


\paragraph{Overview of PULSE framework}
Rather than simply dividing each 
OSLS instance into $k$ PU problems, 
we exploit the joint structure 
of the problem to obtain a \emph{single} PU learning problem. 
To begin, we note that if only 
we could apply a
\emph{label shift correction}
to source, i.e.,
re-sample source classes 
according to their relative proportion
in the target data,
then we could subsequently consider 
the unlabeled target data 
as a mixture of (i)
the (reweighted) source distribution;
and (ii) the novel class distribution 
(i.e., $p_t(x|y=k+1)$). 
\update{Formally, we have 
\begin{align*}
    p_t(x) = \sum_{j \in \out_t} p_t(y=j) & p_t(x|y=j)  = 
    \sum_{j \in \out_s} \mfrac{p_t(y=j)}{p_s(y=j)} p_s(x, y=j) + p_t(x|y=k+1) p_t(y=k+1) \\
    &= (1- p_t(y=k+1)) p_s^\prime(x) + p_t(y=k+1) p_t(x|y=k+1)\,, \numberthis \label{eq:label_shift_corr}
\end{align*}
where $p_s^\prime(x)$ is the label-shift-corrected source distribution, 
i.e., $p_s^\prime(x) = \sum_{j \in \out_s} w(j) p_s(x,y=j)$,
where $w(j) = {\left(\nicefrac{p_t(y=j)}{\sum_k p_t(y=k)}\right)}/{p_s(y=j)}$ 
for all $j \in \out_s$. 
Intuitively, $p_t^\prime(j) = \nicefrac{p_t(y=j)}{\sum_k p_t(y=k)}$ is
re-normalized label distribution in target 
among source classes 
and $w(j)$'s are the importance weights.} 
%
Hence, after applying a label shift correction
to the source distribution $p_s^\prime(x)$, 
we have reduced the OSLS problem 
to a \emph{single} PU learning problem,
where $p_s^\prime(x)$ plays the part 
of the positive distribution and $p_t(x| y=k+1)$ acts
as negative distribution with mixture coefficients
$1 - p_t(y=k+1)$ and $p_t(y=k+1)$ respectively. 
\update{We now discuss our methods (i) to estimate 
the importance ratios $w(y)$; 
and (ii) to tackle the PU learning instance obtained from OSLS.  
}

\paragraph{Label shift correction: Target marginal estimation among source classes} While traditional methods for estimating label shift 
breakdown in high dimensional settings~\citep{zhang2013domain}, 
recent methods exploit black-box classifiers
to avoid the curse of dimensionality~\citep{lipton2018detecting, azizzadenesheli2019regularized, alexandari2019adapting}. 
However, these recent techniques
require overlapping label distributions, 
and a direct application would require
demarcation of samples from $p^\prime_s(x)$ \update{sub-population} 
in target, creating a cyclic dependency. 
\update{Instead, to estimate the relative proportion of previously seen classes in target, we leverage the $k$ PU reduction described in \secref{sec:reduction} with two crucial distinctions. First, we normalize the obtained estimates of fraction previously seen classes to obtain the relative proportions in $\ps^\prime(y)$. In particular, we do not leverage the estimates of previously seen class proportions in target to directly estimate the proportion of novel class which avoids issues due to error accumulation. 
Second, we exploit a $k$-way source classifier $f_s$ trained on labeled source data instead of training $k$ one-versus-other PU classifiers. }
\update{
We tailor the recently proposed 
Best Bin Estimation (BBE) 
technique from \citet{garg2021PUlearning}. 
%
We describe the modified BBE procedure in 
\appref{sec:OSLS_framework_app} (\algoref{alg:BBE}). 
After estimating the relative fraction 
of source classes in target (i.e., $\wh p_t^\prime(j) =\nicefrac{ \wh p_t(y=j)}{\sum_{k\in \out_s} \wh p_t(y=k)}$ for all $j \in \out_s$), 
we re-sample the source data according to $\wh p_t^\prime(y)$
to mimic samples from distribution $p_s^\prime(x)$. 
}

\begin{algorithm}[t!]
  \caption{Positive and Unlabeled learning post Label Shift Estimation (PULSE) framework}
  \label{alg:PULSE}
  \begin{algorithmic}[1]
  \INPUT: Labeled source data $\{{\bf X}^S, {\bf y}^S\}$ and unlabeled target samples ${\bf X}^T$. 
    \STATE Randomly split data into training $\{{\bf X}^S_1, {\bf y}^S_1\}$, ${\bf X}^T_1$ and hold out partition $\{{\bf X}^S_2, {\bf y}^S_2\}$, ${\bf X}^T_2$.
    \STATE Train a source classifier $f_s$ on labeled source data $\{{\bf X}^S_1, {\bf y}^S_1\}$.
    \STATE \update{Estimate label shift $\wh p^\prime_t(y = j) = \mfrac{\wh p_t(y=j)}{\sum_{k\in \out_s} \wh p_t(y=k)}$ using \algoref{alg:BBE}} and hence importance ratios $\wh w(j)$ among source classes $j \in \out_s$. 
    \STATE Re-sample training source data according to label distribution  $\wh p^\prime_t$ to get $\{ \widetilde{\bf X}^S_1, \widetilde{\bf y}^S_1\}$ and $\{ \widetilde{\bf X}^S_2, \widetilde{\bf y}^S_2\}$. 
    \STATE Using \algoref{alg:CVIR}, train a discriminator $f_d$ and estimate novel class fraction $\wh p_t(y=k+1)$. 
    \STATE Assign $[f_t(x)]_j = (f_d(x)) \mfrac{\wh w(j) \cdot [f_s(x)]_j}{\sum_{k\in \out_s} \wh w(k) \cdot [f_s(x)]_k} $ for all $j \in \out_s$ and $[f_t(x)]_{k+1} = 1 - f_d(x)$. And for all $j \in \out_s$, assign $\wh p_t(y = j) = (1-\wh p_t(y=k+1))\cdot \wh p_t^\prime(y = j)$. 
  \OUTPUT: Target marginal estimate $\wh p_t \in \Delta^k$ and target classifier $f_t (\cdot) \in \Delta^k$.  
\end{algorithmic}
\end{algorithm}

\paragraph{PU Learning: Separating the novel class from previously seen classes}
After obtaining a PU learning problem instance, 
we resort to PU learning techniques to 
(i) estimate the fraction of novel class $p_t(y=k+1)$; 
and (ii) learn a binary classifier $f_d(x)$ to discriminate 
between label shift corrected source $p_s^\prime(x)$
and novel class $p_t(x|y=k+1)$. 
\update{
%
With traditional methods for PU learning
involving domain discrimination, 
over-parameterized models can
memorize the positive instances 
in unlabeled,
assigning them confidently 
to the negative class,
which can severely hurt
generalization on PN 
data~\citep{kiryo2017positive, garg2021PUlearning}.
Rather, we employ Conditional 
Value Ignoring Risk (CVIR) loss
proposed in \citet{garg2021PUlearning} 
which was shown to outperform alternative 
approaches.
First, we estimate the proportion 
of novel class $\wh p_t(y=k+1)$ with BBE.
Next, given an estimate $\wh p_t(y=k+1)$,
CVIR objective discards
the highest loss 
$(1 - \wh p_t(y=k+1))$ fraction of examples
on each training epoch, removing the 
incentive to overfit to
the examples from $p_s^\prime(x)$. 
%
Consequently, we employ the 
iterative procedure that alternates between 
estimating the prevalence 
of novel class $\wh p_t(y=k+1)$ (with BBE) and 
minimizing the CVIR loss with estimated 
fraction of novel class. 
We detail this procedure in \appref{sec:OSLS_framework_app} (\algoref{alg:CVIR}).  
}

\paragraph{Combining PU learning and label shift correction} 
Finally, to obtain a $(k+1)$-way classifier $f_t(x)$ on target 
we combine discriminator $f_d$ and  source classifier $f_s$
with importance-reweighted label shift correction. 
In particular,
for all $j \in \out_s$, $[f_t(x)]_j = (f_d(x)) \frac{w(j) \cdot [f_s(x)]_j}{\sum_{k\in \out_s} w(k) \cdot [f_s(x)]_k} $
and $[f_t(x)]_{k+1} = 1 - f_d(x)$. 
\update{Overall, our approach outlined in \algoref{alg:PULSE} proceeds
as follows:
First, we estimate the label shift among
previously seen classes. 
Then we employ importance re-weighting of source 
data to formulate a single PU learning problem 
to estimate the fraction 
of novel class $\wh p_t(y=k+1)$ and 
to learn a discriminator $f_d$ 
for the novel class. Combining discriminator 
and label shift corrected source classifier
we get $(k+1)$-way target classifier. 
We analyse crucial steps in PULSE in 
\secref{sec:analysis}.} 
%

Our ideas for PULSE framework 
can be
extended to separability 
condition since   
\eqref{eq:label_shift_corr}
continues to hold. 
However, in our initial experiments,
we observe that techniques proposed 
under strong positivity were empirically 
stable and outperform methods
developed under separability.
This is intuitive for many benchmark datasets 
where it is natural to assume that 
for each class there exists a 
subdomain that only belongs to that class. 
We describe this in more detail in \appref{subsec:pulse_separablity_app}. 
\section{Experiments} \label{sec:exp}

\textbf{Baselines {} {}} 
We compare PULSE with several popular methods from OSDA literature. While these methods are not specifically 
proposed for OSLS, they are introduced for the more general OSDA problem. In particular, we make comparions with DANCE~\citep{saito2020universal}, UAN~\citep{you2019universal}, CMU~\citep{fu2020learning}, STA~\citep{liu2019separate}, Backprop-ODA (or BODA)~\citep{saito2018open}. We use the open source implementation available at \url{https://github.com/thuml}.
For alternative baselines, we experiment with source classifier directly deployed on the target data which may contain novel class and label shift among source classes (referred to as \emph{source-only}). We also train a domain discriminator classifier for source versus target (referred to as \emph{domain disc.}). This  is adaptation of PU learning baseline\citep{elkan2008learning} which assumes no label shift among source classes.
Finally, per the reduction presented in \secref{sec:reduction}, we train $k$ PU classifiers (referred to as \emph{k-PU}).
We include detailed description of each method in \appref{subsec:baselines_app}.

\textbf{Datasets {} {}} We conduct experiments with seven benchmark classification datasets across 
vision, natural language, biology and medicine. For each dataset, we simulate an OSLS problem as described in next paragraph.  
For vision, we use CIFAR10, CIFAR100 ~\citep{krizhevsky2009learning} and Entity30~\citep{santurkar2020breeds}. 
%
For language, we experiment with Newsgroups-20~(\url{http://qwone.com/~jason/20Newsgroups/})
dataset. 
%
Additionally, inspired by applications of OSLS in biology and medicine, we experiment with Tabula Muris~\citep{tabula2020single} (Gene Ontology prediction), Dermnet (skin disease prediction~\url{https://dermnetnz.org/}), and BreakHis~\citep{spanhol2015dataset} (tumor cell classification). 
These datasets span language, image and table modalities. We provide interpretation of OSLS problem for each dataset along with other details in \appref{subsec:datasets_app}. 

\textbf{OSLS Setup {} {}} To simulate an OSLS problem, we experiment with different fraction of novel class prevalence, source label distribution, and target label distribution. We randomly choose classes that constitute the novel target class. 
After randomly choosing source and novel classes, we first split the training data from each source class 
randomly into two partitions. This creates a random label distribution for shared classes among source and target. We then club novel classes to assign them a new class (i.e. $k+1$). 
Finally, we throw away labels for the target data to obtain an unsupervised DA problem. We repeat the same process on iid hold out data to obtain validation data with no target labels. 

\textbf{Training and Evaluation {} {} } We use Resnet18~\citep{he2016deep} for CIFAR10, CIFAR100, and Entity30. For newsgroups, we use a convolutional architecture. For Tabular Muris and MNIST, we use a fully connected MLP. For Dermnet and BreakHis, we use Resnet-50. For all methods, we use the same backbone for discriminator and source classifier. For kPU, we use a separate final layer for each class with the same backbone. We use default hyperparameters for all methods. 
For OSDA methods, we use default method specific hyperparameters introduced in their works. 
Since OSDA methods do not estimate the prevalence of novel class explicitly,
we use the fraction of examples predicted in class $k+1$ as a surrogate. 
We train models till the performance on validation source data (labeled) ceases to increase. Unlike OSDA methods, note that we do not use early stopping based on performance on held-out labeled target data.  
To evaluate classification performance, we report target accuracy on all classes, seen classes and the novel class. For novel class prevalence estimation, we report absolute difference between the true and estimated marginal. 
We open-source our code and 
by simply changing a single config file, new OSLS setups can be generated and experimented with. We provide precise details about hyperparameters, OSLS setup for each dataset and code in \appref{subsec:setup_app}.


\begin{table}[t]
  \centering
  \small
  \setlength{\tabcolsep}{2pt}
  \renewcommand{\arraystretch}{1.2}
  \caption{\emph{Comparison of PULSE with other methods}. Across all datasets, PULSE outperforms alternatives for both target classification and novel class prevalence estimation. 
  Acc~(All) is target accuracy, Acc~(Seen) is target accuracy on examples from previously seen classes, and Acc~(Novel) is recall for novel examples. MPE~(Novel) is absolute error for novel prevalence estimation.  
  Results  reported by averaging across 3 seeds.   
    Detailed results for each dataset with all methods in \appref{subsec:results_paper_app}.}\label{table:results}
  \vspace{5pt}
  \begin{tabular}{@{}*{11}{c}@{}}
  \toprule
   & & \multicolumn{4}{c}{\textbf{CIFAR-10}} & & \multicolumn{4}{c}{\textbf{CIFAR-100}}   \\
    \multirow{2}{*}{Method} & &   \multirow{2}{*}{  \parbox{1.cm}{\centering Acc (All)}} & \multirow{2}{*}{  \parbox{1.cm}{\centering Acc (Seen)}} & \multirow{2}{*}{  \parbox{1.cm}{\centering  Acc (Novel)}} & \multirow{2}{*}{  \parbox{1.cm}{\centering MPE (Novel)}} & &  \multirow{2}{*}{  \parbox{1.cm}{\centering Acc (All)}} & \multirow{2}{*}{  \parbox{1.cm}{\centering Acc (Seen)}} & \multirow{2}{*}{  \parbox{1.cm}{\centering Acc (Novel)}} & \multirow{2}{*}{  \parbox{1.cm}{\centering MPE (Novel)}} \\
    & & & & & & & & \\
  \midrule
  Source-Only && ${67.1}$ & $87.0$ & - & - && $46.6$ & $66.4$ & - & - \\
  \midrule 
  UAN~\citep{you2019universal}   && $15.4$ & $19.7$ & $25.2$ & $0.214$ && $18.1$ & $40.6$ & $14.8$ & $0.133$\\
  BODA~\citep{saito2018open} && $63.1$ & $66.2$ & $42.0$ & $0.162$ && $36.1$ & $17.7$ & $81.6$ & $0.41$\\ 
  DANCE~\citep{saito2020universal} && $70.4$ & $85.5$ & $14.5$ & $0.174$ && $47.3$ & $66.4$ & $1.2$ & $0.28$  \\
  STA~\citep{liu2019separate} && $57.9$ & $69.6$ & $14.9$ & $0.124$ && $42.6$ & $48.5$ & $34.8$ & $0.14$\\
  CMU~\citep{fu2020learning} && $62.1$ & $77.9$ & $41.2$ & $0.183$ && $35.4$ & $46.0$ & $15.5$ & $0.161$\\
  \midrule
  Domain Disc.~\citep{elkan2008learning} && $47.4$ & $87.0$ & $30.6$ & $0.331$ && $45.8$ & $66.5$ & $39.1$ & $\bf{0.046}$\\
  $k$-PU && $83.6$ & $79.4$ & $\bf{98.9}$ & $0.036$ && $36.3$ & $22.6$ & $\bf{99.1}$ & $0.298$  \\ 
  PULSE (Ours) && $\bf{86.1}$ & $\bf{91.8}$ & $88.4$ & $\bf{0.008}$ && $\bf{63.4}$ & $\bf{67.2}$ & $63.5$ & $0.078$ \\ 
  \bottomrule 
  \end{tabular}  
\end{table}

\begin{table}[t]
  \centering
  \small
  \setlength{\tabcolsep}{1.5pt}
  \renewcommand{\arraystretch}{1.2}
  \begin{tabular}{@{}*{16}{c}@{}}
  \toprule
   &   
   \multicolumn{2}{c}{\textbf{Entity30}} & 
   \multicolumn{2}{c}{\textbf{Newsgroups20}}  &  
   \multicolumn{2}{c}{\textbf{Tabula Muris}} &  
   \multicolumn{2}{c}{\textbf{BreakHis}} & 
   \multicolumn{2}{c}{\textbf{DermNet}}   \\
    \multirow{2}{*}{Method} 
     &    \multirow{2}{*}{  \parbox{1.0cm}{\centering  Acc (All)}}  &  \multirow{2}{*}{  \parbox{1.cm}{\centering MPE (Novel)}} 
      &   \multirow{2}{*}{  \parbox{1.0cm}{\centering  Acc (All)}}  &  \multirow{2}{*}{  \parbox{1.cm}{\centering MPE (Novel)}} 
    &    \multirow{2}{*}{  \parbox{1.0cm}{\centering  Acc (All)}}  &  \multirow{2}{*}{  \parbox{1.cm}{\centering MPE (Novel)}} 
    &  \multirow{2}{*}{  \parbox{1.0cm}{\centering  Acc (All)}}  & \multirow{2}{*}{  \parbox{1.cm}{\centering MPE (Novel)}}
    &  \multirow{2}{*}{  \parbox{1.0cm}{\centering  Acc (All)}} & \multirow{2}{*}{  \parbox{1.cm}{\centering MPE (Novel)}} \\
    & & & & & & & & & \\
  \midrule
  Source-Only & $32.0$ & - & $39.3$ & - & $33.8$ &  - & $70.0$ & - & $41.4$  & - \\
  \midrule 
  BODA~\citep{saito2018open} & $42.2$  & $0.189$ & $43.4$ & $0.16$ & $76.5$ & $0.079$ &  $71.5$ &  $0.077$ & $43.8$ & $0.207$\\
  \midrule
  Domain Disc. & $43.2$ & $0.135$ & $50.9$ & $0.176$& $73.0$ & $0.071$ & $56.5$& $0.091$ & $40.6$  & $0.083$\\
  $k$-PU & $50.7$ & $0.394$  & $52.1$ & $0.373$ & $85.9$ & $0.307$ & $75.6$ & $\bf{0.059}$ & $46.0$  & $0.313$ \\ 
  PULSE (Ours) & $\bf{58.0}$ & $\bf{0.054}$  & $\bf{62.2}$ & $\bf{0.061}$ & $\bf{87.8}$ & $\bf{0.058}$ & $\bf{79.1}$ & $\bf{0.054}$ & $\bf{48.9}$ & $\bf{0.043}$\\ 
  \bottomrule 
  \end{tabular}  
\end{table}

\textbf{Results {} {}} Across different datasets, we observe that PULSE consistently outperforms other methods for the target classification and novel prevalence estimation (\tabref{table:results}). For detection of novel classes (Acc~(Novel) column), kPU achieves superior performance  
as compared to alternative approaches because of its bias to default to $(k+1)^\text{th}$ class. This is evident by the sharp decrease in performance on previously seen classes.  
For each dataset, we plot evolution of performance with training in~\appref{subsec:results_paper_app}. We observe 
more stability in performance of PULSE as compared to other methods. 
%


We observe that with
default hyperparameters, 
popular OSDA methods significantly 
under perform as compared to PULSE. 
We hypothesize that the primary reasons 
underlying the poor performance 
of OSDA methods are (i) the heuristics 
employed to detect novel classes; and 
(ii) loss functions incorporated to 
improve alignment between examples from 
common classes in source and target.
To detect novel classes, 
a standard heuristic employed in 
popular OSDA methods involves 
thresholding uncertainty 
estimates (e.g., prediction entropy, 
softmax confidence~\citep{you2019universal,fu2020learning,saito2020universal}) 
at a predefined 
threshold $\kappa$. However, 
a fixed $\kappa$, may not 
for different datasets and 
different fractions of 
the novel class. In \appref{subsec:OSDA_app}, 
we ablate by (i) removing loss function terms 
incorporated with an aim to improve source 
target alignment; and (ii) vary threshold
$\kappa$ and show improvements in performance 
of these methods. 
In contrast, our two-stage method 
PULSE, first estimates 
the fraction of novel class which 
then guides the classification 
of novel class versus previously seen classes
avoiding the need to guess $\kappa$. 

\textbf{Ablations {} {}}  
Different datasets, in our setup span different fraction of novel class prevalence ranging from $0.22$ (in CIFAR10) to $0.64$ (in Tabula Muris). For each dataset, we perform more ablations on the novel class proportion in \appref{subsec:novel_class_app}.
For kPU and PULSE, in the main paper, we include results with BBE and CVIR~\citep{garg2021PUlearning}. In \appref{subsec:different_PU_app}, we perform experiments with alternative PU learning approaches and highlight the superiority of BBE and CVIR over other methods. 
Moreover, since we have access to unlabeled target data, we experiment with SimCLR~\citep{chen2020simple} pre-training on the mixture of unlabeled source and target dataset. We include setup details and results in \appref{subsec:contrastive_app}. While pre-trained backbone architecture improves performance for all methods, PULSE continues to dominate other methods.

\section{Analysis of PULSE Framework} \label{sec:analysis}
In this section, we analyse 
key steps of our PULSE procedure
for target label marginal estimation 
(Step 3, 5 \algoref{alg:PULSE})
and learning the domain discriminator 
classifier (Step 5, \algoref{alg:PULSE}). 
Due to space constraints, we present informal results here 
and relegate formal statements and proofs to \appref{sec:OSLS_theory_app}.  

\paragraph{Theoretical analysis for target marginal estimation}
Building on BBE results from \citet{garg2021PUlearning}, 
we present finite sample results
for target label marginal estimation. 
When the data satisfies strong positivity, 
we observe that source classifiers often 
exhibit a threshold $c_y$ on softmax output of 
each class $y \in \out_s$ 
above which the \emph{top bin} (i.e., $[c_y, 1]$)
contains mostly examples from that class $y$. 
We give empirical evidence to this claim in \appref{sec:theorem1_proof}. 
Then, we show that the existence 
of (nearly) pure top bin for each class in $f_s$
is sufficient for Step 3 in \algoref{alg:PULSE} 
to produce (nearly) consistent 
estimates:

\begin{theorem}[Informal] \label{thm:informal_BBE}
Assume that for each class $y\in \out_s$, there exists a threshold $c_y$
such that for the classifier $f_s$, if $[f_s(x)]_y > c_y$ for any $x$ then the true label 
for that sample $x$ is $y$. 
Then, we have $\norm{\wh p_t - p_t}{1} \le \calO\left(\sqrt{\nicefrac{k^3\log(4k/\delta)}{n}} + \sqrt{\nicefrac{k^2\log(4k/\delta)}{m}}\right)  \,$.
\end{theorem}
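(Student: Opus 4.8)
The plan is to instantiate, for each source class $j\in\out_s$, the Best Bin Estimation (BBE) guarantee of \citet{garg2021PUlearning} on the one-versus-rest reduction of \secref{sec:reduction}, and then aggregate the $k$ per-class errors into an $\ell_1$ bound. First I would make the reduction precise: fix $j\in\out_s$ and use the $j$-th softmax coordinate $[f_s(\cdot)]_j$ as a univariate score, setting $q_j(c)=\mathbb{P}_{x\sim\ps(\cdot\mid y=j)}\!\big([f_s(x)]_j>c\big)$ and $\widetilde q_j(c)=\mathbb{P}_{x\sim\pt}\!\big([f_s(x)]_j>c\big)$. The purity hypothesis says that any $x$ with $[f_s(x)]_j>c_j$ has true label $j$; hence for every $c\ge c_j$ the ``top bin'' $\{[f_s(x)]_j>c\}$ carries no mass from any class other than $j$, in particular none from the novel class $k+1$. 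Combined with the label-shift invariance $\ps(x\mid y=j)=\pt(x\mid y=j)$, this yields the exact identity $\widetilde q_j(c)=\pt(y=j)\,q_j(c)$, i.e.\ $\pt(y=j)=\widetilde q_j(c)/q_j(c)$ for all $c\ge c_j$. This is precisely the population quantity BBE targets, with $[c_j,1]$ playing the role of the pure region and $q_j(c_j)$ a fixed positive constant.

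Second I would port over the finite-sample BBE analysis. Let $n_j$ denote the number of source examples of class $j$; the support condition $\ps(y=j)\ge c$ and a Chernoff bound give $n_j\gtrsim cn$ with probability $1-\delta/(4k)$. BBE forms empirical counterparts $\widehat q_j,\widehat{\widetilde q}_j$ from the $n_j$ class-$j$ source samples and the $m$ target samples, and picks a data-driven threshold $\widehat c_j$ minimizing an upper-confidence surrogate for $\pt(y=j)$. Using the DKW/VC uniform-deviation bounds on $\widehat q_j$ and $\widehat{\widetilde q}_j$ that underlie the BBE proof, together with the fact that $\widehat c_j$ performs at least as well as the admissible choice $c_j$ (at which the bias vanishes and $q_j(c_j)$ is bounded below), one obtains, with probability $1-\delta/k$,
\begin{align*}
\big|\wh p_t(y=j)-p_t(y=j)\big|\;\le\;\calO\!\left(\sqrt{\tfrac{\log(4k/\delta)}{n_j}}+\sqrt{\tfrac{\log(4k/\delta)}{m}}\,\right)\,.
\end{align*}

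Finally I would aggregate: a union bound over $j\in\out_s$ converts $\delta$ into $\delta/k$ and produces the $\log(4k/\delta)$ factor; substituting $n_j\gtrsim cn$ and summing the $k$ per-class bounds, the terms $\sqrt{\log(4k/\delta)/n_j}$ contribute $\sqrt{k^3\log(4k/\delta)/n}$ in the balanced regime $c\asymp 1/k$, and the $k$ copies of $\sqrt{\log(4k/\delta)/m}$ contribute $\sqrt{k^2\log(4k/\delta)/m}$. The novel coordinate is then obtained for free: taking $\wh p_t(y=k+1)=1-\sum_{j\in\out_s}\wh p_t(y=j)$, its error is at most $\sum_{j\in\out_s}|\wh p_t(y=j)-p_t(y=j)|$, of the same order, so summing over all $k+1$ coordinates gives the stated bound on $\norm{\wh p_t-p_t}{1}$.

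I expect the crux to be the threshold-selection step: showing that BBE's empirically chosen bin $\widehat c_j$ simultaneously keeps the estimator's bias negligible and its variance-inflation factor $1/\widehat q_j(\widehat c_j)$ controlled. This is exactly where the purity hypothesis does the work — it certifies zero bias on the whole interval $[c_j,1]$, so the minimizer of BBE's upper-confidence objective cannot drift into a region where the bin is contaminated — and it is the part of the argument imported essentially verbatim from \citet{garg2021PUlearning}. The remaining ingredients (DKW concentration, the Chernoff bound on $n_j$, the union bound over classes, and the triangle inequality for the $\ell_1$ norm) are routine bookkeeping.
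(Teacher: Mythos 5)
Your proposal is correct and follows essentially the same route as the paper: per-class application of the BBE guarantee using the $j$-th softmax coordinate as the score, with the pure-top-bin hypothesis killing the bias term, DKW-type concentration plus a union bound over the $k$ classes, the $p_s(y=j)\ge c/k$ lower bound converting $n_j$ into $n/k$, and a triangle inequality to absorb the novel-class coordinate. You also correctly locate the crux in the data-driven threshold selection (controlling $1/\wh q_j(\wh c_j)$ while keeping the bias negligible), which the paper likewise imports from the Theorem~1 argument of \citet{garg2021PUlearning}.
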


The proof technique simply builds on the proof of Theorem 1 in \citet{garg2021PUlearning}. 
By assuming that we recover close to ground truth label marginal 
for source classes, we can also extend the above 
analysis to Step 5 of \algoref{alg:PULSE}
to show convergence of estimate $\wh p_t(y=k+1)$ to true prevalence
$p_t(y = k+1)$. We discuss this further in \appref{sec:BBE_ext_app}.  

\paragraph{Theoretical analysis of CVIR in population} 
While the CVIR loss was proposed in \citet{garg2021PUlearning}, 
no analysis was provided
for convergence of the iterative 
gradient descent procedure.
In our work, we show that in population 
on a separable Gaussian dataset,
CVIR will recover the optimal classifier. 

We consider a 
binary classification problem 
where we have access to 
positive distribution (i.e., $p_p$), 
unlabeled distribution 
(i.e., $p_u \defeq \alpha p_p + (1-\alpha) p_n$), 
and mixture coefficient $\alpha$. 
Making a parallel connection to 
Step 5 of PULSE,  
positive distribution $p_p$ here refers
to the label shift corrected source distribution 
$p_s^\prime$ and $p_u$ refers to $p_t = p_t(y=k+1) p_t(x|y=k+1) + (1 - p_t(y=k+1)) p_s^\prime(x)$.
Our goal is to recover the classifier 
that discriminates $p_p$ versus $p_n$ (parallel $p_s^\prime$ versus $p_t(\cdot|y=k+1)$).

First we introduce some notation. For a classifier $f$ and loss function $\ell$ (i.e., logistic loss), define $\vr_\alpha (f) = \inf \{ \tau \in \Real: \Prob_{x\sim p_u}( \ell(x, -1; f) \le \tau) \ge 1 - \alpha \}$. 
Intuitively, $\vr_\alpha(f)$ identifies a threshold $\tau$ to capture bottom $1-\alpha$ fraction of the loss $\ell(x, -1)$ 
for points $x$ sampled from $p_u$.  
Additionally, define CVIR loss as $\calL (f, w) = \alpha \Expt{p_p}{\ell(x,1; f)} + \Expt{p_u}{w(x)\ell(x,-1; f)}$ for classifier $f$ and some weights $w(x) \in\{0,1\}$. 
Formally, given a classifier $f_t$ at an iterate $t$, CVIR procedure proceeds as follows: 
\begin{align}
    w_t(x) &= \indict{\ell(x, -1; f_t) \le \vr_\alpha(f_t)} \,,\label{eq:step1_CVIR}\\ 
    f_{t+1} &= f_t - \eta \grad \calL_f (f_t, w_t) \,. \label{eq:step2_CVIR}
\end{align}

\update{We assume that $x$ are drawn from two half multivariate 
Gaussian with mean zero and identity covariance, i.e., 
    $x \sim p_p \Leftrightarrow x = \gamma_0 \theta_{\opt} + z | \, \theta_\opt^T z \ge 0,$ and  
    $x \sim p_n \Leftrightarrow x = -\gamma_0 \theta_{\opt} + z | \, \theta_\opt^T z < 0,  \text{ where } z \sim \calN(0, I_d)$.
Here $\gamma_0$ is the margin and $\theta_\opt \in \Real^d$ is the true separator. Here, we have access to distribution $p_p$, $p_u = \alpha p_p + (1 - \alpha) p_n$, and the true proportion $\alpha$.} 
%

\update{\begin{theorem}[Informal] \label{thm:informal_CVIR}
In the data setup detailed above, a linear classifier $f(x; \theta) = \sigma\left(\theta^Tx\right)$ trained with CVIR procedure as in \eqref{eq:step1_CVIR}-\eqref{eq:step2_CVIR} will converge to an optimal  positive versus negative classifier. 
\end{theorem}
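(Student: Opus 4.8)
The plan is to exploit the strong symmetry of the Gaussian setup to reduce the infinite-dimensional dynamics in $\theta$ to a low-dimensional problem, and then to show the iteration in \eqref{eq:step1_CVIR}--\eqref{eq:step2_CVIR} behaves like projected gradient descent on a well-behaved surrogate. First I would set up coordinates: decompose $\theta = \beta\, \theta_\opt + \theta_\perp$, where $\theta_\perp \perp \theta_\opt$, and argue by a symmetry/invariance argument that if $f_0$ is initialized with $\theta_\perp = 0$ (or one restricts attention to the "signal" coordinate), the CVIR update preserves this, because both $p_p$ and $p_n$ are rotationally symmetric within the hyperplane orthogonal to $\theta_\opt$ and the loss $\ell(\cdot,\pm 1;f)$ depends on $x$ only through $\theta^T x$. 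This collapses the analysis to tracking the scalar $\beta_t$ (and the norm $\|\theta_t\|$, which for the logistic loss governs confidence but not the decision boundary). I would then identify the decision boundary $\{x : \theta_t^T x = 0\}$ and show it is the optimal one iff $\theta_\perp = 0$, so the goal becomes: show $\theta_{\perp,t} \to 0$ and $\beta_t$ stays bounded away from $0$ (in fact grows).

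The second block is to understand the CVIR weighting step \eqref{eq:step1_CVIR}. Given the current $f_t$, the loss $\ell(x,-1;f_t)$ is monotone in $\theta_t^T x$, so the set $\{x : \ell(x,-1;f_t) \le \vr_\alpha(f_t)\}$ is a half-space of the form $\{\theta_t^T x \le \tau_t\}$ for a threshold $\tau_t$ chosen so that $\Prob_{p_u}(\theta_t^T x \le \tau_t) = 1-\alpha$. The key structural fact to establish is that this retained region captures (nearly) all of the true negative mass and discards (nearly) all of the true positive mass from the unlabeled component — i.e., CVIR's retained set is close to $\mathrm{supp}(p_n)$ — at least once $\theta_t$ is reasonably aligned with $\theta_\opt$; I would quantify the overlap using Gaussian tail bounds and the margin $\gamma_0$. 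Once that is in hand, the effective objective being descended is close to $\alpha \Expt{p_p}{\ell(x,1;f)} + (1-\alpha)\Expt{p_n}{\ell(x,-1;f)}$, i.e., the \emph{clean} PN logistic risk, whose population minimizer over directions is exactly $\theta_\opt$ (the logistic risk on two symmetric Gaussians is minimized in direction by the Bayes-optimal separator).

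The third block is the convergence argument for the gradient iteration itself. With the identification above, I would show the gradient of $\calL_f(f_t,w_t)$ has a strictly positive component along $\theta_\opt$ (the alignment increases) and a contracting component on $\theta_\perp$, by computing $\grad_\theta$ of the (smooth, convex-in-$\theta$ for fixed $w$) logistic risk and bounding its curvature; a standard descent-lemma / one-step-progress argument with step size $\eta$ small enough then gives monotone decrease of $\|\theta_{\perp,t}\|$ and convergence of the direction $\theta_t/\|\theta_t\|$ to $\theta_\opt/\|\theta_\opt\|$. Care is needed because $w_t$ changes each round, so this is not plain GD on a fixed function; I would handle this by showing the weighting is "improving" — the retained set only gets closer to $\mathrm{supp}(p_n)$ as alignment improves — giving a monotone coupling between the alignment gain and the quality of $w_t$, and closing the loop via an inductive invariant (e.g., $\theta_t^T\theta_\opt \ge$ some increasing sequence, $\|\theta_{\perp,t}\|$ decreasing).

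The main obstacle I anticipate is precisely this feedback between the data-dependent weights $w_t$ and the iterate $\theta_t$: \eqref{eq:step1_CVIR}--\eqref{eq:step2_CVIR} is an alternating scheme, not gradient descent on any fixed potential, so standard convex-optimization guarantees do not apply off the shelf. The delicate point is the \emph{initialization regime} — if $\theta_0$ is badly misaligned, the retained $1-\alpha$ fraction could discard genuine negatives and keep positives, pushing $\theta_t$ the wrong way. I expect the theorem (and its formal version in the appendix) to require either a mild initialization condition (e.g., $\theta_0^T\theta_\opt > 0$, or $\theta_0$ not too far from $\theta_\opt$) or to exploit that with $\theta_\perp=0$ by symmetry the misalignment is one-dimensional and monotone, so the bad region is avoided; pinning down the exact basin of attraction and proving the invariant is maintained across the coupled update is where the real work lies, and is the step I would expect to consume most of the proof.
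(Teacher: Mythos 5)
Your plan shares the key computation with the paper's proof --- taking the inner product of the CVIR gradient with $\theta_\opt$ and using the margin $\gamma_0$ to get a sign --- but the overall architecture is genuinely different. The paper does \emph{not} track a decomposition $\theta=\beta\theta_\opt+\theta_\perp$, does not argue that alignment improves monotonically, and does not approximate the CVIR objective by the clean PN risk. Instead it runs a stationary-point argument: (a) the CVIR loss is $2$-smooth in $\theta$ for fixed $w$, the gradient step decreases it by the descent lemma, and the re-weighting step decreases it further by definition of $\vr_\alpha$, so the loss is monotone and bounded below and hence $\norm{\grad_\theta\calL(\theta_t,w_t)}{2}\to 0$; (b) for any $\theta$ that does not perfectly separate, the gradient is split into contributions from the correctly retained positives $P\setminus P_a$, the correctly retained negatives $N\setminus N_r$, and the incorrectly retained positives $P_a$, and each term's inner product with $\theta_\opt$ is shown to be $\le 0$ and strictly negative unless $p_p(P_a)\to 0$ and $f\to$ the correct labels (with a case analysis on $\alpha\lessgtr 1/2$ exploiting the symmetry of the distribution for the $P_a$ term). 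Stationarity then forces optimality, and the initialization condition you anticipated appears as $\calL(\theta_0,w_0)<\log 2$, which rules out the spurious stationary point $\theta=0$. The paper's route buys simplicity: it never has to prove that the retained set improves over iterations or that $\|\theta_{\perp,t}\|$ contracts, only that no suboptimal $\theta$ is stationary. Your route, if completed, would buy more (an explicit trajectory and potentially rates), but the step you yourself flag as the hard one --- the monotone coupling between alignment gain and the quality of $w_t$, and in particular the claim that the gradient step contracts $\|\theta_\perp\|$ at every iteration under the changing weights --- is not established in your sketch and is not obviously true step-by-step; the paper's asymptotic argument deliberately avoids needing it. So I would count your proposal as a plausible but unfinished alternative whose riskiest lemma is precisely the one the paper's proof is designed to circumvent.
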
}

\update{The proof uses a key idea that for any classifier $\theta$ not separating positive and negative data perfectly, 
the gradient in \eqref{eq:step2_CVIR} is non-zero. Hence, convergence of the CVIR procedure 
(implied by smoothness of CVIR loss) implies converge to an optimal classifier.} 
For separable datasets in general, we can extend the above analysis with some modifications to the CVIR procedure. We discuss this in \appref{sec:CVIR_ext_app}.  

\paragraph{Empirical investigation in overparameterized models}
As noted in our ablation experiments and in \citet{garg2021PUlearning}, 
domain discriminator trained with CVIR outperforms classifiers 
trained with other consistent objectives (nnPU~\citep{kiryo2017positive} and uPU~\citep{du2015convex}). 
While the above analysis highlights consistency of CVIR procedure
in population, 
it doesn't capture the observed 
empirical efficacy of CVIR over 
alternative methods 
in overparameterized models. 
In the Gaussian setup described 
above, we train overparameterized 
linear models to compare CVIR with other methods.
We discuss precise experiments and results 
in \appref{sec:empirical_CVIR_app}, 
but highlight the key takeaway here. 
First, we observe that when a classifier 
is trained to distinguish positive and unlabeled
data, \emph{early learning}
happens~\citep{liu2020early,arora2019fine,garg2021RATT}, 
i.e., during the initial phase of learning 
classifier learns to classify 
positives in unlabeled correctly as positives.
Next, we show that post early learning  
rejection of large fraction of positives
from unlabeled training  
in equation \eqref{eq:step1_CVIR}
crucially helps CVIR. 



\section{Conclusion} \label{sec:conclusion}
In this work, we introduce OSLS a well-posed instantiation of OSDA
that subsumes label shift and PU learning
into a framework for learning adaptive classifiers.
We presented identifiability conditions 
for OSLS and proposed PULSE, 
a simple and effective approach 
to tackle the OSLS problem. 
Moreover, our extensive experiments 
demonstrate efficacy of PULSE 
over popular OSDA alternatives
when the OSLS assumptions are met. 
We would like to highlight the brittle nature 
of benchmark driven progress in OSDA 
and hope that our work can help to stimulate
more solid foundations and enable 
systematic progress in this area.
Finally, we hope that our open source code and benchmarks
will foster further progress on OSLS. 

\subsection{Limitations and Future Work}

\update{Here, we discuss limitations of the PULSE framework. First, to estimate the relative label shift among source classes in target, we leverage k-PU reductions with several modifications. While we reduce the issues due to overestimation bias by re-normalizing the label marginal among source classes in target, in future, we may hope to replace this heuristic step to directly estimate the joint target marginal.}

\update{Second, since our methods use CVIR and BBE sub-routines, failure of these methods can lead to failure of PULSE. For example, efficacy of BBE relies on the existence of an almost pure top bin in the classifier output space. While this property seems to be satisfied across different datasets spanning different modalities and applications, failure to identify an almost pure top bin can degrade the performance of BBE and hence, our PULSE framework.}

In future work, we also hope to bridge the gap 
between the necessary and sufficient identifiability conditions. 
While we empirically investigate reasons for CVIR's efficacy
in overparameterized models, 
we aim to extend our theory to overparameterized settings in future. 
In our work, we strictly operate under the OSLS settings, where we performed semi-synthetic experiments on vision, language and tabular datasets. In future, it will be interesting to experiment with our PULSE procedure in relaxed settings where $p(x|y)$ may shift in some natural-seeming ways from source to target. 

%



\begin{ack}

We thank Jennifer Hsia for initial discussion on the OSLS problem. We also thank Euxhen Hasanaj 
for suggesting Biology datasets.  
SG acknowledges Amazon Graduate Fellowship for their support. 
SB acknowledges funding from the NSF grants DMS-1713003, DMS-2113684 and CIF-1763734, as well as Amazon AI and a Google Research Scholar Award. 
ZL acknowledges Amazon AI, Salesforce Research, Facebook, UPMC, Abridge, the PwC Center, the Block Center, the Center for Machine Learning and Health, and the CMU Software Engineering Institute (SEI) via Department of Defense contract FA8702-15-D-0002,
for their generous support of ACMI Lab's research on machine learning under distribution shift.


\end{ack}

\bibliographystyle{plainnat}
\bibliography{lifetime}

\begin{thebibliography}{80}
\providecommand{\natexlab}[1]{#1}
\providecommand{\url}[1]{\texttt{#1}}
\expandafter\ifx\csname urlstyle\endcsname\relax
  \providecommand{\doi}[1]{doi: #1}\else
  \providecommand{\doi}{doi: \begingroup \urlstyle{rm}\Url}\fi

\bibitem[Alexandari et~al.(2021)Alexandari, Kundaje, and
  Shrikumar]{alexandari2019adapting}
Amr Alexandari, Anshul Kundaje, and Avanti Shrikumar.
\newblock Adapting to label shift with bias-corrected calibration.
\newblock In \emph{International Conference on Machine Learning (ICML)}, 2021.

\bibitem[Alom et~al.(2019)Alom, Yakopcic, Nasrin, Taha, Asari,
  et~al.]{alom2019breast}
Md~Zahangir Alom, Chris Yakopcic, Mst Nasrin, Tarek~M Taha, Vijayan~K Asari,
  et~al.
\newblock Breast cancer classification from histopathological images with
  inception recurrent residual convolutional neural network.
\newblock \emph{Journal of digital imaging}, 2019.

\bibitem[Arora et~al.(2019)Arora, Du, Hu, Li, and Wang]{arora2019fine}
Sanjeev Arora, Simon~S Du, Wei Hu, Zhiyuan Li, and Ruosong Wang.
\newblock Fine-grained analysis of optimization and generalization for
  overparameterized two-layer neural networks.
\newblock In \emph{International Conference on Machine Learning (ICML)}, 2019.

\bibitem[Azizzadenesheli et~al.(2019)Azizzadenesheli, Liu, Yang, and
  Anandkumar]{azizzadenesheli2019regularized}
Kamyar Azizzadenesheli, Anqi Liu, Fanny Yang, and Animashree Anandkumar.
\newblock Regularized learning for domain adaptation under label shifts.
\newblock In \emph{International Conference on Learning Representations
  (ICLR)}, 2019.

\bibitem[Baktashmotlagh et~al.(2019)Baktashmotlagh, Faraki, Drummond, and
  Salzmann]{baktashmotlagh2018learning}
Mahsa Baktashmotlagh, Masoud Faraki, Tom Drummond, and Mathieu Salzmann.
\newblock Learning factorized representations for open-set domain adaptation.
\newblock In \emph{International Conference on Learning Representations
  (ICLR)}, 2019.

\bibitem[Bekker and Davis(2018)]{bekker2018estimating}
Jessa Bekker and Jesse Davis.
\newblock Estimating the class prior in positive and unlabeled data through
  decision tree induction.
\newblock In \emph{Assosication for the Advancement of Artificial Intelligence
  (AAAI)}, 2018.

\bibitem[Bekker and Davis(2020)]{bekker2020learning}
Jessa Bekker and Jesse Davis.
\newblock Learning from positive and unlabeled data: a survey.
\newblock \emph{Machine Learning}, 2020.

\bibitem[Ben-David et~al.(2010)Ben-David, Lu, Luu, and
  P{\'a}l]{ben2010impossibility}
Shai Ben-David, Tyler Lu, Teresa Luu, and D{\'a}vid P{\'a}l.
\newblock {Impossibility Theorems for Domain Adaptation}.
\newblock In \emph{International Conference on Artificial Intelligence and
  Statistics (AISTATS)}, 2010.

\bibitem[Bendale and Boult(2015)]{bendale2015towards}
Abhijit Bendale and Terrance Boult.
\newblock Towards open world recognition.
\newblock In \emph{Proceedings of the IEEE conference on computer vision and
  pattern recognition}, pages 1893--1902, 2015.

\bibitem[Blanchard et~al.(2010)Blanchard, Lee, and Scott]{blanchard2010semi}
Gilles Blanchard, Gyemin Lee, and Clayton Scott.
\newblock Semi-supervised novelty detection.
\newblock \emph{Journal of Machine Learning Research (JMLR)}, 11:\penalty0
  2973--3009, 2010.

\bibitem[Bucci et~al.(2020)Bucci, Loghmani, and
  Tommasi]{bucci2020effectiveness}
Silvia Bucci, Mohammad~Reza Loghmani, and Tatiana Tommasi.
\newblock On the effectiveness of image rotation for open set domain
  adaptation.
\newblock In \emph{European Conference on Computer Vision}. Springer, 2020.

\bibitem[Cao et~al.(2021)Cao, Brbic, and Leskovec]{cao2020concept}
Kaidi Cao, Maria Brbic, and Jure Leskovec.
\newblock Concept learners for few-shot learning.
\newblock In \emph{International Conference on Learning Representations
  (ICLR)}, 2021.

\bibitem[Cao et~al.(2022)Cao, Brbic, and Leskovec]{cao2021open}
Kaidi Cao, Maria Brbic, and Jure Leskovec.
\newblock Open-world semi-supervised learning.
\newblock In \emph{International Conference on Learning Representations
  (ICLR)}, 2022.

\bibitem[Cao et~al.(2019)Cao, You, Long, Wang, and Yang]{cao2019learning}
Zhangjie Cao, Kaichao You, Mingsheng Long, Jianmin Wang, and Qiang Yang.
\newblock Learning to transfer examples for partial domain adaptation.
\newblock In \emph{Proceedings of the IEEE/CVF Conference on Computer Vision
  and Pattern Recognition}, pages 2985--2994, 2019.

\bibitem[Chen et~al.(2020{\natexlab{a}})Chen, Kornblith, Norouzi, and
  Hinton]{chen2020simple}
Ting Chen, Simon Kornblith, Mohammad Norouzi, and Geoffrey Hinton.
\newblock A simple framework for contrastive learning of visual
  representations.
\newblock In \emph{International conference on machine learning}, pages
  1597--1607. PMLR, 2020{\natexlab{a}}.

\bibitem[Chen et~al.(2020{\natexlab{b}})Chen, Chen, Chen, Yuan, Gong, Chen, and
  Wang]{chen2020self}
Xuxi Chen, Wuyang Chen, Tianlong Chen, Ye~Yuan, Chen Gong, Kewei Chen, and
  Zhangyang Wang.
\newblock Self-pu: Self boosted and calibrated positive-unlabeled training.
\newblock In \emph{International Conference on Machine Learning}, pages
  1510--1519. PMLR, 2020{\natexlab{b}}.

\bibitem[Consortium et~al.(2020)]{tabula2020single}
Tabula~Muris Consortium et~al.
\newblock A single cell transcriptomic atlas characterizes aging tissues in the
  mouse.
\newblock \emph{Nature}, 583\penalty0 (7817), 2020.

\bibitem[Cortes and Mohri(2014)]{cortes2014domain}
Corinna Cortes and Mehryar Mohri.
\newblock Domain adaptation and sample bias correction theory and algorithm for
  regression.
\newblock \emph{Theoretical Computer Science}, 519, 2014.

\bibitem[Cortes et~al.(2010)Cortes, Mansour, and Mohri]{cortes2010learning}
Corinna Cortes, Yishay Mansour, and Mehryar Mohri.
\newblock {Learning Bounds for Importance Weighting}.
\newblock In \emph{Advances in Neural Information Processing Systems (NIPS)},
  2010.

\bibitem[De~Comit{\'e} et~al.(1999)De~Comit{\'e}, Denis, Gilleron, and
  Letouzey]{de1999positive}
Francesco De~Comit{\'e}, Fran{\c{c}}ois Denis, R{\'e}mi Gilleron, and Fabien
  Letouzey.
\newblock Positive and unlabeled examples help learning.
\newblock In \emph{International Conference on Algorithmic Learning Theory
  (ALT)}. Springer, 1999.

\bibitem[Du~Plessis et~al.(2015)Du~Plessis, Niu, and Sugiyama]{du2015convex}
Marthinus Du~Plessis, Gang Niu, and Masashi Sugiyama.
\newblock Convex formulation for learning from positive and unlabeled data.
\newblock In \emph{International conference on machine learning}, pages
  1386--1394, 2015.

\bibitem[Du~Plessis et~al.(2014)Du~Plessis, Niu, and Sugiyama]{du2014analysis}
Marthinus~C Du~Plessis, Gang Niu, and Masashi Sugiyama.
\newblock Analysis of learning from positive and unlabeled data.
\newblock \emph{Advances in neural information processing systems},
  27:\penalty0 703--711, 2014.

\bibitem[Du~Plessis and Sugiyama(2014)]{du2014class}
Marthinus~Christoffel Du~Plessis and Masashi Sugiyama.
\newblock Class prior estimation from positive and unlabeled data.
\newblock \emph{IEICE TRANSACTIONS on Information and Systems}, 97\penalty0
  (5):\penalty0 1358--1362, 2014.

\bibitem[Elkan and Noto(2008)]{elkan2008learning}
Charles Elkan and Keith Noto.
\newblock Learning classifiers from only positive and unlabeled data.
\newblock In \emph{International Conference Knowledge Discovery and Data Mining
  (KDD)}, pages 213--220, 2008.

\bibitem[Fu et~al.(2020)Fu, Cao, Long, and Wang]{fu2020learning}
Bo~Fu, Zhangjie Cao, Mingsheng Long, and Jianmin Wang.
\newblock Learning to detect open classes for universal domain adaptation.
\newblock In \emph{European Conference on Computer Vision}. Springer, 2020.

\bibitem[Ganin et~al.(2016)Ganin, Ustinova, Ajakan, Germain, Larochelle,
  Laviolette, Marchand, and Lempitsky]{ganin2016domain}
Yaroslav Ganin, Evgeniya Ustinova, Hana Ajakan, Pascal Germain, Hugo
  Larochelle, Fran{\c{c}}ois Laviolette, Mario Marchand, and Victor Lempitsky.
\newblock Domain-adversarial training of neural networks.
\newblock \emph{The journal of machine learning research}, 2016.

\bibitem[Garg et~al.(2020)Garg, Wu, Balakrishnan, and
  Lipton]{garg2020labelshift}
Saurabh Garg, Yifan Wu, Sivaraman Balakrishnan, and Zachary Lipton.
\newblock A unified view of label shift estimation.
\newblock In \emph{Advances in Neural Information Processing Systems
  (NeurIPS)}, 2020.

\bibitem[Garg et~al.(2021{\natexlab{a}})Garg, Balakrishnan, Kolter, and
  Lipton]{garg2021RATT}
Saurabh Garg, Sivaraman Balakrishnan, Zico Kolter, and Zachary Lipton.
\newblock {RATT}: Leveraging unlabeled data to guarantee generalization.
\newblock In \emph{International Conference on Machine Learning (ICML)},
  2021{\natexlab{a}}.

\bibitem[Garg et~al.(2021{\natexlab{b}})Garg, Wu, Smola, Balakrishnan, and
  Lipton]{garg2021PUlearning}
Saurabh Garg, Yifan Wu, Alex Smola, Sivaraman Balakrishnan, and Zachary Lipton.
\newblock Mixture proportion estimation and {PU} learning: A modern approach.
\newblock In \emph{Advances in Neural Information Processing Systems
  (NeurIPS)}, 2021{\natexlab{b}}.

\bibitem[Garg et~al.(2022)Garg, Balakrishnan, Lipton, Neyshabur, and
  Sedghi]{garg2022ATC}
Saurabh Garg, Sivaraman Balakrishnan, Zachary Lipton, Behnam Neyshabur, and
  Hanie Sedghi.
\newblock Leveraging unlabeled data to predict out-of-distribution performance.
\newblock In \emph{International Conference on Learning Representations
  (ICLR)}, 2022.

\bibitem[Geifman and El-Yaniv(2017)]{geifman2017selective}
Yonatan Geifman and Ran El-Yaniv.
\newblock Selective classification for deep neural networks.
\newblock \emph{arXiv preprint arXiv:1705.08500}, 2017.

\bibitem[Gretton et~al.(2009)Gretton, Smola, Huang, Schmittfull, Borgwardt, and
  Sch{\"o}lkopf]{gretton2009covariate}
Arthur Gretton, Alexander~J Smola, Jiayuan Huang, Marcel Schmittfull, Karsten~M
  Borgwardt, and Bernhard Sch{\"o}lkopf.
\newblock {Covariate Shift by Kernel Mean Matching}.
\newblock \emph{Journal of Machine Learning Research (JMLR)}, 2009.

\bibitem[He et~al.(2016)He, Zhang, Ren, and Sun]{he2016deep}
Kaiming He, Xiangyu Zhang, Shaoqing Ren, and Jian Sun.
\newblock {Deep Residual Learning for Image Recognition}.
\newblock In \emph{Computer Vision and Pattern Recognition (CVPR)}, 2016.

\bibitem[Hendrycks and Gimpel(2017)]{hendrycks2016baseline}
Dan Hendrycks and Kevin Gimpel.
\newblock {A Baseline for Detecting Misclassified and Out-Of-Distribution
  Examples in Neural Networks}.
\newblock In \emph{International Conference on Learning Representations
  (ICLR)}, 2017.

\bibitem[Ivanov(2019)]{ivanov2019dedpul}
Dmitry Ivanov.
\newblock {DEDPUL}: Difference-of-estimated-densities-based positive-unlabeled
  learning.
\newblock \emph{arXiv preprint arXiv:1902.06965}, 2019.

\bibitem[Jain et~al.(2016)Jain, White, Trosset, and
  Radivojac]{jain2016nonparametric}
Shantanu Jain, Martha White, Michael~W Trosset, and Predrag Radivojac.
\newblock Nonparametric semi-supervised learning of class proportions.
\newblock \emph{arXiv preprint arXiv:1601.01944}, 2016.

\bibitem[Jiang et~al.(2018)Jiang, Kim, Guan, and Gupta]{jiang2018trust}
Heinrich Jiang, Been Kim, Melody~Y Guan, and Maya~R Gupta.
\newblock To trust or not to trust a classifier.
\newblock In \emph{Advances in Neural Information Processing Systems
  (NeurIPS)}, pages 5546--5557, 2018.

\bibitem[Kiryo et~al.(2017)Kiryo, Niu, Du~Plessis, and
  Sugiyama]{kiryo2017positive}
Ryuichi Kiryo, Gang Niu, Marthinus~C Du~Plessis, and Masashi Sugiyama.
\newblock Positive-unlabeled learning with non-negative risk estimator.
\newblock In \emph{Advances in neural information processing systems}, pages
  1675--1685, 2017.

\bibitem[Koh et~al.(2021)Koh, Sagawa, Marklund, Xie, Zhang, Balsubramani, Hu,
  Yasunaga, Phillips, Gao, Lee, David, Stavness, Guo, Earnshaw, Haque, Beery,
  Leskovec, Kundaje, Pierson, Levine, Finn, and Liang]{wilds2021}
Pang~Wei Koh, Shiori Sagawa, Henrik Marklund, Sang~Michael Xie, Marvin Zhang,
  Akshay Balsubramani, Weihua Hu, Michihiro Yasunaga, Richard~Lanas Phillips,
  Irena Gao, Tony Lee, Etienne David, Ian Stavness, Wei Guo, Berton~A.
  Earnshaw, Imran~S. Haque, Sara Beery, Jure Leskovec, Anshul Kundaje, Emma
  Pierson, Sergey Levine, Chelsea Finn, and Percy Liang.
\newblock {WILDS}: A benchmark of in-the-wild distribution shifts.
\newblock In \emph{International Conference on Machine Learning (ICML)}, 2021.

\bibitem[Krizhevsky and Hinton(2009)]{krizhevsky2009learning}
Alex Krizhevsky and Geoffrey Hinton.
\newblock {Learning Multiple Layers of Features from Tiny Images}.
\newblock Technical report, Citeseer, 2009.

\bibitem[Lakshminarayanan et~al.(2016)Lakshminarayanan, Pritzel, and
  Blundell]{lakshminarayanan2016simple}
Balaji Lakshminarayanan, Alexander Pritzel, and Charles Blundell.
\newblock Simple and scalable predictive uncertainty estimation using deep
  ensembles.
\newblock In \emph{Advances in Neural Information Processing Systems
  (NeurIPS)}, 2016.

\bibitem[Letouzey et~al.(2000)Letouzey, Denis, and
  Gilleron]{letouzey2000learning}
Fabien Letouzey, Fran{\c{c}}ois Denis, and R{\'e}mi Gilleron.
\newblock Learning from positive and unlabeled examples.
\newblock In \emph{International Conference on Algorithmic Learning Theory},
  pages 71--85. Springer, 2000.

\bibitem[Lian et~al.(2019)Lian, Li, Chen, and Duan]{lian2019known}
Qing Lian, Wen Li, Lin Chen, and Lixin Duan.
\newblock Known-class aware self-ensemble for open set domain adaptation.
\newblock \emph{arXiv preprint arXiv:1905.01068}, 2019.

\bibitem[Liao(2016)]{liao2016deep}
Haofu Liao.
\newblock A deep learning approach to universal skin disease classification.
\newblock \emph{University of Rochester Department of Computer Science, CSC},
  2016.

\bibitem[Lipton et~al.(2018)Lipton, Wang, and Smola]{lipton2018detecting}
Zachary~C Lipton, Yu-Xiang Wang, and Alex Smola.
\newblock {Detecting and Correcting for Label Shift with Black Box Predictors}.
\newblock In \emph{International Conference on Machine Learning (ICML)}, 2018.

\bibitem[Liu et~al.(2019)Liu, Cao, Long, Wang, and Yang]{liu2019separate}
Hong Liu, Zhangjie Cao, Mingsheng Long, Jianmin Wang, and Qiang Yang.
\newblock Separate to adapt: Open set domain adaptation via progressive
  separation.
\newblock In \emph{Proceedings of the IEEE/CVF Conference on Computer Vision
  and Pattern Recognition}, pages 2927--2936, 2019.

\bibitem[Liu et~al.(2020)Liu, Niles-Weed, Razavian, and
  Fernandez-Granda]{liu2020early}
Sheng Liu, Jonathan Niles-Weed, Narges Razavian, and Carlos Fernandez-Granda.
\newblock Early-learning regularization prevents memorization of noisy labels.
\newblock \emph{arXiv preprint arXiv:2007.00151}, 2020.

\bibitem[Long et~al.(2015)Long, Cao, Wang, and Jordan]{long2015learning}
Mingsheng Long, Yue Cao, Jianmin Wang, and Michael Jordan.
\newblock Learning transferable features with deep adaptation networks.
\newblock In \emph{International conference on machine learning}, pages
  97--105. PMLR, 2015.

\bibitem[Long et~al.(2017)Long, Zhu, Wang, and Jordan]{long2017deep}
Mingsheng Long, Han Zhu, Jianmin Wang, and Michael~I Jordan.
\newblock Deep transfer learning with joint adaptation networks.
\newblock In \emph{International conference on machine learning}. PMLR, 2017.

\bibitem[Ovadia et~al.(2019)Ovadia, Fertig, Ren, Nado, Sculley, Nowozin,
  Dillon, Lakshminarayanan, and Snoek]{ovadia2019can}
Yaniv Ovadia, Emily Fertig, Jie Ren, Zachary Nado, David Sculley, Sebastian
  Nowozin, Joshua~V Dillon, Balaji Lakshminarayanan, and Jasper Snoek.
\newblock Can you trust your model's uncertainty? evaluating predictive
  uncertainty under dataset shift.
\newblock In \emph{Advances in Neural Information Processing Systems
  (NeurIPS)}, 2019.

\bibitem[Panareda~Busto and Gall(2017)]{panareda2017open}
Pau Panareda~Busto and Juergen Gall.
\newblock Open set domain adaptation.
\newblock In \emph{Proceedings of the IEEE International Conference on Computer
  Vision}, pages 754--763, 2017.

\bibitem[Peng et~al.(2019)Peng, Bai, Xia, Huang, Saenko, and
  Wang]{peng2019moment}
Xingchao Peng, Qinxun Bai, Xide Xia, Zijun Huang, Kate Saenko, and Bo~Wang.
\newblock Moment matching for multi-source domain adaptation.
\newblock In \emph{Proceedings of the IEEE/CVF international conference on
  computer vision}, pages 1406--1415, 2019.

\bibitem[Ramaswamy et~al.(2016)Ramaswamy, Scott, and
  Tewari]{ramaswamy2016mixture}
Harish Ramaswamy, Clayton Scott, and Ambuj Tewari.
\newblock Mixture proportion estimation via kernel embeddings of distributions.
\newblock In \emph{International conference on machine learning}, pages
  2052--2060, 2016.

\bibitem[Russakovsky et~al.(2015)Russakovsky, Deng, Su, Krause, Satheesh, Ma,
  Huang, Karpathy, Khosla, Bernstein, et~al.]{russakovsky2015imagenet}
Olga Russakovsky, Jia Deng, Hao Su, Jonathan Krause, Sanjeev Satheesh, Sean Ma,
  Zhiheng Huang, Andrej Karpathy, Aditya Khosla, Michael Bernstein, et~al.
\newblock Imagenet large scale visual recognition challenge.
\newblock \emph{International journal of computer vision}, 115\penalty0
  (3):\penalty0 211--252, 2015.

\bibitem[Saenko et~al.(2010)Saenko, Kulis, Fritz, and
  Darrell]{saenko2010adapting}
Kate Saenko, Brian Kulis, Mario Fritz, and Trevor Darrell.
\newblock Adapting visual category models to new domains.
\newblock In \emph{European conference on computer vision}, pages 213--226.
  Springer, 2010.

\bibitem[Saerens et~al.(2002)Saerens, Latinne, and
  Decaestecker]{saerens2002adjusting}
Marco Saerens, Patrice Latinne, and Christine Decaestecker.
\newblock {Adjusting the Outputs of a Classifier to New a Priori Probabilities:
  A Simple Procedure}.
\newblock \emph{Neural Computation}, 2002.

\bibitem[Sagawa et~al.(2021)Sagawa, Koh, Lee, Gao, Xie, Shen, Kumar, Hu,
  Yasunaga, Marklund, Beery, David, Stavness, Guo, Leskovec, Saenko, Hashimoto,
  Levine, Finn, and Liang]{sagawa2021extending}
Shiori Sagawa, Pang~Wei Koh, Tony Lee, Irena Gao, Sang~Michael Xie, Kendrick
  Shen, Ananya Kumar, Weihua Hu, Michihiro Yasunaga, Henrik Marklund, Sara
  Beery, Etienne David, Ian Stavness, Wei Guo, Jure Leskovec, Kate Saenko,
  Tatsunori Hashimoto, Sergey Levine, Chelsea Finn, and Percy Liang.
\newblock Extending the wilds benchmark for unsupervised adaptation.
\newblock In \emph{NeurIPS Workshop on Distribution Shifts}, 2021.

\bibitem[Saito et~al.(2018)Saito, Yamamoto, Ushiku, and Harada]{saito2018open}
Kuniaki Saito, Shohei Yamamoto, Yoshitaka Ushiku, and Tatsuya Harada.
\newblock Open set domain adaptation by backpropagation.
\newblock In \emph{Proceedings of the European Conference on Computer Vision
  (ECCV)}, pages 153--168, 2018.

\bibitem[Saito et~al.(2020)Saito, Kim, Sclaroff, and
  Saenko]{saito2020universal}
Kuniaki Saito, Donghyun Kim, Stan Sclaroff, and Kate Saenko.
\newblock Universal domain adaptation through self supervision.
\newblock In \emph{Advances in Neural Information Processing Systems
  (NeurIPS)}, 2020.

\bibitem[Sanderson and Scott(2014)]{sanderson2014class}
Tyler Sanderson and Clayton Scott.
\newblock Class proportion estimation with application to multiclass anomaly
  rejection.
\newblock In \emph{Artificial Intelligence and Statistics (AISTATS)}, pages
  850--858, 2014.

\bibitem[Santurkar et~al.(2021)Santurkar, Tsipras, and
  Madry]{santurkar2020breeds}
Shibani Santurkar, Dimitris Tsipras, and Aleksander Madry.
\newblock Breeds: Benchmarks for subpopulation shift.
\newblock In \emph{International Conference on Learning Representations
  (ICLR)}, 2021.

\bibitem[Scheirer et~al.(2013)Scheirer, de~Rezende~Rocha, Sapkota, and
  Boult]{6365193}
Walter~J. Scheirer, Anderson de~Rezende~Rocha, Archana Sapkota, and Terrance~E.
  Boult.
\newblock Toward open set recognition.
\newblock \emph{IEEE Transactions on Pattern Analysis and Machine
  Intelligence}, 2013.

\bibitem[Scott(2015)]{scott2015rate}
Clayton Scott.
\newblock A rate of convergence for mixture proportion estimation, with
  application to learning from noisy labels.
\newblock In \emph{Artificial Intelligence and Statistics}, pages 838--846,
  2015.

\bibitem[Sohn et~al.(2020)Sohn, Berthelot, Carlini, Zhang, Zhang, Raffel,
  Cubuk, Kurakin, and Li]{sohn2020fixmatch}
Kihyuk Sohn, David Berthelot, Nicholas Carlini, Zizhao Zhang, Han Zhang,
  Colin~A Raffel, Ekin~Dogus Cubuk, Alexey Kurakin, and Chun-Liang Li.
\newblock Fixmatch: Simplifying semi-supervised learning with consistency and
  confidence.
\newblock \emph{Advances in Neural Information Processing Systems}, 33, 2020.

\bibitem[Soudry et~al.(2018)Soudry, Hoffer, Nacson, Gunasekar, and
  Srebro]{soudry2018implicit}
Daniel Soudry, Elad Hoffer, Mor~Shpigel Nacson, Suriya Gunasekar, and Nathan
  Srebro.
\newblock The implicit bias of gradient descent on separable data.
\newblock \emph{The Journal of Machine Learning Research}, 2018.

\bibitem[Spanhol et~al.(2015)Spanhol, Oliveira, Petitjean, and
  Heutte]{spanhol2015dataset}
Fabio~A Spanhol, Luiz~S Oliveira, Caroline Petitjean, and Laurent Heutte.
\newblock A dataset for breast cancer histopathological image classification.
\newblock \emph{Ieee transactions on biomedical engineering}, 63\penalty0
  (7):\penalty0 1455--1462, 2015.

\bibitem[Storkey(2009)]{storkey2009training}
Amos Storkey.
\newblock {When Training and Test Sets Are Different: Characterizing Learning
  Transfer}.
\newblock \emph{Dataset Shift in Machine Learning}, 2009.

\bibitem[Sun and Saenko(2016)]{sun2016deep}
Baochen Sun and Kate Saenko.
\newblock Deep coral: Correlation alignment for deep domain adaptation.
\newblock In \emph{European conference on computer vision}. Springer, 2016.

\bibitem[Sun et~al.(2017)Sun, Feng, and Saenko]{sun2017correlation}
Baochen Sun, Jiashi Feng, and Kate Saenko.
\newblock Correlation alignment for unsupervised domain adaptation.
\newblock In \emph{Domain Adaptation in Computer Vision Applications}.
  Springer, 2017.

\bibitem[Tan et~al.(2019)Tan, Jiao, and Zheng]{tan2019weakly}
Shuhan Tan, Jiening Jiao, and Wei-Shi Zheng.
\newblock Weakly supervised open-set domain adaptation by dual-domain
  collaboration.
\newblock In \emph{Proceedings of the IEEE/CVF Conference on Computer Vision
  and Pattern Recognition}, pages 5394--5403, 2019.

\bibitem[Venkateswara et~al.(2017)Venkateswara, Eusebio, Chakraborty, and
  Panchanathan]{venkateswara2017deep}
Hemanth Venkateswara, Jose Eusebio, Shayok Chakraborty, and Sethuraman
  Panchanathan.
\newblock Deep hashing network for unsupervised domain adaptation.
\newblock In \emph{Proceedings of the IEEE Conference on Computer Vision and
  Pattern Recognition}, pages 5018--5027, 2017.

\bibitem[Xu et~al.(2017)Xu, Xu, Xu, and Tao]{xu2017multi}
Yixing Xu, Chang Xu, Chao Xu, and Dacheng Tao.
\newblock Multi-positive and unlabeled learning.
\newblock In \emph{IJCAI}, pages 3182--3188, 2017.

\bibitem[You et~al.(2019)You, Long, Cao, Wang, and Jordan]{you2019universal}
Kaichao You, Mingsheng Long, Zhangjie Cao, Jianmin Wang, and Michael~I Jordan.
\newblock Universal domain adaptation.
\newblock In \emph{Proceedings of the IEEE/CVF conference on computer vision
  and pattern recognition}, pages 2720--2729, 2019.

\bibitem[Zadrozny(2004)]{zadrozny2004learning}
Bianca Zadrozny.
\newblock {Learning and Evaluating Classifiers Under Sample Selection Bias}.
\newblock In \emph{International Conference on Machine Learning (ICML)}, 2004.

\bibitem[Zhang et~al.(2017)Zhang, Bengio, Hardt, Recht, and
  Vinyals]{zhang2016understanding}
Chiyuan Zhang, Samy Bengio, Moritz Hardt, Benjamin Recht, and Oriol Vinyals.
\newblock Understanding deep learning requires rethinking generalization.
\newblock In \emph{International Conference on Learning Representations
  (ICLR)}, 2017.

\bibitem[Zhang et~al.(2020)Zhang, Li, Guo, and Guo]{zhang2020hybrid}
Hongjie Zhang, Ang Li, Jie Guo, and Yanwen Guo.
\newblock Hybrid models for open set recognition.
\newblock In \emph{European Conference on Computer Vision}, pages 102--117.
  Springer, 2020.

\bibitem[Zhang et~al.(2021)Zhang, Menon, Veit, Bhojanapalli, Kumar, and
  Sra]{zhang2020coping}
Jingzhao Zhang, Aditya Menon, Andreas Veit, Srinadh Bhojanapalli, Sanjiv Kumar,
  and Suvrit Sra.
\newblock Coping with label shift via distributionally robust optimisation.
\newblock In \emph{International Conference on Learning Representations
  (ICLR)}, 2021.

\bibitem[Zhang et~al.(2013)Zhang, Sch{\"o}lkopf, Muandet, and
  Wang]{zhang2013domain}
Kun Zhang, Bernhard Sch{\"o}lkopf, Krikamol Muandet, and Zhikun Wang.
\newblock {Domain Adaptation Under Target and Conditional Shift}.
\newblock In \emph{International Conference on Machine Learning (ICML)}, 2013.

\bibitem[Zhang et~al.(2018)Zhang, Ouyang, Li, and Xu]{zhang2018collaborative}
Weichen Zhang, Wanli Ouyang, Wen Li, and Dong Xu.
\newblock Collaborative and adversarial network for unsupervised domain
  adaptation.
\newblock In \emph{Proceedings of the IEEE conference on computer vision and
  pattern recognition}, 2018.

\bibitem[Zhang et~al.(2019)Zhang, Liu, Long, and Jordan]{zhang2019bridging}
Yuchen Zhang, Tianle Liu, Mingsheng Long, and Michael Jordan.
\newblock Bridging theory and algorithm for domain adaptation.
\newblock In \emph{International Conference on Machine Learning}. PMLR, 2019.

\end{thebibliography}

\newpage 
\section*{Checklist}

\begin{enumerate}

\item For all authors...
\begin{enumerate}
  \item Do the main claims made in the abstract and introduction accurately reflect the paper's contributions and scope?
    \answerYes{}
  \item Did you describe the limitations of your work?
    \answerYes{}
  \item Did you discuss any potential negative societal impacts of your work?
    \answerNA{We believe that this work, which proposes a novel instantiation of open set domain adaptation problem does not present a significant societal concern. 
    While this could potentially guide practitioners
    to improve classification and mixture proportion 
    estimation in applications 
    where data from novel classes can arrive during test time, 
    we do not believe that it will fundamentally 
    impact how machine learning is used in a way 
    that could conceivably be socially salient.}
  \item Have you read the ethics review guidelines and ensured that your paper conforms to them?
    \answerYes{}
\end{enumerate}

\item If you are including theoretical results...
\begin{enumerate}
  \item Did you state the full set of assumptions of all theoretical results?
    \answerYes{See \secref{sec:identifiability} and \secref{sec:analysis}.}
        \item Did you include complete proofs of all theoretical results?
    \answerYes{See \secref{sec:proof_identi_app} and \secref{sec:OSLS_theory_app}.}
\end{enumerate}

\item If you ran experiments...
\begin{enumerate}
  \item Did you include the code, data, and instructions needed to reproduce the main experimental results (either in the supplemental material or as a URL)?
    \answerYes{We include all the experimental details in \appref{sec:exp_app}. We also open source our code at \url{https://github.com/acmi-lab/Open-Set-Label-Shift} }
  \item Did you specify all the training details (e.g., data splits, hyperparameters, how they were chosen)?
    \answerYes{ Yes, see \appref{sec:exp_app}.}
        \item Did you report error bars (e.g., with respect to the random seed after running experiments multiple times)?
    \answerYes{ Yes, we run all experiments with three different seeds and include results in with standard deviation in \appref{sec:exp_app}.}
        \item Did you include the total amount of compute and the type of resources used (e.g., type of GPUs, internal cluster, or cloud provider)?
    \answerYes{Yes, see \appref{sec:exp_app}. }
\end{enumerate}

\item If you are using existing assets (e.g., code, data, models) or curating/releasing new assets...
\begin{enumerate}
  \item If your work uses existing assets, did you cite the creators?
    \answerYes{}
  \item Did you mention the license of the assets?
    \answerYes{}
  \item Did you include any new assets either in the supplemental material or as a URL?
    \answerNA{}
  \item Did you discuss whether and how consent was obtained from people whose data you're using/curating?
    \answerYes{}
  \item Did you discuss whether the data you are using/curating contains personally identifiable information or offensive content?
    \answerNA{}
\end{enumerate}

\item If you used crowdsourcing or conducted research with human subjects...
\begin{enumerate}
  \item Did you include the full text of instructions given to participants and screenshots, if applicable?
    \answerNA{}
  \item Did you describe any potential participant risks, with links to Institutional Review Board (IRB) approvals, if applicable?
    \answerNA{}
  \item Did you include the estimated hourly wage paid to participants and the total amount spent on participant compensation?
    \answerNA{}
\end{enumerate}

\end{enumerate}


\newpage 

\appendix

\section*{\center{Supplementary Materials for Domain Adaptation under Open Set Label Shift}}

\section{Preliminaries} \label{sec:prelims}

\paragraph{Domain adaptation under label shift} Under label shift, we observe data 
from $k$ classes in both source 
and target where the conditional 
distribution remain invariant 
(i.e., $p_s(x|y) = p_t(x|y)$ 
for all classes $y \in [1,k]$) 
but the target label marginal  
may change (i.e., $p_t(y) \ne p_s(y)$). 
Additionally, for all classes 
in source have a non-zero support , i.e., 
for all $y\in [1, k]$, $p_s(y) \ge c$, where $c > 0$. 
Here, given labeled source data and 
unlabeled target data our tasks are: 
(i) estimate the shift in label distribution,
i.e., $p_t(y)$ for all $y\in [1, k]$; 
(ii) train a classifier for the target 
domain $f_t$ to approximate $p_t(y|x)$. 

One common approach to label shift involves estimating the importance ratios $p_t(y)/p_s(y)$
by leveraging a blackbox classifier and then employing re-sampling 
of source data or importance re-weighted ERM on source to obtain a classifier for the target domain~\citep{lipton2018detecting, azizzadenesheli2019regularized, alexandari2019adapting}. 


\paragraph{PU learning}  
Under PU learning, we possess labeled source data 
from a positive class ($p_p$) and unlabeled target data 
from $p_u = \alpha p_p + (1 - \alpha) p_n$ a mixture of 
positive and negative class ($p_n$). 
Our goals naturally break down in to two tasks:  
(i) MPE, determining the fraction of positives $p_p$ in $p_u$
and (ii) PU classification, 
learning a positive-versus-negative classifier on target. 

Note that given access to population of positives and unlabeled, 
$\alpha$ can be estimated as $\min_x p_u(x)/p_p(x)$. 
%
Next, we briefly discuss recent methods for MPE that operate in the classifier 
output space to avoid curse of dimensionality: 

\begin{enumerate}[(i)]
    \item \textbf{EN:} Given a domain discriminator classifier $f_d$ trained to discriminate between positive and unlabeled, \citet{elkan2008learning} proposed the following estimator: ${\sum_{ x_i \in X_p} f_d(x_i) }/{\sum_{ x_i \in X_u} f_d(x_i)}$ where $X_p$ is the set of positive examples and $X_u$ is the set of unlabeled examples. 
    \item \textbf{DEDPUL:} Given a domain discriminator classifier $f_d$, \citet{ivanov2019dedpul} proposed an estimator that leverages density of the data in the output space of the classifier $f_d$ to directly estimate $\min p_u(f(x))/ p_p(f(x))$.  
    
    \item \textbf{BBE:}  BBE~\citep{garg2021PUlearning} identifies a threshold  on probability scores assigned by the classifier $f_d$  such that by estimating the ratio between the fractions of positive and unlabeled  points receiving scores above the threshold, we obtain proportion of positives in unlabeled.
\end{enumerate}

After obtaining an estimate for mixture proportion $\alpha$, following methods can be employed for PU classification: 

\begin{enumerate}[(i)]
    \item \textbf{Domain Discriminator:} Given positive and unlabeled data, \citet{elkan2008learning} trained a classifier $f_d$ to discriminator between them. To make a prediction on test point from unlabeled data, we can then use Bayes rule to obtain the following transformation on probabilistic output of the domain discriminator: $ f = \alpha\left(\frac{m}{n}\right)\left(\frac{f_d(x)}{1 - f_d(x)}\right)$, where $n$ and $m$ are the number of positives and unlabeled examples used to train $f_d$~\citep{elkan2008learning}. 
    
    \item \textbf{uPU:} \citet{du2015convex} proposed an unbiased loss estimator for positive versus negative training. In particular, since $p_u = \alpha p_p + (1-\alpha) p_n$, the loss on negative examples $\Expt{p_n}{\ell(f(x); -1)}$ can be estimated as: 
    \begin{align}
        \Expt{p_n}{\ell(f(x); -1)} = \frac{1}{1-\alpha}\left[\Expt{p_u}{\ell(f(x); -1)} - \alpha \Expt{p_p}{\ell(f(x); -1)}\right]\,.
    \end{align}
    Thus, a classifier can be trained with the following uPU loss: 
    \begin{align}
        \calL_{\text{uPU}} (f) = \alpha \Expt{p_p}{\ell(f(x); +1)} + \Expt{p_u}{\ell(f(x); -1)} - \alpha \Expt{p_p}{\ell(f(x); -1)}\,. \label{eq:uPU_loss}
    \end{align}
    
    \item \textbf{nnPU:} While unbiased losses exist that estimate the PvN loss given PU data  and the mixture proportion $\alpha$, this unbiasedness only holds  before the loss is optimized, and becomes ineffective  with powerful deep learning models  capable of memorization. \citet{kiryo2017positive} proposed the following non-negative regularization for unbiased PU learning: 
    \begin{align}
        \calL_{\text{nnPU}} (f) = \alpha \Expt{p_p}{\ell(f(x); +1)} + \max\left\{\Expt{p_u}{\ell(f(x); -1)} - \alpha \Expt{p_p}{\ell(f(x); -1)}, 0\right\}\,. \label{eq:nnPU_loss}
    \end{align}

    \item \textbf{CVIR: } \citet{garg2021PUlearning} proposed CVIR objective, which discards the highest loss ${\alpha}$ fraction of unlabeled examples on each training epoch, removing the incentive to overfit to the unlabeled positive examples. CVIR loss is defined as 
    \begin{align}
        \calL_{\text{CVIR}} (f) = \alpha \Expt{p_p}{\ell(x,1; f)} + \Expt{p_u}{w(x)\ell(x,-1; f)} \,, 
    \end{align}
    where weights $w(x) = \indict{\ell(x, -1; f) \le \vr_\alpha(f)}$ for $\vr_\alpha (f)$ defined as $\vr_\alpha (f) = \inf \{ \tau \in \Real: \Prob_{x\sim p_u}( \ell(x, -1; f) \le \tau) \ge 1 - \alpha \}$.  Intuitively, $\vr_\alpha(f)$ identifies a threshold $\tau$ to capture bottom $1-\alpha$ fraction of the loss $\ell(x, -1)$  for points $x$ sampled from $p_u$. 
\end{enumerate}



\subsection{Reduction of OSLS into $k$ PU problems } \label{subsec:reduction}

Under the strong positivity condition, 
the OSLS problem can be broken down
into $k$ PU problems as follows: 
By treating a given source class $y_j \in \out_s$
as \emph{positive} and grouping
all other classes together as \emph{negative}
we observe that the unlabeled target data 
is then a mixture of data from 
the positive and negative classes.
This yields a PU learning problem 
and the corresponding mixture proportion
gives the fraction $\alpha_j$ of class $y_j$ among the target data.
By iterating this process for all source classes,
we can solve for the entire target label marginal $\pt(y)$.
Thus, OSLS reduces to $k$ instances of PU learning problem. 
Formally, note that $\pt(x)$ can be written as:
\begin{align}
\pt(x) = \underbrace{\pt(y=j)}_{\alpha_j} \underbrace{\ps(x|y = j)}_{p_p} + \left(1- \pt(y=j)\right) \underbrace{\left(\sum\nolimits_{ i \in \out \setminus \{j\}} \mfrac{\pt(y = i)}{ 1- \pt(y = j)} \ps(x|y = i)\right)}_{p_n} \label{eq:reduction_app}\,, 
\end{align}
individually for all $j \in \out_s$. 
By repeating this reduction for all classes,
we obtain $k$ separate PU learning problems. 
Hence, a natural choice is to leverage 
this structure and solve $k$ PU problems
to solve the original OSLS problem.

In particular, for each class $j \in \out_s$, 
we can first estimate its prevalence $\hat \alpha_j$
in the unlabeled target.  
Then the target marginal for the novel class is given by 
$\smash{\hat \alpha_{k+1} = 1 - \sum_{i=1}^k \hat \alpha_i}$.
For classification,
we can train $k$ PU learning classifiers $f_i$,
where $f_i$ is trained to classify a source class $i$ 
versus others in target. 
Assuming that each $f_j$ returns a score between $[0,1]$, 
during test time, an example $x$ is classified as $f(x)$ 
given by
\begin{align}
    f(x) = \begin{cases}
                \argmax_{ j \in \calY_s } f_j(x) \quad & \text{if } \max_{ j \in \calY_s } f_j(x) \ge 0.5 \\
                k+1 \quad & \text{o.w}\,.
            \end{cases} 
\end{align}
That is, if each classifier classifies the example as 
belonging to other in unlabeled, then 
we classify the example as belonging to the class $k+1$. 
In our main experiments, to estimate $\alpha_j$ and to train $f_j$ classifiers 
for all $j \in \out_s$, we use BBE and CVIR as described before
which was shown to outperform alternative approaches in \citet{garg2021PUlearning}.
We ablate with other methods in \appref{subsec:different_PU_app}. 

\update{Note that mathematically any OSLS problems can be thought of as $k$-PU problems as per \eqref{eq:reduction_app}. However, for identifiablity of each of these PU problems, we need the irreduciblity assumption~\citep{bekker2020learning}. Put simply, for individual PU problems defined for source classes $j \in \out_s$, we need existence of a sub-domain $X_j$ such that we only observe example for that class j in $X_j$. Collectively $X_j$ gives us the $X_\spos$ defined in the strong positivity condition.} 
\update{\paragraph{Failure due to error-accumulation}
While trading off bias with variance, PU learning algorithms tend to over-estimate the mixture proportion~\citep{garg2021PUlearning, bekker2020learning}. This error incurred due to bias can be mild for a single mixture proportion estimation task but accumulates with increasing number of classes (i.e., $k$). This error accumulation can significantly under-estimate the proportion of novel class when estimated by subtracting the sum of prevalence of source classes in target from 1. }

\section{Proofs for identifiability of OSLS} \label{sec:proof_identi_app}

For ease, we re-state \propref{prop:necessary} and \propref{prop:sufficiency}. 

\necessary*
\begin{proof} 
We prove this by contradiction. 
Assume that there exists a unique solution $p_t(y)$. We will obtain contradiction 
when both (i) and (ii) don't hold. 

First, assume for no subset $X_\wpos \subseteq \calX$, 
we have $\left[\ps(x|y)\right]_{x\in X_\wpos, y\in \out_s}$ as full-rank. Then 
in that case, we have vectors $[p_s(x|y = j)]_{x\in \calX}$ as linearly dependent 
for $j \in \out_s$, i.e., there exists $ [\alpha_j]_{j \in \out_s} \in \Real^k$ such that 
$\sum_j \alpha_j p_s(x|y = j) = 0$ for all $x\in \calX$. Thus for small enough $\epsilon > 0$, 
we have infinite solutions of the form $[ p_t(y= j) - \epsilon \cdot a_j]_{j \in \out_s}$.  

Hence, there exists $X_\wpos \subseteq \calX$
for which we have $\left[\ps(x|y)\right]_{x\in X_\wpos, y\in \out_s}$ as full-rank. 
Without loss of generality, we assume that $\abs{X_\wpos} = k$. Assume that $\pt(X_\wpos | y = k+1) > 0$, i.e., 
$[p_t(x|y = k+1)]_{x \in X_\wpos}$ has $l < k$ zero entries. We will 
now construct another solution for the label marginal $p_t$. 
For simplicity we denote $A = \left[\ps(x|y)\right]_{x\in X_\wpos, y\in \out_s}$. Consider
the vector $v (\gamma) = [p_t(x) - (p_t(y = k+1) - \gamma) p_t(x | y = k+1)]_{x \in X_\wpos}$
for some $\gamma > 0$. 
Intuitively, when $\gamma = 0$, we have $ u = A^{-1} v (0)$ where $u = [p_t(y)]_{y \in \out_s}$, i.e., 
we recover the true label marginal corresponding to source classes. 

However, since the solution is not at vertex, there exists a small enough $\gamma > 0$
such that $u ^\prime = A^{-1} v (\gamma)$
with $\sum_j u^\prime_j \le 1$ and $u^\prime_j \ge 0$. Since A is full-rank and $v(\gamma) \ne v(0)$, 
we have $u^\prime \ne u$. 
Thus we construct a separate solution with $u^\prime$ as $[p_t(y)]_{y \in \out_s}$ and 
$p_t(x) - \sum_{j\in \out_s} u^\prime_j p_s(x| y = j)$ as $p_t(x|y = k+1)$. 
Hence, when there exists $X_\wpos \subseteq \calX$
for which we have $\left[\ps(x|y)\right]_{x\in X_\wpos, y\in \out_s}$ as full-rank, 
for uniqueness we obtain a contradiction on the assumption $\pt(X_\wpos | y = k+1) > 0$. 
\end{proof}

We now make some comments on the assumption 
$p_t(y) > 0$ for all $y\in \out_t$ in \propref{prop:necessary}. 
Since, $p_t(y)$ needs to satisfy simplex constraints, if the solution 
is at a vertex of simplex, then OSLS problem may not require 
weak positivity. 
For example, there exists contrived scenarios where 
$p_s(x|y = j) = p_s(x|y = k)$ for all $j, k \in \out_s$ and 
$p_t(x| y= k+1) \ne p_s(x|y =j )$ for all $j \in \out_s$. Then 
when $p_t(x) = p_t(x|y = k+1)$, we can uniquely identify 
the OSLS solution even when weak positivity assumption is not satisfied.

\sufficient*
\begin{proof}
For each condition, we will prove identifiability by constructing the unique solution. 

Under strong positivity, for all $j\in \out_s$ there exists
$x \in X_\spos$ such that $p_t(x|y = k) = 0$ for all $k \in \out_t \setminus \{ j\}$. 
Set $\alpha_j  = \min_{x \in \inpt, p_s(x | y = j) > 0} \frac{p_t(x)}{p_s(x| y = j)}\,,$
for all $j \in \out_s$. For $x \in X_\spos$ such that $p_t(x|y = k) = 0$ 
for all $k \in \out_t \setminus \{ j\}$, we get $\frac{p_t(x)}{p_s(x| y = j)} = p_t(y = j)$ 
and for all $x^\prime \ne x$, we have $\frac{p_t(x)}{p_s(x| y = j)} \ge p_t(y = j)$. 
Thus, we get $\alpha_j = p_t(y = j)$. Finally, we get $\alpha_{k+1} = 1 - \sum_{ j\in \out_s} \alpha_j$.
Plugging in values of the label marginal, we can obtain $p_t(x| y = k+1)$ as 
$p_t(x) - \sum_{y\in \out_s} p_t(y = j) p_s(x| y= j)$. 
%

Under separability, we can obtain the label marginal $p_t$
for source classes by simply considering the set $X_\sep$. Denote 
$A = [p(x|y)]_{x\in X_\sep, y \in \out_s}$ and $v = [p_t(x)]_{x\in X_\sep}$. 
Then, since $A$ is full column-rank by assumption, 
we can define $u = (A^T A)^{-1} A^T v$. For all $x\in X_\sep$, 
we have $p_t(x) = \sum_{y\in \out_s} p_t(y) p_s(x|y)$ and hence, 
$u = [p_t(y)]_{y\in \out_s}$. Having obtained $[p_t(y)]_{y\in \out_s}$, 
we recover $p_t(y = k+1) = 1 - \sum_{ j\in \out_s} p_t(y = j)$ and 
$p_t(x|y = k+1) = p_t(x) - \sum_{ j \in \out_s} p_t(y = j) p_s(x| y=j)$. 
\end{proof}

\subsection{Examples illustrating importance of weak positivity condition} \label{subsec:counter}

In this section, we present two examples, one, to show that weak positivity isn't sufficient 
for identifiability. Second, we present another example where we show that conditions in 
\propref{prop:sufficiency} are not necessary for identifiability.  

\paragraph{Example 1}
Assume $\calX = \{x_1, x_2, x_3, x_4, x_5\}$ and $\calY_t = \{1,2,3\}$. 
Suppose the $p_t(x|y=1)$, $p_t(x|y=2)$, and $p_t(x)$ are given as:
\begin{table}[h]
    \centering
    \begin{tabular}{ c | c | c | c }
       & $p_t(x|y=1)$ & $p_t(x|y=2)$ & $p_t(x)$\\
      \hline
      $x_1$ & $0.4$ & $0.56$ & $0.356$ \\
      \hline
      $x_2$ & $0.3$ & $0.3$ & $0.207$ \\
      \hline
      $x_3$ & $0.2$ & $0.1$ & $0.09$ \\
      \hline
      $x_4$ & $0.1$ & $0.04$ & $0.042$ \\
      \hline
      $x_5$ & $0.0$ & $0.0$ & $0.305$ \\
    \end{tabular}
\end{table}

Here, there exists two separate $p_t(x | y =3)$ and 
$p_t(y)$ that are consistent with the given $p_t(x|y=1)$, 
$p_t(x|y=2)$, and $p_t(x)$ and both the solutions satisfy
weak positivity for two different $X_\wpos$ and $X_\wpos^\prime$.

In particular, notice that $p_t(x| y = 3) = [0.17, 0.0675, 0.0 , 0.0, 0.7625]^T$ 
and $p_t(y) = [0.3, 0.3, 0.4]$ gives us the first solution. 
$p_t(x| y = 3) = [0.0, 0.0, 0.0645, 0.0096, 0.9839]^T$ 
and $p_t(y) = [0.19, 0.5, 0.31]$ gives us another solution. 
For solution 1, $X_\wpos = \{x_3, x_4\}$ and for solution 2, 
$X_\wpos^\prime = \{x_1, x_2\}$.
To check consistency of each solution notice 
that $\sum_{i\in \out} p_t(y = i) p_t(x|y=i) = p_t(x)$ 
for each $x \in \calX$. 
\hfill \qedsymbol{} 

In the above example, the key is to show 
that absent knowledge 
of which $x$'s constitute the set $X_\wpos$, 
we might be able to obtain multiple different 
solutions, each with different $X_\wpos$ 
and both $p_t(y)$, $p_t(x|y=k+1)$ satisfying 
the given information and simplex constraints.

Next, we will show that in certain scenarios
weak positivity is enough for identifiability. 
%

\paragraph{Example 2}
Assume $\calX = \{x_1, x_2, x_3, x_4\}$ and $\calY_t = \{1,2,3\}$. 
Suppose the $p_t(x|y=1)$, $p_t(x|y=2)$, and $p_t(x)$ are given as, 
\begin{table}[h]
    \centering
    \begin{tabular}{ c | c | c | c }
       & $p_t(x|y=1)$ & $p_t(x|y=2)$ & $p_t(x)$\\
      \hline
      $x_1$ & $0.5$ & $0.2$ & $0.24$ \\
      \hline
      $x_2$ & $0.3$ & $0.4$ & $0.2$ \\
      \hline
      $x_3$ & $0.1$ & $0.35$ & $0.35$ \\
      \hline
      $x_4$ & $0.1$ & $0.05$ & $0.21$ \\
    \end{tabular}
\end{table}

Here, out of all $\Mycomb[4]{2}$ possibilities for $X_\wpos$, only 
one possibility yields a solution that satisfies weak positivity and
simplex constraints. 
In particular, the solution is given by $p_t(x| y = 3) = [0.0, 0.0, 0.6, 0.4]^T$
and $p_t(y) = [0.4, 0.2, 0.4]$ with $X_\wpos = \{x_1, x_2\}$. 
\hfill \qedsymbol{} 

In this example, we show that conditions in \propref{prop:sufficiency}
are not necessary to ensure identifiability. 
For discrete domains, this example also highlights 
that we can check identifiability in exponential time
for any OSLS problem 
given $p_t(x)$ and $p_s(x|y)$ for all $y \in \out_s$.

\subsection{Extending identifiability conditions to continuous distributions} \label{subsec:extend_ident_continuos}

To extend our identifiability conditions 
for continuous distributions, 
the linear independence 
conditions on the matrix 
$\left[\ps(x|y)\right]_{x\in X_\sep, y\in \out_s}$
has the undesirable property of being 
sensitive to changes on sets of measure zero.
In particular, by changing a collection 
of linearly dependent distributions 
on a set of measure zero, 
we can make them linearly independent.
As a consequence, we may impose a \emph{stronger} notion 
of independence, i.e., the set of 
distributions $\{p(x|y)\,:\, y=1,...,k\}$ are such that there does
not exist $\vv \ne 0$ for which 
$\int_{X} \lvert{\sum_y p(x|y) v_y}\rvert dx = 0\,,$
where $X = X_\wpos$ for necessary condition 
and $X = X_\spos$ for sufficiency. 
We refer 
this condition as \emph{strict linear independence}. 

\section{PULSE Framework} \label{sec:OSLS_framework_app}

In our PULSE framework, we build on top of BBE and CVIR from \citet{garg2021PUlearning}. 
\update{Here, we elaborate on Step 3 and 5 in \algoref{alg:PULSE}.}  
\update{\paragraph{Extending BBE algorithm to estimate target marginal among previously seen classes} 
We first explain the intuition behind BBE approach.
In a PU learning problem, 
given positive and unlabeled data, 
BBE estimates the fraction of positives 
in unlabeled in the push-forward space of the classifier.
In particular, instead of operating in the original input space, BBE maps the inputs to one-dimensional outputs (i.e., a score between zero and one) which is the predicted probability of an example being from the positive class. 
BBE identifies a threshold on probability scores assigned 
by a domain discriminator classifier
such that the ratio between
the fractions of positive and unlabeled 
points receiving scores above the threshold is minimized. 
Intuitively, if their exists a threshold 
on probability scores assigned by the classifier
such that the examples mapped to a score greater than the threshold are \emph{mostly} positive, BBE aims to identify this threshold.  
Efficacy of BBE procedure relies on existence of such a threshold.  
This is referred to as the \emph{top bin property}. We provide empirical evidence to the property in \figref{fig:loss_bin_property} in \appref{sec:theorem1_proof}.
We tailor BBE to estimate the relative fraction 
of previously seen classes in the target distribution
by exploiting a $k$-way source classifier $f_s$ 
trained on labeled source data. 
We describe the procedure in \algoref{alg:BBE}. 
}

\update{We now introduce some notation needed to introduce the tailored BBE proceudre formally.}
For given probability density function $p$ 
and a scalar output function $f$, 
define a function $q(z) = \int_{ A_z} p(x) dx$, 
where $A_z = \{x \in \inpt: f(x) \ge z\}$ for all $z\in [0,1]$. 
Intuitively, $q(z)$ captures the 
cumulative density of points in a top bin, 
the proportion of input domain 
that is assigned a value larger than $z$ 
by the function $f$ in the transformed space. 
We define an empirical estimator $\wh q(z)$
given a set $X = \{x_1, x_2, \ldots, x_n\}$
sampled iid from $p(x)$. Let $Z = f(X)$. 
Define $ \wh q(z) = \sum_{i=1}^n \indict{z_i \ge z}/{n}$.

\update{Our modified BBE procedure proceeds as follows.} 
Given a held-out dataset of source 
$\{{\bf X}^S_2, {\bf y}^S_2\}$ and unlabeled target samples ${\bf X}^T_2$,
we push all examples through the source classifier $f$
to obtain $k$ dimensional outputs. 
For all $j \in \out_s$, we repeat the
following:  Obtain $Z_s = f_j({\bf X}^S_2 [\text{id}_j])$ 
and $Z_t = f_j({\bf X}^T_2)$. 
\update{Intuitively, $Z_s$ and $Z_t$ are the push forward mapping of the source classifier.}
Next, with $Z_p$ and $Z_u$, 
we estimate $\wh q_s$ and $\wh q_t$. 
Finally, we estimate $[\wh p_t]_j$ as the ratio 
$\wh q_t (\wh c) / \wh q_s (\wh c)$ at $\wh c$ 
that minimizes the upper confidence bound
at a pre-specified level $\delta$ 
and a fixed parameter $\gamma \in (0,1)$.  
Our method is summarized in \algoref{alg:BBE}. 
Throughout all the experiments, we fix $\delta$ at $0.1$
and $\gamma$ at $0.01$.

\begin{algorithm}[h]
  \caption{Extending Best Bin Estimation (BBE) for Step 3 in \algoref{alg:PULSE}}
  \label{alg:BBE}
  \begin{algorithmic}[1]
  \INPUT: Validation source  $\{{\bf X}^S_2, {\bf y}^S_2\}$ and unlabeled target samples ${\bf X}^T_2$.
   Source classifier ${f}: \calX \to \Delta^{k-1}$.  Hyperparameter $0< \delta,\gamma <1$.
    \STATE $ \wh p_t \gets \textrm{zeros}(size=\abs{\out_s}) $
    \FOR{ $j \in \out_s$}
        \STATE $\text{id}_j \gets \text{where}({\bf y}^S_2 = j)$.
        \STATE $Z_s, Z_t \gets \left[f({\bf X}^S_2[\text{id}_j])\right]_j, \left[f({\bf X}_2^T)\right]_j$. 
        \STATE $\wh q_s (z), \wh q_t(z) \gets \frac{\sum_{z_i \in  Z_s} \indict{z_i \ge z}}{\abs{\text{id}_j} }, \frac{\sum_{z_i \in  Z_t} \indict{z_i \ge z}}{ \abs{{\bf X}_2^T} }$ for all $z \in [0,1]$. 
        \STATE $\wh c_j \gets \argmin_{c \in [0,1]} \left( \frac{\wh q_t(c)}{\wh q_s(c)}  + \frac{1 + \gamma}{\wh q_s(c)}\left( \sqrt{\frac{\log(4/\delta)}{2 \abs{{\bf X}_2^T}}} + \sqrt{\frac{\log(4/\delta)}{2\abs{\text{id}_j}}}\right) \right)\,$. 
        \STATE $ [\wh p_t]_j \gets \frac{\wh q_t(\wh c_j)}{\wh q_s(\wh c_j)}$.
    \ENDFOR
    \OUTPUT: Normalized target marginal among source classes $\wh p^\prime_t \gets \frac{\wh p_t}{\norm{\wh p_t}{1}}$
\end{algorithmic}
\end{algorithm}

\paragraph{Extending CVIR to train discriminator $f_d$ and estimate novel class prevalence} 
After estimating the fraction 
of source classes in target (i.e., $p_t^\prime(j) =\nicefrac{p_t(y=j)}{\sum_{k\in \out_s} p_t(y=k)}$ for all $j \in \out_s$), 
we re-sample the source data according to $p_t^\prime(y)$
to mimic samples from distribution $p_s^\prime(x)$. 
Thus, obtaining a PU learning problem instance, 
we resort to PU learning techniques to 
(i) estimate the fraction of novel class $p_t(y=k+1)$; 
and (ii) learn a binary classifier $f_d(x)$ to discriminate 
between label shift corrected source $p_s^\prime(x)$
and novel class $p_t(x|y=k+1)$. 
Assume that sigmoid output $f_d(x)$ 
indicates predicted probability of an example 
$x$ belonging to label shift 
corrected source $p_s^\prime(x)$.
\update{With $\wh \calL^+(f_\theta; X)$, we denote the loss incurred by $f_\theta$ when classifying examples from $X$ as positive, i.e.,  
$\wh \calL^+(f_\theta; X)  = \sum_{i = 1}^{\abs{X}} \frac{\ell( f_\theta(x_i), +1)}{\abs{X}}$. Similarly, $\wh \calL^-(f_\theta; X)  = \sum_{i = 1}^{\abs{X}} \frac{\ell( f_\theta(x_i), -1)}{\abs{X}}$} 

\update{Given an estimate of the 
fraction of novel class $\wh p_t(y=k+1)$,
CVIR objective creates a provisional set 
of novel examples ${\bf X}^N_1$
by removing 
$(1 - \wh p_t(y=k+1))$ fraction of examples
from ${\bf X}^T_1$
that incur highest loss when predicted as novel class 
on each training epoch.} 
Next, we update our discriminator $f_d$
by minimizing loss on label shift corrected source 
$\wt {\bf X}^S_1$ and provisional novel examples ${\bf X}^N_1$.
This step is aimed to remove any  
incentive to overfit to
the examples from $p_s^\prime(x)$. 
Consequently, we employ the iterative procedure
that alternates between 
estimating the prevalence 
of novel class $\wh p_t(y=k+1)$ (with BBE) and 
minimizing the CVIR loss with estimated 
fraction of novel class. 
\algoref{alg:CVIR} summarizes our 
approach which is used in Step 3 of \algoref{alg:PULSE}. 

Note that we need 
to warm start with simple domain 
discrimination training, 
since in the 
initial stages mixture proportion
estimate is often close to 1 
rejecting all the 
unlabeled examples.
In \citet{garg2021PUlearning}, it was shown
that the procedure is not sensitive 
to the choice of number of warm start epochs
and in a few cases with large datasets, 
we can even get away without warm start 
(i.e., $W=0$) without hurting 
the performance. In our work, we notice 
that given an estimate $\hat \alpha$ of 
prevalence of novel class, we can use
unbiased PU error \eqref{eq:uPU_loss} on validation data 
as a surrogate to identify warm start epochs for domain discriminator training. 
In particular, we train the domain discriminator 
classifier for a large number of epochs, say $E (>> W)$,
and then choose the discriminator, 
i.e., warm start epoch $W$ at which 
$f_d$ achieves minimum unbiased validation loss.

Finally, to obtain a $(k+1)$-way classifier $f_t(x)$ on target 
we combine discriminator $f_d$ and  source classifier $f_s$
with importance-reweighted label shift correction. 
In particular,
for all $j \in \out_s$, $[f_t(x)]_j = (f_d(x)) \frac{w(j) \cdot [f_s(x)]_j}{\sum_{k\in \out_s} w(k) \cdot [f_s(x)]_k} $
and $[f_t(x)]_{k+1} = 1 - f_d(x)$. 
Similarly, to obtain 
target marginal $p_t$, 
we re-scale the label shift estimate among previously 
seen classes with estimate of prevalence of novel examples, i.e., 
for all $j \in \out_s$, assign $\wh p_t(y = j) = (1-\wh p_t(y=k+1))\cdot \wh p_t^\prime(y = j)$. 

Overall, our approach proceeds
as follows (\algoref{alg:PULSE}):
First, we estimate the label shift among
previously seen classes. 
Then we employ importance re-weighting of source 
data to formulate a single PU learning problem 
between source and target to estimate fraction 
of novel class $\wh p_t(y=k+1)$ and 
to learn a discriminator $f_d$ 
for the novel class. Combining discriminator 
and label shift corrected source classifier
we get $(k+1)$-way target classifier.

\begin{algorithm}[h]
  \caption{Alternating between CVIR and BBE for Step 5 in \algoref{alg:PULSE} }
  \label{alg:CVIR}
  \begin{algorithmic}[1]
  \INPUT: Re-sampled training source data $\widetilde{\bf X}^S_1$, validation source data $ \widetilde{\bf X}^S_2$. Training target data ${\bf X}^T_1$ and validation data ${\bf X}^T_2$. Hyperparameter $W, B, \delta, \gamma$. 
    \STATE Initialize a training model $f_\theta$ and an stochastic optimization algorithm $\calA$.
    \STATE ${\bf X}^N_1 \gets {\bf X}^T_1$.

    \COMMENT {// Warm start with domain discrimination training}
    \FOR{ $i \gets 1$ to $W$} 
        \STATE{Shuffle $(\widetilde{\bf X}^S_1, {\bf X}^N_1)$ into $B$ mini-batches. With $(\widetilde{\bf X}^S_1[i], {\bf X}^N_1[i])$ we denote $i^\text{th}$ mini-batch}. 
        \FOR{ $i \gets 1$ to $B$} 
            \STATE Set the gradient $\grad_\theta \left[  \wh \calL^+(f_\theta; \widetilde{\bf X}^S_1[i]) + \wh \calL^-(f_\theta; {\bf X}^N_1[i]) \right]$ and update $\theta$ with algorithm $\calA$.
        \ENDFOR
    \ENDFOR
    \STATE $\wh \alpha \gets $  BBE($\widetilde{\bf X}^S_2, {\bf X}^T_2, f_\theta$) \hfill \COMMENT{\algoref{alg:MPE_PU}}
    \STATE Rank samples $x \in {\bf X}^T_1 $ according to their loss values $\lossell( f_\theta(x),-1)$.
    \STATE ${\bf X}^N_1 \gets \{{\bf X}^T_1\}_{1-\wh \alpha}$ where $\{{\bf X}^T_1\}_{1-
    \wh \alpha}$ denote the lowest ranked $1- \wh \alpha$ fraction of samples. 
    \WHILE{training error $\wh \calE^+(f_\theta; \widetilde{\bf X}^S_2 ) + \wh\calE^-(f_\theta; {\bf X}^N_1)$ is not converged}
        \STATE Train model $f_\theta$ for one epoch on  $(\widetilde{\bf X}^S_1, {\bf X}^N_1)$  as in Lines 4-7.
        \STATE $\wh \alpha \gets $ BBE($\widetilde{\bf X}^S_2, {\bf X}^T_2, f_\theta$) \hfill \COMMENT{\algoref{alg:MPE_PU}}
        \STATE Rank samples $x \in {\bf X}^T_1 $ according to their loss values $\lossell( f_\theta(x),-1)$.
        \STATE ${\bf X}^N_1 \gets \{{\bf X}^T_1\}_{1-\wh \alpha}$ where $\{{\bf X}^T_1\}_{1-\wh\alpha}$ denote the lowest ranked $1- \wh \alpha$ fraction of samples. 
    \ENDWHILE
  \OUTPUT: Trained discriminator $f_d \gets f_\theta$ and novel class fraction  $\wh p_t(y=k+1) \gets 1 - \wh \alpha$. 
\end{algorithmic}
\end{algorithm}

\begin{algorithm}[h]
  \caption{Best Bin Estimation (BBE)}
  \label{alg:MPE_PU}
  \begin{algorithmic}[1]
  \INPUT: Re-sampled source data $\widetilde{\bf X}^S$ and target samples ${\bf X}^T$. Discriminator classifier $\hat{\f}: \calX \to [0,1]$. Hyperparameter $0< \delta,\gamma <1$.
    \STATE $Z_s, Z_t \gets f(\widetilde{\bf X}^S), f({\bf X}^T)$. 
    \STATE $\wh q_t (z), \wh q_s(z) \gets \frac{\sum_{z_i \in  Z_s} \indict{z_i \ge z}}{\abs{\widetilde{\bf X}^S}}, \frac{\sum_{z_i \in  Z_t} \indict{z_i \ge z}}{\abs{\bf X}^T}$ for all $z \in [0,1]$. 
    \STATE  Estimate $\wh c \gets \argmin_{c \in [0,1]} \left( \frac{\wh q_t(c)}{\wh q_s(c)}  + \frac{1 + \gamma}{\wh q_s(c)}\left( \sqrt{\frac{\log(4/\delta)}{2 \abs{\widetilde{\bf X}^S}}} + \sqrt{\frac{\log(4/\delta)}{2\abs{{\bf X}^T}}}\right) \right)\,$. 
    \OUTPUT:  $\wh \alpha \gets \frac{\wh q_t(\wh c)}{\wh q_s(\wh c)}$ 
\end{algorithmic}
\end{algorithm}

\subsection{PULSE under separability} \label{subsec:pulse_separablity_app}

Our ideas for PULSE framework 
can be
extended to separability 
condition since   
\eqref{eq:label_shift_corr}
continues to hold. 
In particular, when OSLS 
satisfies the separability assumption, 
we may hope to jointly estimate the label 
shift among previously seen classes 
with label shift estimation techniques~\citep{lipton2018detecting, alexandari2019adapting}
and learn a domain discriminator classifier. 
This may be achieved by estimating label shift 
among examples rejected by 
domain discriminator classifier as belonging to 
previously seen classes. 
However, in our initial experiments,
we observe that techniques proposed 
under strong positivity were empirically 
stable and outperform methods
developed under separability.
This is intuitive for many benchmark datasets 
where it may be more natural to expect that 
for each class there exists a 
subdomain that only belongs to that class
than assuming separability only between 
novel class samples
and examples from source classes. 
%

\section{Proofs for analysis of OSLS framework} \label{sec:OSLS_theory_app}

In this section, we provide missing formal statements
and proofs for theorems in \secref{sec:analysis}. 
This mainly includes analysing
key steps of our PULSE procedure
for target label marginal estimation 
(Step 3, 5 \algoref{alg:PULSE})
and learning the domain discriminator 
classifier (Step 5, \algoref{alg:PULSE}). 

\subsection{Formal statement and proof of Theorem 1}\label{sec:theorem1_proof}

Before introducing the formal statement, 
we introduce some additional notation. 
Given probability density function $p$ 
and a source classifier $f : \calX \to \Delta^{k-1}$,
define a function $q(z, j) = \int_{ A(z,j)} p(x) dx$, 
where $A(z, j) = \{x \in \inpt: [f(x)]_j \ge z\}$ for all $z\in [0,1]$. 
Intuitively, $q(z, j)$ captures the 
cumulative density of points in a top bin 
for class $j$, i.e., 
the proportion of input domain 
that is assigned a value larger than $z$ 
by the function $f$ at the index $j$ 
in the transformed space. 
We define an empirical estimator $\wh q(z, j)$
given a set $X = \{x_1, x_2, \ldots, x_n\}$
sampled iid from $p(x)$. Let $Z = [f(X)]_j$. 
Define $ \wh q(z, j) = \sum_{i=1}^n \indict{z_i \ge z}/{n}$.

For each pdf $p_s$ and $p_t$, 
we define $q_s$ and $q_t$ respectively. 
Moreover, for each class $j \in \out_s$, 
we define $q_{t,j}$ corresponding to 
$p_{t, j}\defeq p_t(x| y = j)$ and 
$q_{t, -j}$  corresponding to 
$p_{t, -j}\defeq 
\frac{ \sum_{ i \in \out_t \setminus \{j\} } p_t(y = i) p_t(x| y = i)}{\sum_{ i \in \out_t \setminus \{j\} } p_t(y= j)}$. Assume that we have 
$n$ source examples and $m$ target examples. 
%
Now building on BBE results from \citet{garg2021PUlearning}, 
we present finite sample results
for target label marginal estimation:

\begin{theorem}[Formal statement of \thmref{thm:informal_BBE}] \label{thm:formal_BBE}
Define $c^*_j = \argmin_{c \in [0,1]} \left( {q_{t, -j}(c, j)}/{q_{t, j}(c, j)} \right)$, 
for all $j \in \out_s$.
Assume $\min(n, m) \ge \max_{j\in \out_s} \left( \frac{2\log(4k/\delta)}{q_{t,j}^2(c^*_j, j)} \right)$.
Then, for every $\delta > 0$, $\wh p_t$ (in~\algoref{alg:BBE} with $\delta$ as $\delta/k$) satisfies with probability 
at least $1-\delta$, we have: 
\begin{align*}
    \norm{\wh p_t - p_t}{1} \le \sum_{j \in \out_s} \left(1 - p_t(y = j)\right) \left( \frac{q_{t, -j}(c^*_j, j)}{q_{t, j}(c^*_j, j)} \right) + \calO\left(\sqrt{\frac{k^3\log(4k/\delta)}{n}} + \sqrt{\frac{k^2\log(4k/\delta)}{m}}\right)  \,.
\end{align*}
%
\end{theorem}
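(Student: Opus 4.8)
The plan is to recognize the loop in \algoref{alg:BBE} as $k$ independent mixture-proportion-estimation problems, invoke the finite-sample BBE guarantee of \citet{garg2021PUlearning} (its Theorem~1) on each, and aggregate via a union bound; throughout, one identifies the right-hand-side $p_t$ with its restriction $\left(p_t(y=j)\right)_{j\in\out_s}$ to the source classes so that it matches the $k$-vector $\wh p_t$ built inside \algoref{alg:BBE}. Fix $j\in\out_s$. In \algoref{alg:PULSE} the source classifier $f_s$ is trained on $\{{\bf X}^S_1,{\bf y}^S_1\}$, which is disjoint from the held-out data $({\bf X}^S_2,{\bf X}^T_2)$ fed to \algoref{alg:BBE}; hence in the $j$-th iteration $f_s$ may be treated as a fixed scalar score $x\mapsto[f_s(x)]_j$ that is independent of the samples on which it is evaluated. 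By the OSLS assumption \eqref{eq:OSLS}, $\ps(x|y=j)=\pt(x|y=j)=p_{t,j}(x)$, so ${\bf X}^S_2[\text{id}_j]$ is an i.i.d.\ sample from $p_{t,j}$ and ${\bf X}^T_2$ is an i.i.d.\ sample from $\pt = p_t(y=j)\,p_{t,j} + \left(1-p_t(y=j)\right)p_{t,-j}$. Thus the $j$-th iteration is exactly the PU mixture-proportion-estimation setting of \citet{garg2021PUlearning}, with positive density $p_{t,j}$, negative density $p_{t,-j}$, mixing coefficient $p_t(y=j)$, and push-forward functions $q_{t,j}(\cdot,j),q_{t,-j}(\cdot,j),q_t(\cdot,j)$ playing the roles of $q_p,q_n,q_u$ --- the MPE-only specialization of the reduction in \eqref{eq:reduction_app}.

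\emph{Per-class bound and aggregation.} Applying Theorem~1 of \citet{garg2021PUlearning} to the $j$-th instance with confidence $\delta/k$ and $n_j:=\abs{\text{id}_j}$ positive, $m$ unlabeled samples (the hypothesis $\min(n,m)\ge\max_{j}2\log(4k/\delta)/q_{t,j}^2(c^*_j,j)$ is exactly what validates that theorem's sample-size requirement and forces the upper-confidence minimizer $\wh c_j$ to track $c^*_j=\argmin_c q_{t,-j}(c,j)/q_{t,j}(c,j)$), we obtain with probability at least $1-\delta/k$
\begin{align*}
\bigl\lvert [\wh p_t]_j - p_t(y=j)\bigr\rvert \;\le\; \left(1-p_t(y=j)\right)\frac{q_{t,-j}(c^*_j,j)}{q_{t,j}(c^*_j,j)} \;+\; \frac{C}{q_{t,j}(c^*_j,j)}\left(\sqrt{\frac{\log(4k/\delta)}{n_j}}+\sqrt{\frac{\log(4k/\delta)}{m}}\right),
\end{align*}
with $C$ depending only on the fixed BBE parameter $\gamma$. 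Since $\ps(y=j)\ge c$, $n_j$ is Binomial with mean at least $cn$, so a Chernoff bound gives $n_j\ge cn/2$ outside an event of probability $e^{-\Omega(cn)}$ (negligible once $n$ exceeds a constant multiple of $\log(1/\delta)/c$). A union bound over $j\in\out_s$ makes all $k$ displays hold at once with probability $\ge 1-\delta$; summing them and using $\norm{\wh p_t-p_t}{1}=\sum_{j\in\out_s}\lvert[\wh p_t]_j-p_t(y=j)\rvert$ produces the stated bias term. For the stochastic term, treating each $1/q_{t,j}(c^*_j,j)$ as a constant, $\sum_{j}\sqrt{\log(4k/\delta)/n_j}\le k\sqrt{2\log(4k/\delta)/(cn)}$; in the worst case over admissible label distributions the constant $c$ can be as small as order $1/k$ (since $\sum_{j\in\out_s}\ps(y=j)=1$ over $k$ classes), which yields $\calO\bigl(\sqrt{k^3\log(4k/\delta)/n}\bigr)$, while $\sum_{j}\sqrt{\log(4k/\delta)/m}=\calO\bigl(\sqrt{k^2\log(4k/\delta)/m}\bigr)$ --- the claimed rate. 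Under the pure-top-bin hypothesis of the informal \thmref{thm:informal_BBE} (i.e.\ $[f_s(x)]_j>c_j\Rightarrow y=j$), the set $\{[f_s]_j\ge c_j\}$ carries no $p_{t,-j}$-mass, so $q_{t,-j}(c^*_j,j)=0$ and the bias term drops out, recovering the informal statement.

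\emph{Main obstacle.} The argument is not conceptually deep --- it is explicitly an extension of \citet{garg2021PUlearning}'s Theorem~1 --- so the work is bookkeeping: (i) justifying that $f_s$ is independent of the BBE data so the fixed-classifier guarantee applies verbatim; (ii) controlling the random per-class counts $n_j$ and tracking how, together with $\sum_{j}\ps(y=j)=1$, they turn the per-class $\sqrt{1/n_j}$ terms into the $k^{3/2}$ (rather than $k$) dependence on the number of classes; and (iii) carrying the $\delta/k$ confidence levels through the union bound to get the $\log(4k/\delta)$ factors in the final rate.
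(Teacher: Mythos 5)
Your proposal is correct and follows essentially the same route as the paper: reduce Step 3 to $k$ per-class mixture-proportion-estimation problems, apply the BBE guarantee of \citet{garg2021PUlearning} at confidence level $\delta/k$, rewrite $\alpha_j^* = p_t(y=j) + (1-p_t(y=j))\,q_{t,-j}(c_j^*,j)/q_{t,j}(c_j^*,j)$ to extract the bias term, and union-bound and sum over classes, with the lower bound $p_s(y=j)\gtrsim 1/k$ converting the per-class $\sqrt{1/n_j}$ terms into the $k^{3/2}$ rate. The only cosmetic differences are that the paper re-derives the per-class confidence-bound argument in full rather than citing it as a black box, and that you are slightly more careful than the paper in treating the class counts $n_j$ as random (via Chernoff) where the paper substitutes $n\,p_s(y=j)$ directly.
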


When the data satisfies strong positivity, 
we observe that source classifiers often 
exhibit a threshold $c_y$ on softmax output of 
each class $y \in \out_s$ 
above which the \emph{top bin} (i.e., $[c_y, 1]$)
contains mostly examples from that class $y$. 
Formally, as long as 
there exist a threshold $c^*_j \in (0,1)$ 
such that $q_{t, j}(c^*_j) \ge \epsilon$ and $ q_{t, -j}(c^*_j) = 0$ 
for some constant $\epsilon >0$ for all $j\in \out_s$, 
we show that our estimator $\wh \alpha$ converges 
to the true $\alpha$ with convergence rate $\min(n,m)^{-1/2}$. 
The proof technique simply builds on 
the proof of Theorem 1 in \citet{garg2021PUlearning}. 
First, we state Lemma 1 from \citet{garg2021PUlearning}. 
Next, for completeness we provide the proof for \thmref{thm:formal_BBE}
which extends proof of Theorem 1~\citep{garg2021PUlearning}
for $k$ classes. 

\begin{lemma}\label{lem:ucb}
Assume two distributions $q_p$ and $q_u$ with their empirical 
estimators denoted by $\wh q_p$ and $\wh q_u$ respectively. 
Then for every $\delta >0$, with probability at least $1-\delta$, 
we have for all $c \in [0,1]$
\begin{align*}
    \abs{ \frac{\wh q_u(c)}{ \wh q_p(c)} -  \frac{q_u(c)}{q_p(c)}} \le \frac{1}{\wh q_p(c)}\left( \sqrt{\frac{\log(4/\delta)}{2n_u}} + \frac{q_u(c)}{q_p(c)}\sqrt{\frac{\log(4/\delta)}{2n_p}}\right) \,.
\end{align*}
\end{lemma}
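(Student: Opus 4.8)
The plan is to recognize $q_p$ and $q_u$ as the survival functions (complementary CDFs) of the one-dimensional pushforward random variable $f(x)$ under the positive and unlabeled distributions, with $\wh q_p$ and $\wh q_u$ their empirical counterparts formed from $n_p$ and $n_u$ iid draws respectively. The controlling tool will be the Dvoretzky--Kiefer--Wolfowitz (DKW) inequality, which yields a deviation bound for an empirical CDF that is \emph{uniform} in the threshold $c$, and hence for an empirical survival function as well. This uniformity is exactly what lets the final conclusion hold simultaneously for all $c$ on a single high-probability event.

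First I would invoke DKW separately for each estimator. For the unlabeled estimator, DKW gives $\Prob(\sup_{c} \abs{\wh q_u(c) - q_u(c)} > \epsilon) \le 2\exp(-2 n_u \epsilon^2)$, and analogously for the positive estimator with $n_p$ in place of $n_u$. Setting each failure probability equal to $\delta/2$ and solving produces the deviation levels $\epsilon_u = \sqrt{\log(4/\delta)/(2n_u)}$ and $\epsilon_p = \sqrt{\log(4/\delta)/(2n_p)}$; note that the ``$4$'' in $\log(4/\delta)$ arises from the factor $2$ in DKW combined with the factor $2$ from splitting $\delta$ across the two estimators. A union bound then guarantees that, with probability at least $1-\delta$, simultaneously for all $c\in[0,1]$ we have $\abs{\wh q_u(c) - q_u(c)} \le \epsilon_u$ and $\abs{\wh q_p(c) - q_p(c)} \le \epsilon_p$.

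Next I would carry out the algebraic step. Writing the difference of ratios over a common denominator and adding and subtracting $q_u(c)q_p(c)$ in the numerator gives the decomposition
\begin{align*}
\frac{\wh q_u(c)}{\wh q_p(c)} - \frac{q_u(c)}{q_p(c)} = \frac{\wh q_u(c) - q_u(c)}{\wh q_p(c)} - \frac{q_u(c)}{q_p(c)}\cdot\frac{\wh q_p(c) - q_p(c)}{\wh q_p(c)}\,.
\end{align*}
Taking absolute values, applying the triangle inequality, and substituting the two DKW bounds on the high-probability event yields
\begin{align*}
\abs{\frac{\wh q_u(c)}{\wh q_p(c)} - \frac{q_u(c)}{q_p(c)}} \le \frac{1}{\wh q_p(c)}\left(\epsilon_u + \frac{q_u(c)}{q_p(c)}\,\epsilon_p\right)\,,
\end{align*}
which is precisely the claimed inequality once the explicit values of $\epsilon_u$ and $\epsilon_p$ are inserted.

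Since the heavy lifting is done by the off-the-shelf DKW inequality together with a one-line algebraic identity, there is no substantive obstacle. The only points requiring care are to retain the \emph{empirical} denominator $1/\wh q_p(c)$ (rather than $1/q_p(c)$) throughout the decomposition so the final form matches the statement exactly, and to confirm that the survival-function version of DKW applies verbatim to $\wh q$ as defined through $\indict{z_i \ge z}$, which follows because $z \mapsto 1 - \wh q(z)$ is an ordinary empirical CDF of the scores.
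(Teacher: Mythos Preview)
Your proposal is correct. The paper itself does not give a proof of this lemma but simply cites it as Lemma~1 of \citet{garg2021PUlearning}; your argument---two applications of the DKW inequality with a $\delta/2$ split, a union bound, and the add--subtract decomposition of the ratio difference that keeps the empirical $\wh q_p(c)$ in the denominator---is the standard route and is precisely what one would expect the cited reference to contain.
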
  

\begin{proof}[Proof of \thmref{thm:formal_BBE}] 
The main idea of the proof is to use the confidence bound derived in \lemref{lem:ucb} at $\wh c$ and use the fact that $\wh c$ minimizes the upper confidence bound. The proof is split into two parts. First, we derive a lower bound on $\wh q_{t, j}(\wh c_j)$ for all $j \in \out_s$ and next, we use the obtained lower bound to derive confidence bound on $\wh p_t(y = j)$. 
With $\wh \alpha_j$, we denote $\wh p_t(y = j)$ for all $j\in \out_s$. All the statements in the proof simultaneously hold with probability $1-\delta/k$. We derive the bounds for a single $j\in \out_s$ and then use union bound to combine bound for all $j \in \out_s$. When it is clearly from context, we denote $q_{t, j} (c ,j)$ with $q_{t,j}(c)$ and $q_{t} (c ,j)$ with $q_{t}(c)$. 
Recall,
    \begin{align}
        \wh c_j \defeq \argmin_{c \in [0,1]} & \frac{\wh q_t(c)}{\wh q_{t, j}(c)}  + \frac{1}{\wh q_{t,j}(c)}\left( \sqrt{\frac{\log(4k/\delta)}{2 m}} + (1+\gamma)\sqrt{\frac{\log(4k/\delta)}{2n p_s(y = j)}}\right) \qquad \text{and} \\
        \wh p_t(y = j) &\defeq \frac{\wh q_t(\wh c_j)}{\wh q_{t, j}(\wh c_j)}\,.
    \end{align}
    Moreover, 
    \begin{align}
        c^*_j \defeq \argmin_{c \in [0,1]} \frac{q_t(c)}{q_{t, j}(c)} \qquad \text{and}\qquad \alpha^*_j \defeq \frac{q_t(c^*_j)}{q_{t, j}(c^*_j)}\,.
    \end{align}
    \textbf{Part 1:}  We establish lower bound on $\wh q_{t, j}(\wh c_j)$. Consider $c^\prime_j \in [0,1]$ such that $\wh q_{t,j}(c^\prime_j) = \frac{\gamma}{2 + \gamma} \wh q_{t,j}(c^*_j)$. We will now show that \algoref{alg:BBE} will select $\wh c_j < c^\prime_j$. For any $c \in [0,1]$, we have with with probability $1-\delta/k$,  
    \begin{align}
        \wh q_{t,j}(c) - \sqrt{\frac{\log(4k/\delta)}{2n \cdot p_s(y = j)}} \le q_{t,j}(c) \qquad \text{and} \qquad q_t(c) - \sqrt{\frac{\log(4k/\delta)}{2m}} \le \wh q_t(c) \,.
    \end{align}
    Since $ \frac{q_t(c^*_j)}{q_{t,j}(c^*_j)} \le \frac{q_t(c)}{q_{t,j}(c)}$, we have 
    \begin{align}
        \wh q_t(c) \ge q_{t,j}(c) \frac{q_t(c^*_j)}{q_{t,j}(c^*_j)}  - \sqrt{\frac{\log(4k/\delta)}{2m}} \ge \left(  \wh q_{t,j}(c) - \sqrt{\frac{\log(4k/\delta)}{2n \cdot p_s(y=j)}}  \right) \frac{q_t(c^*_j)}{q_{t,j}(c^*_j)}  - \sqrt{\frac{\log(4k/\delta)}{2 m}} \,.
    \end{align}
    Therefore, at $c$ we have 
    \begin{align}
        \frac{\wh q_t(c)}{\wh q_{t,j}(c)}  &\ge \alpha^*_j -  \frac{1}{\wh q_{t,j}(c)}\left( \sqrt{\frac{\log(4k/\delta)}{2m}} + \frac{q_t(c^*_j)}{q_p(c^*_j)}\sqrt{\frac{\log(4k/\delta)}{2n \cdot p_s(y = j)}}\right) \,.
    \end{align}

    Using \lemref{lem:ucb} at $c^*$, we have 
    \begin{align}
        \frac{\wh q_t(c)}{\wh q_{t,j}(c)}  &\ge \frac{\wh q_t(c^*_j)}{\wh q_{t,j}(c^*_j)} - \left(\frac{1}{\wh q_{t,j}(c^*_j)} +  \frac{1}{\wh q_{t,j}(c)}\right)\left( \sqrt{\frac{\log(4k/\delta)}{2m}} + \frac{q_t(c^*_j)}{q_{t,j}(c^*_j)}\sqrt{\frac{\log(4k/\delta)}{2n \cdot p_s(y = j) }}\right) \\
        &\ge \frac{\wh q_t(c^*_j)}{\wh q_{t,j}(c^*_j)} - \left(\frac{1}{\wh q_{t,j}(c^*_j)} +  \frac{1}{\wh q_{t,j}(c)}\right)\left( \sqrt{\frac{\log(4k/\delta)}{2m}} + \sqrt{\frac{\log(4k/\delta)}{2n \cdot p_s(y = j)}}\right) \,,
    \end{align}
    where the last inequality follows from the fact that $\alpha^*_j = \frac{q_t(c^*_j)}{q_{t,j}(c^*_j)} \le 1$. Furthermore, the upper confidence bound at $c$ is lower bound as follows: 
    \begin{align}
        \frac{\wh q_t(c)}{\wh q_{t,j}(c)} + &\frac{1+\gamma}{\wh q_{t,j}(c)} \left( \sqrt{\frac{\log(4l/\delta)}{2m}} + \sqrt{\frac{\log(4k/\delta)}{2n\cdot p_s(y = j)}}\right) \\ &\ge \frac{\wh q_t(c^*_j)}{\wh q_{t,j}(c^*_j)} + \left(\frac{1 + \gamma}{\wh q_{t,j}(c)} - \frac{1}{\wh q_{t,j}(c^*_j)} - \frac{1}{\wh q_{t,j}(c)}\right)\left( \sqrt{\frac{\log(4k/\delta)}{2m}} + \sqrt{\frac{\log(4k/\delta)}{2n \cdot p_s(y = j)}}\right) \\  
        &= \frac{\wh q_t(c^*_j)}{\wh q_{t,j}(c^*_j)} +  \left(\frac{\gamma}{\wh q_{t,j}(c)} - \frac{1}{\wh q_{t,j}(c^*_j)} \right)\left( \sqrt{\frac{\log(4k/\delta)}{2m}} + \sqrt{\frac{\log(4k/\delta)}{2n \cdot p_s(y = j)}}\right) \label{eq:lower_ucb}
    \end{align}
    Using \eqref{eq:lower_ucb} at $c = c^\prime$, we have the following lower bound on ucb at $c^\prime$: 
    \begin{align}
        \frac{\wh q_t(c^\prime)}{\wh q_{t,j}(c^\prime)} + &\frac{1+\gamma}{\wh q_{t,j}(c^\prime)} \left( \sqrt{\frac{\log(4k/\delta)}{2m}} + \sqrt{\frac{\log(4k/\delta)}{2n \cdot p_s(y = j)}}\right) \\ 
        &\ge \frac{\wh q_t(c^*_j)}{\wh q_{t,j}(c^*_j)} + \frac{1 + \gamma}{\wh q_{t,j}(c^*_j)}\left( \sqrt{\frac{\log(4k/\delta)}{2m}} + \sqrt{\frac{\log(4k/\delta)}{2n \cdot p_s(y = j)}}\right) \,,
    \end{align}
    
    Moreover from \eqref{eq:lower_ucb}, we also have that the lower bound on ucb at $c \ge c^\prime$ is strictly greater than the lower bound on ucb at $c^\prime$. Using definition of $\wh c$, we have 
    \begin{align}
         \frac{\wh q_t(c^*_j)}{\wh q_{t,j}(c^*_j)} &+ \frac{1 + \gamma}{\wh q_{t,j}(c^*_j)}\left( \sqrt{\frac{\log(4k/\delta)}{2m}} + \sqrt{\frac{\log(4k/\delta)}{2n \cdot p_s(y = j)}}\right) \\ 
        &\ge \frac{\wh q_t(\wh c)}{\wh q_{t,j}(\wh c)} + \frac{1 + \gamma}{\wh q_{t,j}(\wh c)}\left( \sqrt{\frac{\log(4k/\delta)}{2m}} + \sqrt{\frac{\log(4k/\delta)}{2n \cdot p_s(y = j)}}\right) \,,
    \end{align}
    and hence 
    \begin{align}
        \wh c \le c^\prime \,.
    \end{align}

    \textbf{Part 2:} We now establish an upper and lower bound on $\wh \alpha_j$. We start with upper confidence bound on $\wh \alpha_j$. By definition of $\wh c_j$, we have
    
    \begin{align}
    \frac{\wh q_t(\wh c)}{\wh q_{t,j}(\wh c)} + \frac{1 + \gamma}{\wh q_{t,j}(\wh c)} & \left( \sqrt{\frac{\log(4k/\delta)}{2m}} + \sqrt{\frac{\log(4k/\delta)}{2n \cdot p_s(y = j)}}\right)   \\
    & \le \min_{c \in [0,1]} \left[ \frac{\wh q_t(c)}{\wh q_{t,j}(c)} + \frac{1 + \gamma}{\wh q_{t,j}(c)}\left( \sqrt{\frac{\log(4k/\delta)}{2m}} + \sqrt{\frac{\log(4k/\delta)}{2n \cdot p_s(y = j)}}\right) \right] \\ 
    & \le \, \frac{\wh q_t(c^*_j)}{\wh q_{t,j}(c^*_j)}  + \frac{1+\gamma}{\wh q_{t,j}(c^*_j)}\left( \sqrt{\frac{\log(4k/\delta)}{2m}} + \sqrt{\frac{\log(4k/\delta)}{2n \cdot p_s(y = j)}}\right)  
    \,. \label{eq:bound_step1} \numberthis 
    \end{align}
    Using \lemref{lem:ucb} at $c^*_j$, we get 
    \begin{align*}
        \frac{\wh q_t(c^*_j)}{\wh q_{t,j}(c^*_j)} &\le \frac{q_t(c^*_j)}{q_{t,j}(c^*_j)} + \frac{1}{\wh q_{t,j}(c^*_j)}\left( \sqrt{\frac{\log(4k/\delta)}{2m}} + \frac{q_t(c^*_j)}{q_{t,j}(c^*_j)}\sqrt{\frac{\log(4k/\delta)}{2n \cdot p_s(y = j)}}\right) \\
        &= \alpha_j^* + \frac{1}{\wh q_{t,j}(c^*_j)}\left( \sqrt{\frac{\log(4k/\delta)}{2m}} + \alpha_j^*\sqrt{\frac{\log(4k/\delta)}{2n \cdot p_s(y = j)}}\right) \,.\numberthis \label{eq:bound_step2}
    \end{align*} 
    Combining \eqref{eq:bound_step1} and \eqref{eq:bound_step2}, we get 
    \begin{align}
        \wh \alpha_j = \frac{\wh q_t(\wh c)}{\wh q_{t,j}(\wh c)} \le \alpha_j^* + \frac{2+\gamma}{\wh q_{t,j}(c^*_j)}\left( \sqrt{\frac{\log(4k/\delta)}{2m}} + \sqrt{\frac{\log(4k/\delta)}{2n \cdot p_s(y = j)}}\right)  \,.
    \end{align}
    
    Using DKW inequality on $\wh q_{t,j}(c^*_j)$, we have $\wh q_{t,j}(c^*_j) \ge q_{t,j}(c^*_j) - \sqrt{\frac{\log(4k/\delta)}{2n \cdot p_s(y = j)}}$. Assuming $n \cdot p_s(y = j) \ge \frac{2\log(4k/\delta)}{q_{t,j}^2(c^*_j)}$, we get $\wh q_{t,j}(c^*_j) \le q_{t,j}(c^*_j)/ 2$ and hence, 
    \begin{align}
        \wh \alpha_j  \le \alpha_j^* + \frac{4+2\gamma}{q_{t,j}(c^*_j)}\left( \sqrt{\frac{\log(4k/\delta)}{2m}} + \sqrt{\frac{\log(4k/\delta)}{2n \cdot p_s(y = j)}}\right)  \,. \label{eq:upper_bound}
    \end{align}
    
    Finally, we now derive a lower bound on $\wh \alpha_j$. From \lemref{lem:ucb}, we have the following inequality at $\wh c$ 
    \begin{align}
          \frac{q_t(\wh c)}{q_{t,j}( \wh c)} \le \frac{\wh q_t(\wh c)}{\wh q_{t,j}(\wh c)} + \frac{1}{\wh q_{t,j}(\wh c)}\left( \sqrt{\frac{\log(4k/\delta)}{2m}} + \frac{q_t(\wh c)}{q_{t,j}(\wh c)}\sqrt{\frac{\log(4k/\delta)}{2n \cdot p_s(y = j)}}\right) \,. \label{eq:lem1}
    \end{align}
    Since $\alpha_j^* \le \frac{q_t(\wh c)}{q_{t,j}( \wh c)} $, we have 
    \begin{align}
        \alpha_j^* \le \frac{q_t(\wh c)}{q_{t,j}( \wh c)} \le \frac{\wh q_t(\wh c)}{\wh q_{t,j}(\wh c)} + \frac{1}{\wh q_{t,j}(\wh c)}\left( \sqrt{\frac{\log(4k/\delta)}{2m}} + \frac{q_t(\wh c)}{q_{t,j}(\wh c)}\sqrt{\frac{\log(4k/\delta)}{2n \cdot p_s(y = j)}}\right) \,. \label{eq:lower_bound_step1}
    \end{align}
    
    Using \eqref{eq:upper_bound}, we obtain a very loose upper bound on $\frac{\wh q_t(\wh c)}{\wh q_{t,j}(\wh c)}$. Assuming $\min(n \cdot p_s(y = j), m) \ge \frac{2\log(4k/\delta)}{q_{t,j}^2(c^*_j)}$, we have $\frac{\wh q_t(\wh c)}{\wh q_{t,j}(\wh c)} \le \alpha_j^* + 4 + 2\gamma \le 5 + 2\gamma$. Using this in \eqref{eq:lower_bound_step1}, we have 
    \begin{align}
        \alpha_j^* \le \frac{\wh q_t(\wh c)}{\wh q_{t,j}(\wh c)} + \frac{1}{\wh q_{t,j}(\wh c)}\left( \sqrt{\frac{\log(4k/\delta)}{2m}} + (5+2\gamma)\sqrt{\frac{\log(4k/\delta)}{2n \cdot p_s(y = j)}}\right) \,. 
    \end{align}
    Moreover, as $\wh c \ge c^\prime$,  we have $\wh q_{t,j}(\wh c) \ge \frac{\gamma}{2 + \gamma} \wh q_{t,j}(c^*_j)$ and hence, 
    \begin{align}
        \alpha_j^* - \frac{\gamma + 2}{\gamma \wh q_{t,j}(c^*_j)}\left( \sqrt{\frac{\log(4k/\delta)}{2m}} + (5+2\gamma)\sqrt{\frac{\log(4k/\delta)}{2n \cdot p_s(y = j)}}\right) \le  \frac{\wh q_t(\wh c)}{\wh q_{t,j}(\wh c)} = \wh \alpha_j \,. 
    \end{align}
    As we assume $n \cdot p_s(y = j) \ge \frac{2\log(4k/\delta)}{q_{t,j}^2(c^*_j)}$, we have $\wh q_{t,j}(c^*_j) \le q_{t,j}(c^*_j)/ 2$, which implies the following lower bound on $\alpha$: 
    \begin{align}
        \alpha_j^* - \frac{2\gamma + 4}{\gamma q_{t,j}(c^*_j)}\left( \sqrt{\frac{\log(4k/\delta)}{2m}} + (5+2\gamma)\sqrt{\frac{\log(4k/\delta)}{2n \cdot p_s(y = j)}}\right) \le \wh \alpha_j \,.  \label{eq:lower_bound}
    \end{align}
    
    Combining lower bound \eqref{eq:lower_bound} and upper bound \eqref{eq:upper_bound}, we get 
    \begin{align}
        \abs{\wh \alpha_j - \alpha_j^*} \le {l_j}\left( \sqrt{\frac{\log(4k/\delta)}{2m}} + \sqrt{\frac{\log(4k/\delta)}{2n \cdot p_s(y = j)}}\right) \,,
    \end{align}
    for some constant $l_j$. Additionally by our assumption of OSLS problem $p_s( y= j) > c/k$ for some constant $c > 0$, we have 
    \begin{align}
        \abs{\wh \alpha_j - \alpha_j^*} \le {l^\prime_j}\left( \sqrt{\frac{\log(4k/\delta)}{2m}} + \sqrt{\frac{k\log(4k/\delta)}{2n}}\right) \,,
    \end{align}
    for some constant $l_j^\prime$. 

    Combining the above obtained bound for all $j \in \out_s$ with union bound, we get with probability at least $1-\delta$,  
    \begin{align}
        \sum_{j \in \out_s} \abs{\wh \alpha_j - \alpha_j^*} \le {l^\prime_\text{max}} \left( \sqrt{\frac{k^2\log(4k/\delta)}{2m}} + \sqrt{\frac{k^3\log(4k/\delta)}{2n}}\right) \,,
    \end{align}
    where $l_\text{max}^\prime = \max l_j^\prime$. Now, note that for each $j \in \out_s$, we have $q_{t}(c) = p_t(y = j) \cdot q_{t, j}(c) + (1 - p_t(y = j)) \cdot q_{t, -j}(c)$. Hence $\alpha_j^* = p_t(y = j) + (1 - p_t(y = j)) \cdot q_{t, -j}(c)/\cdot q_{t, j}(c)$. Plugging this in, we get the desired bound. 
\end{proof}

\begin{figure*}[t!]
  \centering 
  \subfigure{\includegraphics[width=0.5\linewidth]{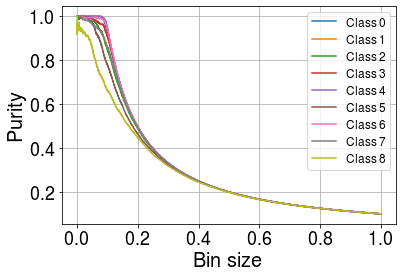}}
  \caption{Purity and size (in terms of fraction of unlabeled samples) in the top bin for all classes. 
  Bin size refers to the fraction of examples in the top bin. 
  With purity, we refer to the fraction of examples from a specific class $j$ in the top bin.   
  Results with ResNet-18 on CIFAR10 OSLS setup. Details of the setup in \appref{subsec:datasets_app}. As the bin size increases for all classes the purity decreases.   }
  \label{fig:loss_bin_property}
\end{figure*}

\update{Intuitively, the guarantees in the previous theorem capture the tradeoff due to the proportion of negative examples in the top bin (bias) versus the proportion of positives in the top bin (variance). As a corollary, we can show convergence to true mixture if there exits $c^*_j$ for all $j \in \out_s$ such that $q_{t, -j}(c^*_j, j) = 0$ and $q_{t, j}(c^*_j, j) \ge \epsilon$ for some $\epsilon > 0$. Put simply, efficacy of BBE relies on existence of a threshold on probability scores assigned by the classifier such that the examples mapped to a score greater than the threshold are *mostly* positive. Using the terminology from \citet{garg2021PUlearning}, we refer to this as the top bin property. Next, we provide empirical evidence of this property while using the source classifier to estimate the relative proportion of target label marginal among source classes.}

\textbf{Empirical evidence of the top bin property {} {}} 
We now empirically validate 
the positive pure top bin property
(\figref{fig:loss_bin_property}). We include results with Resnet-18 trained on the CIFAR10 OSLS setup same as our main experiments. 
We observe that source classifier approximately satisfies 
the positive pure top bin property for small enough top bin sizes. 


\subsection{Formal statement and proof of Theorem 2} \label{sec:proof_thm2}

In this section, we show that in population 
on a separable Gaussian dataset,
CVIR will recover the optimal classifier. 
Note that here we consider a binary classification 
problem similar to the one in Step 5 in \algoref{alg:PULSE}.
Since we are primarily interested in analysing 
the iterative procedure for obtaining domain discriminator classifier,
we assume that $\alpha$ is known.

In population, we have access to 
positive distribution (i.e., $p_p$), 
unlabeled distribution 
(i.e., $p_u \defeq \alpha p_p + (1-\alpha) p_n$), 
and mixture coefficient $\alpha$. Our goal is to recover 
the classifier that discriminates $p_p$ versus $p_n$.

For ease, we re-introduce some notation. 
For a classifier $f$ 
and loss function $\ell$, define 
\begin{align}
\vr_\alpha (f) = \inf \{ \tau \in \Real: \Prob_{x\sim p_u}( \ell(x, -1; f) \le \tau) \ge 1 - \alpha \} \,.     \label{eq:VIR_def}
\end{align}
Intuitively, $\vr_\alpha(f)$ identifies a 
threshold $\tau$ to capture bottom $1-\alpha$ fraction of the loss $\ell(x, -1)$ 
for points $x$ sampled from $p_u$.  
Additionally, define CVIR loss as 
\begin{align}
\calL (f, w) = \alpha \Expt{p_p}{\ell(x,1; f)} + \Expt{p_u}{w(x)\ell(x,-1; f)}\,,     \label{eq:CVIR_loss}
\end{align}
for classifier $f$ and some weights $w(x) \in\{0,1\}$.
Recall that given a classifier $f_t$ at an iterate $t$, CVIR procedure proceeds as follows: 
\begin{align}
    w_t(x) &= \indict{\ell(x, -1; f_t) \le \vr_\alpha(f_t)} \,,\label{eq:step1_CVIR_app}\\ 
    f_{t+1} &= f_t - \eta \grad \calL_f (f_t, w_t) \,. \label{eq:step2_CVIR_app}
\end{align}

We assume a data generating setup with where the 
support of positive and negative data is completely disjoint.
We assume that $x$ are drawn from two half multivariate 
Gaussian with mean zero and identity covariance, i.e., 
\begin{align*}
    x \sim p_p \Leftrightarrow x = \gamma_0 \theta_{\opt} + z | \, \theta_\opt^T z \ge 0,  \text{ where } z \sim \calN(0, I_d) \\ 
    x \sim p_n \Leftrightarrow x = -\gamma_0 \theta_{\opt} + z | \, \theta_\opt^T z < 0,  \text{ where } z \sim \calN(0, I_d) 
\end{align*}
Here $\gamma_0$ is the margin and $\theta_\opt \in \Real^d$ is the true separator. 
Here, we have access to distribution $p_p$ and $p_u = \alpha p_p + (1 - \alpha) p_n$. 
Assume $\ell$ as the logistic loss. 
For simplicity, we will denote $\calL(f_{\theta_t}, w_t)$ with $\calL(\theta_t, w_t)$. 

\begin{theorem}[Formal statement of \thmref{thm:informal_CVIR}] \label{thm:formal_CVIR}
In the data setup described above, a linear classifier 
$f(x; \theta) = \sigma\left(\theta^Tx \right)$ initialized at some $\theta_0$ such that $\calL(\theta_0, w_0) < \log(2)$,
trained with CVIR procedure as in equations 
\eqref{eq:step1_CVIR_app}-\eqref{eq:step2_CVIR_app} will converge to an optimal positive versus negative classifier. 
\end{theorem}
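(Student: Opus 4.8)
The plan is to treat the coupled iteration \eqref{eq:step1_CVIR_app}--\eqref{eq:step2_CVIR_app} as a descent method on the CVIR objective and then use the explicit Gaussian geometry to pin down the only possible limit. \textbf{Step 1 (the iteration descends $\calL$).} For a fixed weight $w$, $\theta\mapsto\calL(\theta,w)$ is convex --- a nonnegatively weighted average of logistic losses of linear functions --- and $L$-smooth with $L=\tfrac12\lambda_{\max}\!\big(\Expt{p_u}{xx^\top}\big)<\infty$, since the data are sub-Gaussian. The key point about \eqref{eq:step1_CVIR_app} is that $w_t$ \emph{minimizes} $\calL(f_t,\cdot)$ over $\{w:\Real^d\to\{0,1\},\ \Expt{p_u}{w(x)}=1-\alpha\}$: to minimize $\Expt{p_u}{w(x)\,\ell(x,-1;f_t)}$ one retains the smallest values of the nonnegative integrand, which is exactly the thresholding at $\vr_\alpha(f_t)$, and the $p_p$ term is independent of $w$. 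Hence, with step size $\eta<2/L$, the descent lemma gives $\calL(f_{t+1},w_{t+1})\le\calL(f_{t+1},w_t)\le\calL(f_t,w_t)-c\,\norm{\grad_\theta\calL(f_t,w_t)}{2}^2$ for some $c>0$; since $\calL\ge0$, the values $\calL(f_t,w_t)$ are nonincreasing, converge to some $\calL_\infty\ge0$, stay strictly below $\log2$ by the hypothesis on $\theta_0$, and $\grad_\theta\calL(f_t,w_t)\to0$.

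\textbf{Step 2 (no finite limit point --- the crux).} Suppose a subsequence $\theta_{t_k}\to\bar\theta$ with $\bar\theta$ finite. Because $p_u$ has a density, the $(1-\alpha)$-quantile of $\theta^\top x$ is continuous in $\theta\neq0$, so $w_{\theta_{t_k}}\to w_{\bar\theta}$ $p_u$-a.e., and dominated convergence yields $\grad_\theta\calL(\bar\theta,w_{\bar\theta})=0$ and $\calL(\bar\theta,w_{\bar\theta})\le\calL_\infty<\log2$. By convexity $\bar\theta$ globally minimizes $\calL(\cdot,w_{\bar\theta})$, i.e.\ it solves the weighted logistic problem whose ``$+1$'' mass is $\alpha p_p$ on $\mathrm{supp}(p_p)=\{x:\theta_\opt^\top x\ge b\}$ and whose ``$-1$'' mass is $p_u$ restricted to the half-space $U_{\bar\theta}=\{x:\bar\theta^\top x\le\vr_\alpha\}$. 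Since $\mathrm{supp}(p_p)$ and $\mathrm{supp}(p_n)=\{x:\theta_\opt^\top x\le-b\}$ are linearly separable, a \emph{finite} minimizer can exist only if $U_{\bar\theta}$ already carries $p_p$-mass; but that mass then carries both the $+1$ and the $-1$ weight $\alpha p_p$, forcing $\calL(\bar\theta,w_{\bar\theta})\ge 2\log2\cdot\alpha\,\Prob_{p_p}(U_{\bar\theta})$, and I would combine this with the geometry of the half-space $U_{\bar\theta}$ and the sublevel-set / initialization constraint $\calL(f_0,w_0)<\log2$ to obtain a contradiction. Hence no subsequence of $\{\theta_t\}$ is bounded, i.e.\ $\norm{\theta_t}{2}\to\infty$.

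\textbf{Step 3 (the direction converges to $\theta_\opt$) and conclusion.} Let $\hat\theta_t=\theta_t/\norm{\theta_t}{2}$ and let $\hat\theta_t\to u$ along a subsequence, $\norm{u}{2}=1$. Off $\{u^\top x=0\}$ we have $1-\sigma(\theta_t^\top x)\to\indict{u^\top x<0}$ and $\sigma(\theta_t^\top x)\to\indict{u^\top x>0}$, so dominated convergence (Gaussian tails) gives $0=\lim\grad_\theta\calL(f_t,w_t)=-\alpha\,\Expt{p_p}{\indict{u^\top x<0}\,x}+\Expt{p_u}{w_u(x)\,\indict{u^\top x>0}\,x}$, where $w_u$ keeps the $(1-\alpha)$ fraction of $p_u$ with smallest $u^\top x$. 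The first term has strictly negative component along $u$ unless $\Prob_{p_p}(u^\top x<0)=0$; since $\mathrm{supp}(p_p)$ is unbounded in every direction orthogonal to $\theta_\opt$, this forces $u=\theta_\opt$ (and then $w_{\theta_\opt}$ retains exactly $\mathrm{supp}(p_n)\subset\{\theta_\opt^\top x<0\}$, so the second term vanishes, confirming consistency). As $\theta_\opt$ is the unique unit separator of the two half-Gaussians, every subsequential limit of $\hat\theta_t$ equals $\theta_\opt$, so $\hat\theta_t\to\theta_\opt$; combined with $\norm{\theta_t}{2}\to\infty$ and the margin this gives $\calL(f_t,w_t)\to0$, i.e.\ $f_t$ converges to the optimal positive-versus-negative classifier. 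The main obstacle is Step 2: smoothness makes ``$\grad_\theta\calL(f_t,w_t)\to0$'' automatic, but because the retained set $U_\theta$ is itself a $\theta$-dependent half-space, a misaligned $\theta$ can make the per-epoch convex subproblem non-separable and thus admit a finite minimizer --- showing that the initialization $\calL(f_0,w_0)<\log2$ together with the disjoint-support geometry forbids any such self-consistent finite fixed point is the real work.
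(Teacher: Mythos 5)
Your Step 1 matches the paper's Part 1 essentially verbatim (smoothness of $\calL(\cdot,w)$, the observation that the hard-thresholding weight update can only decrease the objective, hence monotone descent and $\grad_\theta\calL(\theta_t,w_t)\to 0$). From there, however, you take a genuinely different route --- rule out finite limit points, then establish directional convergence of $\theta_t/\norm{\theta_t}{2}$ --- and the crux of that route, your Step 2, is left as a sketch that does not close. The lower bound you propose, $\calL(\bar\theta,w_{\bar\theta})\ge 2\log 2\cdot\alpha\,\Prob_{p_p}(U_{\bar\theta})$, can only contradict $\calL<\log 2$ if $\alpha\,\Prob_{p_p}(U_{\bar\theta})>1/2$, which is impossible whenever $\alpha\le 1/2$ and need not hold otherwise: a finite stationary point of the reweighted convex subproblem could a priori exist with an arbitrarily small amount of $p_p$-mass caught in the retained half-space $U_{\bar\theta}$, in which case the minimum of that subproblem is small, not large. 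You flag this yourself as ``the real work,'' but that missing step is the substance of the theorem.

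The paper closes exactly this gap by a different device: it decomposes $\grad_\theta\calL(\theta,w_\theta)$ into three integrals --- over $P\setminus P_a$, over $N\setminus N_r$, and over the incorrectly retained positive region $P_a$ --- and shows each has nonpositive inner product with $\theta_\opt$, strictly negative unless the classifier is already optimal. The only delicate term is the $P_a$ integral, whose integrand $(2f(x;\theta)-1)$ changes sign when $\alpha<1/2$; the paper handles it by splitting $P_a$ into pieces and using the mass-matching constraint $\alpha\, p_p(P_a)=(1-\alpha)\, p_n(N_r)$ together with a symmetry argument to cancel the positive contribution. Since the total gradient tends to zero and its inner product with $\theta_\opt$ is a sum of nonpositive terms, each term must vanish individually, forcing $p_p(P_a)\to 0$ and $f(\cdot;\theta_t)$ to the optimal positive-versus-negative classifier; the hypothesis $\calL(\theta_0,w_0)<\log 2$ serves only to exclude the degenerate stationary point $\theta=\mathbf{0}$. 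If you wish to keep your limit-point/direction argument (which, if completed, would give the stronger max-margin-style conclusion the paper only remarks on), you still need a strict-descent-direction statement of this kind at every suboptimal $\theta$; your separability-based bound does not supply one.
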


\begin{proof}[Proof of \thmref{thm:formal_CVIR}]
The proof uses two key ideas. One, at convergence of the CVIR procedure, 
the gradient of CVIR loss in \eqref{eq:CVIR_loss} converges to zero. 
\update{Second, for any classifier $\theta$ that is not optimal for positive versus negative classification, we show that
the CVIR gradient in \eqref{eq:CVIR_loss} is non-zero.} 

\textbf{Part 1 {} {}} We first show that the loss function $\calL( \theta, w)$ in \eqref{eq:CVIR_loss} 
is $2$-smooth with respect to $\theta$ for fixed $w$. Using gradient descent lemma with the decreasing property
of loss in \eqref{eq:step1_CVIR_app}-\eqref{eq:step2_CVIR_app}, we show that gradient converges to zero eventually. 
Considering gradient of $\calL$, we have
\begin{align}
\grad_\theta \calL( \theta, w )  = \alpha \Expt{p_p}{(f(x; \theta) - 1)x}  +\Expt{p_u}{ w(x)(f(x; \theta) - 0)x} \,.     
\end{align}
Moreover, $\grad^2 \calL$ is given by
\begin{align}
\grad_\theta^2 \calL( \theta, w )  =  \alpha \Expt{p_p}{\grad f(x; \theta) x x^T}  + \Expt{p_u}{ w(x) \grad f(x; \theta) x x^T} \,.     
\end{align}
Since $\grad f(x; \theta) \le 1$, we have $v^T \grad^2 \calL v \le 2$ for all unit vector $v \in R^d$. 
Now, by gradient descent lemma if $\eta \le \nicefrac{1}{2}$, at any step $t$ we have, 
$\calL(\theta_{t+1}, w_t) \le \calL(\theta_{t}, w_{t})$. Moreover, by definition of 
$\vr_\alpha(\theta)$ in \eqref{eq:VIR_def} and update \eqref{eq:step1_CVIR_app}, we have 
$\calL(\theta_{t+1}, w_{t+1}) \le \calL(\theta_{t+1}, w_{t})$. Hence, we have 
$\calL(\theta_{t+1}, w_{t+1}) \le \calL(\theta_{t}, w_{t})$. Since, the loss is lower 
bounded from below at $0$, for every $\epsilon > 0$, 
we have for large enough $t$ (depending on $\epsilon$),  
$\norm{\grad_\theta \calL(\theta_t, w_t)}{2} \le \epsilon$, i.e., 
$\norm{\grad_\theta \calL(\theta_t, w_t)}{2} \to 0$ as $t\to \infty$.

\textbf{Part 2 {} {}} Consider a general scenario when 
$\gamma > 0$. 
Denote the input domain of $p_p$ and $p_n$ as 
$P$ and $N$ respectively. 
At any step $t$, for all points $x \in \calX$ such that $p_u(x) > 0$ 
and $w_t(x) = 0$, we say that $x$ is rejected from $p_u$.  
We denote 
the incorrectly rejected subdomain of $p_n$ 
from $p_u$
as $N_r$ and the incorrectly accepted 
subdomain of $p_p$ from $p_u$ as $P_a$. Formally, 
$N_r = \{ x: p_n(x) > 0\text{ and }w_t(x) = 0 \}$
and 
$P_a = \{ x: p_p(x) > 0\text{ and }w_t(x) = 1 \}$. 
We will show that $p_p(P_a) \to 0$ as $t \to \infty$, 
%
and hence, we will recover the 
optimal classifier where we reject none of 
$p_u$ incorrectly.

Observe that at any time $t$, for fixed $w_t$ and $\theta = \theta_t$, the gradient of CVIR loss in 
\eqref{eq:CVIR_loss}, can be expressed as: 
\begin{align*}
\grad_\theta \calL( \theta, w_t )  = & \alpha \underbrace{ \int_{x\in P\setminus P_a } (f(x; \theta) - 1)x \cdot p_p(x) dx}_{\RN{1}}  +  (1- \alpha)\underbrace{ \int_{x\in N \setminus N_r } (f(x; \theta) - 0)x \cdot p_n(x) dx}_{\RN{2}} \\ & + \alpha  \underbrace{\int_{x\in P_a } (2f(x; \theta) - 1)x \cdot p_p(x) dx}_{\RN{3}}  \,. \label{eq:grad_CVIR} \numberthis
\end{align*}
Note that for any $x, \theta$,  $0 \le f(x; \theta) \le 1$.  Now consider inner product of individual terms above with $\theta_\opt$, we get 
\begin{align*}
    \inner{\RN{1}}{\theta_\opt} &=  \int_{x\in P\setminus P_a } (f(x; \theta) - 1)x^T \theta_\opt \cdot p_p(x) dx \le - \gamma_0 \int_{x\in P\setminus P_a } (1 - f(x; \theta)) \cdot p_p(x) dx\,, \label{eq:term1_ineq} \numberthis \\
    \inner{\RN{2}}{\theta_\opt} &= \int_{x\in N \setminus N_r } (f(x; \theta) - 0)x^T \theta_\opt \cdot p_n(x) dx \le -\gamma_0 \int_{x\in N \setminus N_r } (f(x; \theta) - 0) \cdot p_n(x) dx\,, \label{eq:term2_ineq} \numberthis \\
    \inner{\RN{3}}{\theta_\opt} &= \int_{x\in P_a } (2f(x; \theta) - 1)x^T \theta_\opt \cdot p_p(x) dx \le -\gamma_0 \int_{x\in P_a } ( 1- 2f(x; \theta)) \cdot p_p(x) dx \,. \label{eq:term3_ineq} \numberthis \\
\end{align*}

\update{Now, we will argue that individually all the three LHS terms in \eqref{eq:term1_ineq}, \eqref{eq:term2_ineq}, \eqref{eq:term3_ineq} are negative for all classifiers that do not separate positive versus negative data begining from $\calL(\theta_0, w_0) < \log(2)$. And hence, we show that these terms approach zero individually only when the linear classifier approaches an optimal positive versus negative classifier.}

\update{First, we consider the term in the LHS of equation \eqref{eq:term3_ineq}. When $\alpha = 0.5$, we have $\vr_\alpha(\theta) =  0.5$ and hence, $( 1- 2f(x; \theta)) \le 0$ for $x \in P_a$. When $\alpha > 0.5$, $\vr_\alpha(\theta) <  0.5$ because, the proportion $\alpha \cdot p_p(P_a)$ matches with proportion $(1-\alpha) \cdot p_n(N_r)$. Hence, we again have $( 1- 2f(x; \theta)) \le 0$ for $x \in P_a$.}

\update{To handle the case with $\alpha < 0.5$, we use a symmetry of he distribution to  because $\vr_\alpha(\theta) > 0.5$ and $(1 - 2f(x; \theta))$ can take positive and negative values. However, note that $\vr_\alpha(\theta)$ will be selected such that the proportion $\alpha \cdot p_p(P_a)$ matches with proportion $(1-\alpha) \cdot P_n(N_r)$. In particular, we can split $P_a$ into three disjoint sets $P_a^{(1)}$, $P_a^{(2)}$, and $P_a^{(3)}$ such that for all $x \in P_a^{(1)}$ we have $f(x; \theta) >= 0.5$, for all $x \in P_a^{(2)} \cup P_a^{(3)}$ we have $f(x; \theta) < 0.5$ and $p_p(P_a^{(3)}) = \frac{\alpha}{1-\alpha} p_p(N_r)$. Additionally, by symmetry of distribution around $\theta$, we have $\int_{x\in P_a^{(1)}} ( 1- 2f(x; \theta)) \cdot p_p(x) dx + \int_{x\in P_a^{(2)}} ( 1- 2f(x; \theta)) \cdot p_p(x) dx = 0$. Hence, we get
\begin{align*}
    \inner{\RN{3}}{\theta_\opt} &\le -\gamma_0 \int_{x\in P_a } ( 1- 2f(x; \theta)) \cdot p_p(x) dx = -\gamma_0 \int_{x\in P_a^{(3)}} ( 1- 2f(x; \theta)) \cdot p_p(x) dx \,. \label{eq:term4_ineq} \numberthis
\end{align*}
Combining all three cases, we get $\inner{\RN{3}}{\theta_\opt} < 0$ when $p_p(P_a) > 0$. 
}

\update{Now we consider LHS terms in \eqref{eq:term1_ineq} and \eqref{eq:term2_ineq}. Note that for all $x \in P \cup N$,  we have $0 \le f(x) \le 1$. Thus with $p_p( P \setminus P_a) > 0$, $\inner{\RN{1}}{\theta_\opt} \to 0$ when $f(x, \theta) \to 1$ for all $x \in P \setminus P_a$. Similarly with $p_n( N \setminus N_r) > 0$, $\inner{\RN{2}}{\theta_\opt} \to 0$ when $f(x, \theta) \to 0$ for all $x \in N \setminus N_r$.}


From part 1, for gradient $\norm{\grad_\theta \calL( \theta_t, w_t)}{2}$ to converge to zero as $t \to \infty$, we must have that LHS in equations \eqref{eq:term1_ineq}, \eqref{eq:term2_ineq}, 
and \eqref{eq:term3_ineq} converges to zero individually. 
Since CVIR loss decreases continuously and
$\calL(\theta_0, w_0) < \log(2)$, we have that $p_p(P_a) \to 0$ and hence,  $f(x, \theta) \to 1$ for all $x \in P$ and $f(x, \theta) \to 0$ for all $x \in N$.

%
%
\end{proof}

The above analysis can be extended to show convergence to max-margin classifier by using arguments from 
\citet{soudry2018implicit}. In particular, as $p_p(P_a) \to 0$, we can show that 
$\theta_t/\norm{\theta_t}{2}$ will converge to the max-margin classifier 
for $p_p$ versus $p_n$, i.e.,  $\theta_\opt$ if $p_p(P_a) \to 0$ in finite number of steps. 
Note that we need an assumption that the initialized model 
$\theta_0$ is strictly better than a model that randomly guesses or initialized at all zeros. 
This is to avoid convergence to the local minima of $\theta = \bf{0}$ with CVIR training. 
This assumption 
is satisfied when the classifier is initialized in a way such that 
$\inner{\theta_0}{\theta_\opt} > 0$. In general, we need 
a weaker assumption that during training with any randomly 
initialized classifier, there exists an iterate $t$ during
CVIR training such that $\inner{\theta_t}{\theta_\opt}  > 0$. 

\subsection{Extension of Theorem 1} \label{sec:BBE_ext_app}

We  also extend the analysis in the proof of \thmref{thm:formal_BBE} to Step 5 of \algoref{alg:PULSE}
to show convergence of estimate $\wh p_t(y=k+1)$ to true prevalence
$p_t(y = k+1)$. In particular, we show that the estimation error for prevalence of the novel class will primarily depend on sum of two terms: (i) error in approximating the label shift corrected source distribution, i.e., $p_s^\prime(x)$; and (ii) purity of the top bin of the domain discriminator classifier.

Before formally introducing the result, we introduce some notation. Similar to before, 
given probability density function $p$ 
and a domain discriminator classifier $f : \calX \to \Delta$,
define a function $q = \int_{ A(z)} p(x) dx$, 
where $A(z) = \{x \in \inpt: f(x) \ge z\}$ for all $z\in [0,1]$. 
Intuitively, $q(z)$ captures the 
cumulative density of points in a top bin, i.e., 
the proportion of input domain 
that is assigned a value larger than $z$ 
by the function $f$
in the transformed space. 
We denote  $p_t(x| y=k+1)$ with $p_{t, k+1}$.
For each pdf $p_t$, $p_{t, k+1}$, and $p_s^\prime$, we define 
$q_t$, $q_{t,k+1}$, and $q_s^\prime$ respectively. 
Note that since
We define an empirical estimator $\wh q(z)$
given a set $X = \{x_1, x_2, \ldots, x_n\}$
sampled iid from $p(x)$. Let $Z = f(X)$. 
Define $ \wh q(z) = \sum_{i=1}^n \indict{z_i \ge z}/{n}$.

Recall that in Step 5 of \algoref{alg:PULSE}, 
to estimate the proportion of novel class, 
we have access to re-sampled data from approximate label shift
corrected source distribution $\wh q_s^\prime(x)$. 
Assume that we the size of re-sampled dataset 
is $n$. 


\begin{theorem} \label{thm:BBE_step5}
Define $c^* = \argmin_{c \in [0,1]} \left( {q_{t, k+1}(c)}/{\wh q_{s}^\prime(c)} \right)$. 
Assume $\min(n, m) \ge \left( \frac{2\log(4/\delta)}{({\wh q_{s}^\prime}(c^*))^2} \right)$.
Then, for every $\delta > 0$, $[\wh p_t]_{k+1} \defeq \wh p_t (y = k+1)$ in Step 5 of \algoref{alg:PULSE} satisfies with probability 
at least $1-\delta$, we have: 
\begin{align*}
    \abs{[\wh p_t]_{k+1} - [p_t]_{k+1}} \le \left(1 - [p_t]_{k+1}\right) & \underbrace{\frac{\abs{q_s^\prime(c^*) -  \wh q_s^\prime(c^*)}}{\wh q_{s}^\prime(c^*)}}_{\substack{\text{Error in estimating} \\ \text{label shift corrected source}}} +  [p_t]_{k+1} \underbrace{\left( \frac{q_{t, k+1}(c^*)}{\wh q_{s}^\prime(c^*)} \right)}_{\substack{\text{Impurity in} \\ \text{top bin}}} \\ & +  \calO\left(\sqrt{\frac{\log(4/\delta)}{n}} + \sqrt{\frac{\log(4/\delta)}{m}}\right)  \,.
\end{align*}
%
\end{theorem}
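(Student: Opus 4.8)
The plan is to mirror the proof of \thmref{thm:formal_BBE}, specialized to the single PU instance that Step~5 of \algoref{alg:PULSE} constructs, with one extra term that records how well the re-sampled source approximates the true label-shift-corrected source $p_s^\prime$. The starting point is the mixture identity \eqref{eq:label_shift_corr}, which gives $p_t(x) = (1-[p_t]_{k+1})\,p_s^\prime(x) + [p_t]_{k+1}\,p_t(x|y=k+1)$; pushing this forward through the domain discriminator $f_d$ and integrating over the top bin $\{x : f_d(x)\ge c\}$ yields, for every threshold $c$, the scalar identity $q_t(c) = (1-[p_t]_{k+1})\,q_s^\prime(c) + [p_t]_{k+1}\,q_{t,k+1}(c)$, where $q_{t,k+1}\ge 0$. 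Recall that BBE (\algoref{alg:MPE_PU}) outputs $\wh\alpha = \wh q_t(\wh c)/\wh q_s^\prime(\wh c)$ at the threshold $\wh c$ minimizing the upper confidence bound, and Step~5 sets $[\wh p_t]_{k+1} = 1-\wh\alpha$, so since $[\wh p_t]_{k+1}-[p_t]_{k+1} = (1-[p_t]_{k+1}) - \wh\alpha$ it suffices to bound $|\wh\alpha - (1-[p_t]_{k+1})|$.

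Following Part~1 of the proof of \thmref{thm:formal_BBE} essentially verbatim (with $k=1$, the ``positive'' distribution being the re-sampled source whose population pushforward cumulative is $\wh q_s^\prime$, and the ``unlabeled'' distribution being $p_t$), I would first show that the UCB-minimization property forces $\wh c$ to lie below the threshold $c^\prime$ at which $\wh q_s^\prime(c^\prime) = \frac{\gamma}{2+\gamma}\wh q_s^\prime(c^*)$, giving the denominator lower bound $\wh q_s^\prime(\wh c)\ge \frac{\gamma}{2+\gamma}\wh q_s^\prime(c^*)$ needed to keep the ratio estimator stable. Here the sample-size assumption $\min(n,m)\ge 2\log(4/\delta)/(\wh q_s^\prime(c^*))^2$ plays the same role as in \thmref{thm:formal_BBE}: via DKW it upgrades the empirical estimate of $\wh q_s^\prime$ at $c^*$ to within a factor two of its population value. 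Then, invoking \lemref{lem:ucb} at both $\wh c$ and $c^*$ and using that $\wh c$ minimizes the UCB, I would obtain matching upper and lower bounds $|\wh\alpha - q_t(c^*)/\wh q_s^\prime(c^*)| \le \calO\!\left(\sqrt{\log(4/\delta)/n} + \sqrt{\log(4/\delta)/m}\right)$, with the hidden constant depending on $\gamma$ and $1/\wh q_s^\prime(c^*)$.

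The final step is purely algebraic: substituting the pushforward mixture identity into the reference ratio gives $q_t(c^*)/\wh q_s^\prime(c^*) = (1-[p_t]_{k+1}) + (1-[p_t]_{k+1})\frac{q_s^\prime(c^*)-\wh q_s^\prime(c^*)}{\wh q_s^\prime(c^*)} + [p_t]_{k+1}\frac{q_{t,k+1}(c^*)}{\wh q_s^\prime(c^*)}$. Combining this with the concentration bound from the previous paragraph and the triangle inequality yields exactly the claimed bound, with the two non-vanishing terms being the ``error in estimating the label-shift-corrected source'' (the bias inherited from Step~3, controlled separately via \thmref{thm:formal_BBE} and \appref{sec:BBE_ext_app}) and the ``impurity in the top bin'' of $f_d$.

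I expect the main obstacle to be bookkeeping rather than a new idea: one must be careful that \lemref{lem:ucb} and DKW only control the empirical cumulative of the \emph{re-sampled} source around its own population $\wh q_s^\prime$, not around the true $q_s^\prime$, so the gap $q_s^\prime(c^*)-\wh q_s^\prime(c^*)$ must be carried explicitly through Parts~1 and 2 rather than absorbed, and the sample-size condition and the reference threshold $c^*$ must be stated against $\wh q_s^\prime$ (as in the statement) for the concentration and UCB-minimization arguments to go through unchanged. A secondary point is checking that the pushforward mixture identity is used only at the single threshold $c^*$, so that no uniform control over $c$ of the approximation $q_s^\prime \approx \wh q_s^\prime$ is needed.
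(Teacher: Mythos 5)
Your proposal is correct and follows essentially the same route as the paper: the paper likewise treats the concentration step as a direct corollary of Theorem 1 of \citet{garg2021PUlearning} applied to the single PU instance (positive $=$ re-sampled source with population pushforward $\wh q_s^\prime$, unlabeled $=$ $p_t$), and then performs exactly your add-and-subtract decomposition of $q_t(c^*)/\wh q_s^\prime(c^*)$ into the $(1-[p_t]_{k+1})$ term, the source-estimation-error term, and the top-bin-impurity term. Your extra care about the concentration being stated around $\wh q_s^\prime$ rather than $q_s^\prime$ matches what the paper implicitly relies on.
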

\begin{proof}
We can simply prove this theorem as Corollary of Theorem 1 from \citet{garg2021PUlearning}. Note that $q_t(c^*) = ( 1 - p_t(y = k+1 )) \cdot  q^\prime_{s}(c^*) + p_t(y = k+1) \cdot q_{t, k+1} (c^*)$. Adding and subtracting $( 1 - p_t(y = k+1 )) \cdot  \wh q^\prime_{s}(c^*)$ and dividing by $\wh q^\prime_s$, we get $\frac{q_t(c^*)}{\wh q^\prime_s(c^*)} = ( 1 - p_t(y = k+1 )) \cdot \frac{\abs{q_s^\prime(c^*) -  \wh q_s^\prime(c^*)}}{\wh q_{s}^\prime(c^*)} + ( 1 - p_t (y = k+1)) +   p_t(y = k+1) \cdot \frac{q_{t, k+1}(c^*)}{\wh q_s^\prime(c^*)}$. Plugging in bound for LHS from Theorem 1 in \citet{garg2021PUlearning}, we get the desired result. 
\end{proof}

\subsection{Extensions of Theorem 2 to general separable datasets} \label{sec:CVIR_ext_app}

For general separable datasets, CVIR has undesirable property of getting stuck at local optima where gradient in  \eqref{eq:term3_ineq} can be zero by maximizing entropy on the subset $P_a$ which is (incorrectly) not-rejected from $p_u$ in CVIR iterations. Intuitively, if the classifier can perfectly separate $P\setminus P_a$ and $N \setminus N_r$ and at the same time maximize the entropy of the region $P_a$, then the classifier trained with CVIR can get stuck in this local minima. 

However, we can extend the above analysis with some modifications to the CVIR procedure. Note that when the CVIR classifier maximizes the entropy on $P_a$. it makes an error on points in $P_a$. Since, we have access to the distribution $p_p$, we can add an additional regularization penalty to the CVIR loss that ensures that the converged classifier with CVIR correctly classifies all the points in $p_p$. With a large enough regularization constant for the supervised loss on $p_p$, we can dominate the gradient term in \eqref{eq:term3_ineq} which pushes CVIR classifier to correct decision boundary even on $P_a$ (instead of maximizing entropy). We leave formal analysis of this conjecture for future work. 
Since we warm start CVIR training with a positive versus unlabeled classifier, if we obtain an initialization close enough to the true positive versus negative decision boundary, by monotonicity property of CVIR iterations, we may never get stuck in such a local minima even without modifications to loss. 


\begin{figure*}[t!]
  \centering 
  \subfigure[Accuracy on validation positive versus negative data]{\includegraphics[width=0.3\linewidth]{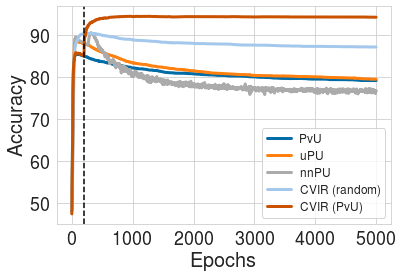}} \hfil
  \subfigure[Fraction of correctly rejected examples with CVIR]{\includegraphics[width=0.3\linewidth]{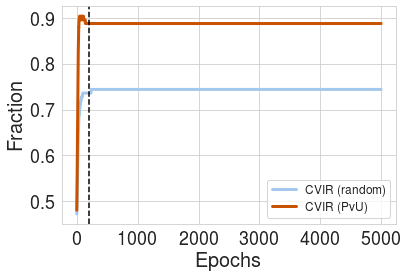}}
  \caption{\textbf{Comparison of different methods in overparameterized toy setup.} CVIR (random) denotes CVIR with random initialization and CVIR (PvU) denotes warm start with a positive versus negative classifier. Vertical line denotes the epoch at which we switch from PvU to CVIR in CVIR (PvU) training. (a) We observe that CVIR (PvU) improves significantly even over the best early stopped PvU model. As training proceeds, we observe that accuracy of nnPU, uPU and PvU training drops whereas CVIR (random) and CVIR (PvU) maintains superior and stable performance. (b) We observe that warm start training helps CVIR over randomly initialized model to correctly identity positives among unlabeled for rejection. }
  \label{fig:empirical_val}
\end{figure*}
\section{Empirical investigation of CVIR in toy setup}
\label{sec:empirical_CVIR_app}

As noted in our ablation experiments and in \citet{garg2021PUlearning}, 
domain discriminator trained with CVIR outperforms classifiers 
trained with other consistent objectives (nnPU~\citep{kiryo2017positive} and uPU~\citep{du2015convex}). 
While the analysis in \secref{sec:analysis} highlights consistency 
of CVIR procedure
in population, 
it doesn't capture the observed 
empirical efficacy of CVIR over 
alternative methods 
in overparameterized models. 
In the Gaussian setup described 
in \secref{sec:proof_thm2}, 
we train overparameterized 
linear models to compare CVIR 
with other methods (\figref{fig:empirical_val}). 
We fix $d=1000$
and use $n=250$ positive and $m= 250$ unlabeled points 
for training with $\alpha = 0.5$. We set the margin 
$\gamma$ at $0.05$. 
We compare CVIR with unbiased losses uPU and nnPU. 
We also make comparison 
with a naive positive versus unlabeled classifier 
(referred to as PvU). 
For CVIR, we experiment with a randomly 
initialized classifier and initialized with a
PvU classifier trained for $200$ epochs.

First, we observe that when a classifier 
is trained to distinguish positive and unlabeled
data, \emph{early learning}
happens~\citep{liu2020early,arora2019fine,garg2021RATT}, 
i.e., during the initial phase of learning 
classifier learns to classify 
positives in unlabeled correctly as positives
achieving high accuracy on validation 
positive versus negative data. 
While the early learning happens with all methods, 
soon in the later phases of training PvU starts overfitting 
to the unlabeled data as negative hurting its validation 
performance. 
For uPU and nnPU, while they improve over 
PvU training during the initial epochs, 
the loss soon becomes biased 
hurting the performance of classifiers trained with 
uPU and nnPU
on validation data.

For CVIR trained from a randomly initialized classifier, 
we observe that it improves slightly over the best PvU or 
the best nnPU model. Moreover, it maintains a relatively stable 
performance throughout the training. 
CVIR initialized with a PvU classifier 
significantly improves the performance. 
In \figref{fig:empirical_val} (b), 
we show that CVIR initialized with a PvU
correctly rejects 
significantly more fraction of positives 
from unlabeled than CVIR trained from scratch. 
Thus, post early learning  
rejection of large fraction of positives
from unlabeled training  
in equation \eqref{eq:step1_CVIR}
crucially helps CVIR. 

\section{Experimental Details} \label{sec:exp_app}

\subsection{Baselines} \label{subsec:baselines_app}

We compare PULSE with several popular methods from OSDA literature. While these methods are not specifically 
proposed for OSLS, they are introduced for the more general OSDA problem. In particular, we make comparions with DANCE~\citep{saito2020universal}, UAN~\citep{you2019universal}, CMU~\citep{fu2020learning}, STA~\citep{liu2019separate}, Backprop-ODA (or BODA)~\citep{saito2018open}. We use the open source implementation available at \url{https://github.com/thuml} and \url{https://github.com/VisionLearningGroup/DANCE/}.
Since OSDA methods do not estimate the prevalence of novel class explicitly,
we use the fraction of examples predicted in class $k+1$ as a surrogate. We next briefly describe the main idea for each method:

\emph{Backprob-ODA {} {}} \citet{saito2018open} proposed backprob ODA to train a $(k+1)$-way classifier. In particular, the network is trained to correctly classify source samples and for target samples, the classifier (specifically the last layer) is trained to output $0.5$ for the probability of the unknown class. The feature extractor is trained adversarially to move the probability of unknown class away from $0.5$ on target examples by utilizing the gradient reversal layer. 

\emph{Separate-To-Adapt (STA) {} {}} \citet{liu2019separate} trained a network that learns jointly from source and target by learning to separate negative (novel) examples from target. The training is divided into two parts. The first part consists of training a multi-binary $G_c|^{\abs{\out_s}}_{c = 1}$ classifier on labeled source data for each class and a binary classifier $G_b$ which generates the weights $w$ for rejecting target samples in the novel class. The second part consists of feature extractor $G_f$, a classifier $G_y$ and domain discriminator $G_d$ to perform adversarial domain adaptation between source and target data in the source label space. $G_y$ and $G_d$ are trained with incorporating weights $w$ predicted by $G_b$ in the first stage. 

\emph{Calibrated Multiple Uncertainties (CMU) {} {}} \citet{fu2020learning} trained a source classifier and a domain discriminator to discriminate the novel class from previously seen classes in target. To train the discriminator network, CMU uses a weighted binary cross entropy loss where $w(x)$ for each example $x$ in target which is the average of uncertainty estimates, e.g. prediction confidence of source classifier. During test time, target data $x$ with $w(x) \ge w_0$ (for some pre-defined threshold $w_0$) is classified as an example from previously seen classes and is given a class prediction with source classifier. Otherwise, the target example is classified as belonging to the novel class. 

\emph{DANCE {} {}} \citet{saito2020universal} proposed DANCE which combines a self-supervised clustering loss to cluster neighboring target examples and an entropy separation loss to consider alignment with source. Similar to CMU, during test time, DANCE uses thresholded prediction entropy of the source classifier to classifier a target example as belonging to the novel class. 

\emph{Universal Adaptation Networks (UAN) {} {}}  \citet{you2019universal} proposed UAN which also trains a source classifier and a domain discriminator to discriminate the novel class from previously seen classes in target. The objective is similar to CMU where instead of using uncertainty estimates from multiple classifiers, UAN uses prediction confidence of domain discriminator classifier. Similar to CMU, at test time, target data $x$ with $w(x) \le w_0$ (for some pre-defined threshold $w_0$) is classified as an example from previously seen classes and is given a class prediction with source classifier. Otherwise, the target example is classified as belonging to the novel class.

For alternative baselines, we experiment with source classifier directly deployed on the target data which may contain novel class and label shift among source classes (referred to as \emph{source-only}). This naive comparison is included to quantify benefits of label shift correction and identifying novel class over a typical $k$-way classifiers.

We also train a domain discriminator classifier for source versus target (referred to as \emph{domain disc.}). This is an adaptation of PU learning baseline\citep{elkan2008learning} which assumes no label shift among source classes. We use simple
domain discriminator training to distinguish source versus target. To estimate the fraction of novel examples, 
we use the EN estimator proposed in \citet{elkan2008learning}. 
For any target input, we make a prediction with the 
domain discriminator classifier (after re-scaling the sigmoid output with
the estimate proportion of novel examples). 
Any example that is classified as target, we assign it the class $k+1$. 
For examples classified as source, we make a prediction for them using 
the $k$-way source classifier. 

Finally, per the reduction presented in \secref{sec:reduction}, we train $k$ PU classifiers (referred to as \emph{k-PU}).
To train each PU learning classifier, we can plugin any method discussed in \secref{sec:prelims}. 
In the main paper, we included results obtained with plugin state-of-the-art PU learning algorithms. 
In \appref{subsec:different_PU_app}, we present ablations with other PU learning methods. 


\subsection{Dataset and OSLS Setup Details} \label{subsec:datasets_app}

We conduct experiments with seven benchmark classification datasets across 
vision, natural language, biology and medicine. 
Our datasets span language, image and table modalities. 
For each dataset, we simulate an OSLS problem.
We experiment with different fraction of novel class prevalence, 
source label distribution, and target label distribution. 
We randomly choose classes that constitute the novel target class. 
After randomly choosing source and novel classes, we first split the training data from each source class 
randomly into two partitions. This creates a random label distribution for shared classes among source and target. We then club novel classes to assign them a new class (i.e. $k+1$). 
Finally, we throw away labels for the target data to obtain an unsupervised DA problem. We repeat the same process on iid hold out data to obtain validation data with no target labels. 
For main experiments in the paper, we next describe important details for the OSLS setup simulated. 
All the other details can be found in the code repository.

For vision, we use CIFAR10, CIFAR100 ~\citep{krizhevsky2009learning} and Entity30~\citep{santurkar2020breeds}.  For language, we experiment with Newsgroups-20
dataset. 
Additionally, inspired by applications of OSLS in biology and medicine, we experiment with Tabula Muris~\citep{tabula2020single} (Gene Ontology prediction), Dermnet (skin disease prediction), and BreakHis~\citep{spanhol2015dataset} (tumor cell classification). 

\textbf{CIFAR10 {} {}} For CIFAR10, we randomly select $9$ classes as the source classes and a novel class formed by the remaining class. After randomly sampling the label marginal for source and target randomly, we get the prevalence for novel class as $0.2152$. 

\textbf{CIFAR100 {} {}} For CIFAR100, we randomly select $85$ classes as the source classes and a novel class formed by aggregating the data from $15$ remaining classes. After randomly sampling the label marginal for source and target randomly, we get the prevalence for novel class as $0.2976$. 

\textbf{Entity30 {} {}} Entity30 is a subset of ImageNet~\citep{russakovsky2015imagenet} with 30 super classes. For Entity30, we randomly select $24$ classes as the source classes and a novel class formed by aggregating the data from $6$ remaining classes. After randomly sampling the label marginal for source and target randomly, we get the prevalence for novel class as $0.3942$.

\textbf{Newgroups-20}  For Newsgroups20\footnote{\url{http://qwone.com/~jason/20Newsgroups/}}, we randomly select $16$ classes as the source classes and a novel class formed by aggregating the data from $4$ remaining classes. After randomly sampling the label marginal for source and target randomly, we get the prevalence for novel class as $0.3733$. 
%
This dataset is motivated by scenarios where novel news categories can appear over time but the distribution of articles given a news category might stay relatively unchanged. 

\textbf{BreakHis {} {}} BreakHis\footnote{\url{https://web.inf.ufpr.br/vri/databases/breast-cancer-histopathological-database-breakhis/}} contains $8$ categories of cell types, $4$ types of benign breast tumor and $4$ types malignant tumors (breast cancer). Here, we simulate OSLS problem specifically where $6$ cell types are observed in the source ($3$ from each) and a novel class appears in the target with $1$ cell type from each category.  After randomly sampling the label marginal for source and target randomly, we get the prevalence for novel class as $0.2708$.

\textbf{Dermnet {} {}} Dermnet data contains images of 23 types of skin diseases taken from Dermnet NZ\footnote{\url{http://www.dermnet.com/dermatology-pictures-skin-disease-pictures}}. We simulate OSLS problem specifically where $18$ diseases are observed in the source and a novel class appears in the target with the rest of the $5$ diseases. After randomly sampling the label marginal for source and target randomly, we get the prevalence for novel class as $0.3133$.

\textbf{Tabula Muris {} {}}
Tabula Muris dataset~\citep{tabula2020single} comprises of different cell types collected across $23$ organs of the mouse model organism. We use the data pre-processing scripts provided in \citep{cao2020concept}\footnote{\url{https://github.com/snap-stanford/comet}}. We just use the training set comprising of $57$ classes for our experiments. 
We simulate OSLS problem specifically where $28$ cell types are observed in the source and a novel class appears in the target with the rest of the $29$ cell types. After randomly sampling the label marginal for source and target randomly, we get the prevalence for novel class as $0.6366$.


\subsection{Details on the Experimental Setup} \label{subsec:setup_app}

We use Resnet18~\citep{he2016deep} for CIFAR10, CIFAR100, and Entity30. 
For all three datasets, in our main experiments, we train Resnet-18 from scratch.
We use SGD training with momentum of $0.9$ for $200$ epochs. 
We start with learning rate $0.1$ and 
decay it by multiplying it with $0.1$ every $70$ epochs. 
We use a weight decay of $5\times 10^{-4}$. 
For CIFAR100 and CIFAR10, we use batch size of $200$. 
For Entity30, we use a batch size of $32$.  
In \appref{subsec:contrastive_app}, we experiment with contrastive pre-training instead of random initialization.

For newsgroups, we use a convolutional architecture\footnote{\url{https://github.com/mireshghallah/20Newsgroups-Pytorch}}. 
We use glove embeddings to initialize the embedding layer. 
We use Adam optimizer with a learning rate of $0.0001$ and no weight decay. 
We use a batch size of $200$. We train with constant learning rate for $120$ epochs.

For Tabular Muris, we use the fully connected MLP used in \citet{cao2020concept}. 
We use the hyperparameters used in \citet{cao2020concept}. 
We use Adam optimizer with a learning rate of $0.0001$ and no weight decay.
We train with constant learning rate for $40$ epochs. We use a batch size of $200$. 

For Dermnet and BreakHis, we use Resnet-50 pre-trained on Imagenet. 
We use an initial learning rate of $0.0001$ and decay 
it by $0.96$ every epoch. 
We use SGD training with momentum of $0.9$ 
and weight decay of $5\times 10^{-4}$. 
We use a batch size of $32$. 
These are the default hyperparameters used in 
\citet{alom2019breast} and \citet{liao2016deep}.

For all methods, we use the same backbone for discriminator and source classifier. 
Additionally, for PULSE and domain disc.,
we use the exact same set of hyperparameters to train the domain discriminator and source classifier. 
For kPU, we use a separate final layer for each class with the same backbone. 
We use the same hyperparameters described above for all three methods. 
For OSDA methods, we use default method specific 
hyperparameters introduced in their works.
Since we do not have access to labels from the target data, 
we do not perform hyperparameter tuning but instead use the standard hyperparameters 
used for training on labeled source data.  
In future, we may hope to leverage  heuristics proposed 
for accuracy estimation without access to labeled target data~\citep{garg2022ATC}. 

We train models till the performance on validation source data (labeled) ceases to increase. 
Unlike OSDA methods, note that we do not use early stopping based on performance on held-out labeled target data.  
To evaluate classification performance, we report target accuracy on all classes, seen classes and the novel class. For target marginal, we separately report estimation error for previously seen classes and for the novel class. 
For the novel class, we report absolute difference between true and estimated marginal. For seen classes, we report average absolute estimation error.   
We open-source our code 
at \url{https://github.com/Neurips2022Anon}. 
By simply changing a single config file, new OSLS setups can be generated and experimented with. 

Note that for our main experiments, for vision datasets (i.e., CIFAR10, CIFAR100, and Entity30) and for language dataset, we do not 
initialize with a (supervised) pre-trained model to avoid overlap of novel classes with the classes in the dataset used for pre-training. 
For example, labeled Imagenet-1k is typically used for pre-training. However, Imagenet classes overlaps with all three vision datasets employed 
and hence, we avoid pre-trained initialization. 
In \appref{subsec:contrastive_app}, we experiment with contrastive pre-training on Entity30 and CIFAR100. 
In contrast, for medical datasets, we leverage Imagenet pre-trained models as there is 
no overlap between classes in BreakHis and Dermnet with Imagenet. 

\subsection{Detailed results from main paper} \label{subsec:results_paper_app}

For completeness, we next include results for all datasets. In particular, for each dataset we tabulate (i) overall accuracy on 
target; (ii) accuracy on seen classes in target; (iii) accuracy on the novel class; (iv) sum of absolute error in estimating target marginal among previously seen classes, i.e., $\sum_{y\in \out_s}\abs{\wh p_t(y) - p_t(y)}$;  and (v) absolute error for novel fraction estimation, i.e., $\abs{\wh p_t(y = k+1} - p_t(y = k+1)$.
\tabref{table:results_full} presents results on all the datasets. \figref{fig:accuracy_plot} and \figref{fig:mpe_plot} presents epoch-wise results. 

\subsection{Investigation into OSDA approaches} \label{subsec:OSDA_app}

We observe that with
default hyperparameters, 
popular OSDA methods significantly 
under perform as compared to PULSE. 
We hypothesize that the primary reasons 
underlying the poor performance 
of OSDA methods are (i) the heuristics 
employed to detect novel classes; and 
(ii) loss functions incorporated to 
improve alignment between examples from 
common classes in source and target.
To detect novel classes, 
a standard heuristic employed 
popular OSDA methods involves 
thresholding uncertainty 
estimates (e.g., prediction entropy, 
softmax confidence~\citep{you2019universal,fu2020learning,saito2020universal}) 
at a predefined 
threshold $\kappa$. However, 
a fixed $\kappa$, may not 
for different datasets and 
different fractions of 
the novel class. 
Here, 
we ablate by (i) removing loss function terms 
incorporated with an aim to improve source 
target alignment; and (ii) vary threshold
$\kappa$ and show improvements in performance 
of these methods.

For our investigations, we experiment with CIFAR10, with UAN and DANCE methods. 
For DANCE, we remove the entropy separation loss employed to encourage align target examples with source examples. 
For UAN, we remove the adversarial
domain discriminator training employed to align target examples with source examples. 
For both the methods, we observe that by removing the corresponding loss function terms we obtain a marginal improvement. For DANCE on CIFAR10, the performance goes up from $70.4$ to $72.5$ (with the same hyperparameters as the default run). FOR UAN, we observe similar minor improvements, where the performance goes up from $15.4$ to $19.6$. 

Next, we vary the threshold used for detecting the novel examples. By optimally tuning the threshold for CIFAR10 with UAN, we obtain a substantial increase. In particular, the overall target accuracy increases from $19.6$ to $33.1$. With DANCE on CIFAR10, optimal threshold achieves $75.6$ as compared to the default accuracy $70.4$.
In contrast, our two-stage method 
PULSE avoids the need to guess $\kappa$, by 
first estimating 
the fraction of novel class which 
then guides the classification 
of novel class versus previously seen classes.

\subsection{Ablation with novel class fraction} \label{subsec:novel_class_app}

In this section, we ablate on novel class proportion on CIFAR10, CIFAR100 and Newsgroups20. For each dataset we experiment with three settings, each obtained by varying the number of classes from the original data that constitutes the novel classes. We tabulate our results in \tabref{table:results_ablations}. 

\subsection{Contrastive pre-training on unlabeled data} \label{subsec:contrastive_app}

Here, we experiment with contrastive pre-training to pre-train the backbone networks used for feature extraction. In particular, 
we initialize the backbone architectures with SimCLR pre-trained weights. We experiment with CIFAR100 and Entity30 datasets. Instead of pre-training on mixture of source and target unlabeled data, we leverage the publicly available pre-trained weights\footnote{For CIFAR100: \url{https://drive.google.com/file/d/1huW-ChBVvKcx7t8HyDaWTQB5Li1Fht9x/view} and for Entity30, we use Imagenet pre-trained weights from here: \url{https://github.com/AndrewAtanov/simclr-pytorch}.}. \tabref{table:results_pretraining} summarizes our results. 
We observe that pre-training improves over random initialization for all the methods with PULSE continuing to outperform other approaches.

\begin{table}[h]
  \centering
  \small
  \setlength{\tabcolsep}{1.5pt}
  \renewcommand{\arraystretch}{1.2}
  \caption{{Comparison with different OSLS approaches with pre-trained feature extractor}. We use SimCLR pre-training to initialize the feature extractor for all the methods. All methods improve over random initialization (in \tabref{table:results}). Note that PULSE continues to outperform other approaches.}\label{table:results_pretraining}
  \begin{tabular}{@{}*{9}{c}@{}}
  \toprule
   &&    
   \multicolumn{2}{c}{\textbf{CIFAR100}}  & &  
   \multicolumn{2}{c}{\textbf{Entity30}} \\
    \multirow{2}{*}{Method} 
     &&    \multirow{2}{*}{  \parbox{1.0cm}{\centering  Acc (All)}}  &  \multirow{2}{*}{  \parbox{1.cm}{\centering MPE (Novel)}} 
      & &  \multirow{2}{*}{  \parbox{1.0cm}{\centering  Acc (All)}}  &  \multirow{2}{*}{  \parbox{1.cm}{\centering MPE (Novel)}}\\
      & & &  & &  \\
  \midrule
  BODA~\citep{saito2018open} && $37.1$ & $0.34$ && $52.1$ & $0.376$  \\ 
  Domain Disc. && $49.4$ & $0.041$ && $57.4$ & $0.024$    \\ 
  kPU && $37.5$ & ${0.297}$  && ${70.1}$ & ${0.32}$ \\ 
  PULSE (Ours) && $67.3$ & ${0.052}$  && $72.4$ & ${0.002}$   \\
  \bottomrule 
  \end{tabular}  
  \vspace{-5pt}
\end{table}

\subsection{Ablation with different PU learning methods} \label{subsec:different_PU_app}

In this section, we experiment with alternative PU learning approaches for PULSE and kPU. 
In particular, we experiment with the next best alternatives, i.e.,  
nnPU instead of CVIR for classification and DEDPUL instead of BBE for target marginal estimation.
We refer to these as kPU (alternative) and PULSE (alternative) in \tabref{table:results_ablation}. 
%
We present results on three datasets: CIFAR10, CIFAR100 and Newsgroups20 in the same setting as described in 
\secref{subsec:datasets_app}. 
We make two key observations: (i) PULSE continues to dominate kPU with alternative choices; (ii) CVIR and BBE significantly outperform alternative choices.

\begin{table}[h]
  \centering
  \small
  \setlength{\tabcolsep}{1.5pt}
  \renewcommand{\arraystretch}{1.2}
  \caption{{Comparison with different PU learning approaches}. `Alternative' denotes results with employing nnPU for classification and DEDPUL for target marginal estimation instead of `default' which uses CVIR and BBE. }\label{table:results_ablation}
  \begin{tabular}{@{}*{9}{c}@{}}
  \toprule
   &   
   \multicolumn{2}{c}{\textbf{CIFAR10}} & & 
   \multicolumn{2}{c}{\textbf{CIFAR100}}  & &  
   \multicolumn{2}{c}{\textbf{Newsgroups20}} \\
    \multirow{2}{*}{Method} 
     &    \multirow{2}{*}{  \parbox{1.0cm}{\centering  Acc (All)}}  &  \multirow{2}{*}{  \parbox{1.cm}{\centering MPE (Novel)}} 
      & &  \multirow{2}{*}{  \parbox{1.0cm}{\centering  Acc (All)}}  &  \multirow{2}{*}{  \parbox{1.cm}{\centering MPE (Novel)}} 
    & &   \multirow{2}{*}{  \parbox{1.0cm}{\centering  Acc (All)}}  &  \multirow{2}{*}{  \parbox{1.cm}{\centering MPE (Novel)}} \\
      & & &  & &  \\
  \midrule
  $k$-PU (alternative) & $53.4$ & $0.215$ && $12.1$ & $0.298$ && $14.1$ & $0.373$  \\ 
  $k$-PU (default) & $83.6$ & $0.036$ && $36.3$ & $0.298$ && $52.1$ & $0.307$   \\ 
  PULSE (alternative) & ${80.5}$ & ${0.05}$  && ${30.1}$ & ${0.231}$ && $39.8$ & $0.223$\\ 
  PULSE (default) & $86.1$ & ${0.008}$  && ${63.4}$ & ${0.078}$ && ${62.2}$ & ${0.061}$ \\ 
 
  \bottomrule 
  \end{tabular}  
  \vspace{-5pt}
\end{table}

\begin{table}[t]
  \centering
  \small
  \setlength{\tabcolsep}{1.5pt}
  \renewcommand{\arraystretch}{1.2}
  \caption{{Comparison with different OSLS approaches for different novel class prevalence}. We observe that for on CIFAR100 and Newsgroups20, PULSE maintains superior performance as compared to other approaches. On CIFAR10, as the proportion of novel class increases, the performance of of kPU improves slightly over PULSE for target accuracy. }\label{table:results_ablations}
  \begin{tabular}{@{}*{11}{c}@{}}
  \toprule
   & & 
    \multicolumn{2}{c}{ \multirow{2}{*}{ \parbox{3.cm}{\centering \textbf{CIFAR10} $(p_t(k+1) = 0.215)$} }}  && 
    \multicolumn{2}{c}{ \multirow{2}{*}{ \parbox{3.cm}{\centering \textbf{CIFAR10} $(p_t(k+1) = 0.406)$} }}  && 
    \multicolumn{2}{c}{ \multirow{2}{*}{ \parbox{3.cm}{\centering \textbf{CIFAR10} $(p_t(k+1) = 0.583)$} }}  \\
  & & & & &  \\
   {Method} 
     && Acc (All)  &    MPE (Novel)
      &&   Acc (All)  &    MPE (Novel) &&
       Acc (All)  &   MPE (Novel) \\
  \midrule
  BODA~\citep{saito2018open} && $63.1$ &  $0.162$ && $65.5$ & $0.166$ && $48.6$ & $0.265$  \\ 
  Domain Disc. && $47.4$ & $0.331$ && $57.5$ & $0.232$ &&  $68.7$ & $0.144$   \\ 
  kPU && $83.6$ &  $0.036$ && $87.8$ & $0.010$ && $89.9$ & $0.036$  \\ 
  PULSE (Ours) && ${86.1}$ & ${0.008}$  && $87.4$ & $0.009$ && $83.7$ & $0.006$  \\ 
 
  \bottomrule 
  \end{tabular}  
  \vspace{5pt}
  
  \begin{tabular}{@{}*{11}{c}@{}}
  \toprule
   & & 
    \multicolumn{2}{c}{ \multirow{2}{*}{ \parbox{3.cm}{\centering \textbf{CIFAR100} $(p_t(k+1) = 0.2976)$} }}  && 
    \multicolumn{2}{c}{ \multirow{2}{*}{ \parbox{3.cm}{\centering \textbf{CIFAR100} $(p_t(k+1) = 0.4477)$} }}  && 
    \multicolumn{2}{c}{ \multirow{2}{*}{ \parbox{3.cm}{\centering \textbf{CIFAR100} $(p_t(k+1) = 0.5676)$} }}  \\
  & & & & &  \\
   {Method} 
     && Acc (All)  &    MPE (Novel)
      &&   Acc (All)  &    MPE (Novel) &&
       Acc (All)  &   MPE (Novel) \\
  \midrule
  BODA~\citep{saito2018open} &&  $36.1$ & $0.41$ && $41.6$ & $0.075$ && $50.2$ & $0.03$  \\ 
  Domain Disc. && $45.8$ & ${0.046}$ &&  $52.3$ & $0.092$ && $58.7$ & $0.187$  \\ 
  kPU && $36.3$ & $0.298$ && $52.2$ & $0.448$ && $63.9$ & $0.568$ \\ 
  PULSE (Ours) &&  ${63.4}$  & $0.078$ && $66.6$ & $0.052$  && $68.2$ & $0.088$  \\ 
 
  \bottomrule 
  \end{tabular}
  \vspace{5pt}
  
  \begin{tabular}{@{}*{11}{c}@{}}
  \toprule
   & & 
    \multicolumn{2}{c}{ \multirow{2}{*}{ \parbox{3.cm}{\centering \textbf{Newsgroups20} $(p_t(k+1) = 0.3733)$} }}  && 
    \multicolumn{2}{c}{ \multirow{2}{*}{ \parbox{3.cm}{\centering \textbf{Newsgroups20} $(p_t(k+1) = 0.6452)$} }}  && 
    \multicolumn{2}{c}{ \multirow{2}{*}{ \parbox{3.cm}{\centering \textbf{Newsgroups20} $(p_t(k+1) = 0.7688)$} }}  \\
   & & & & &  \\
   {Method} 
     && Acc (All)  &    MPE (Novel)
      &&   Acc (All)  &    MPE (Novel) &&
       Acc (All)  &   MPE (Novel) \\
  \midrule
  BODA~\citep{saito2018open} && $43.4$ & $0.16$  && $25.5$ & $0.645$ && $17.7$ & $0.769$ \\ 
  Domain Disc. &&  $50.9$ & $0.176$  && $44.8$ & $0.085$ && $47.8$ & $0.064$  \\ 
  kPU &&  $52.1$ & $0.373$ && $50.2$ & $0.645$ && $35.5$ & $0.769$  \\ 
  PULSE (Ours) && ${62.2}$ & ${0.061}$ && $71.7$ & $0.044$ && $75.73$ & $0.179$  \\
  \bottomrule 
  \end{tabular}
\end{table}

\begin{table}[t]
  \centering
  \small
  \setlength{\tabcolsep}{1pt}
  \renewcommand{\arraystretch}{1.}
  \caption{\emph{Comparison of PULSE with other methods}. Across all datasets, PULSE outperforms alternatives for both target classification and novel class prevalence estimation. 
  Acc~(All) is target accuracy, Acc~(Seen) is target accuracy on examples from previously seen classes, and Acc~(Novel) is recall for novel examples. MPE~(Seen) is sum of absolute error for estimating target marginal among previously seen classes and  MPE~(Novel) is absolute error for novel prevalence estimation.  
  Results  reported by averaging across 3 seeds.}\label{table:results_full}
  \vspace{4pt}
  \begin{tabular}{@{}*{13}{c}@{}}
  \toprule
   && \multicolumn{5}{c}{\textbf{CIFAR-10}} && \multicolumn{5}{c}{\textbf{CIFAR-100}}   \\
    \multirow{2}{*}{Method} &&   \multirow{2}{*}{  \parbox{1.cm}{\centering Acc (All)}} & \multirow{2}{*}{  \parbox{1.cm}{\centering Acc (Seen)}} & \multirow{2}{*}{  \parbox{1.cm}{\centering  Acc (Novel)}} & \multirow{2}{*}{  \parbox{1.cm}{\centering MPE (Seen)}} & \multirow{2}{*}{  \parbox{1.cm}{\centering MPE (Novel)}} & &  \multirow{2}{*}{  \parbox{1.cm}{\centering Acc (All)}} & \multirow{2}{*}{  \parbox{1.cm}{\centering Acc (Seen)}} & \multirow{2}{*}{  \parbox{1.cm}{\centering Acc (Novel)}} & \multirow{2}{*}{  \parbox{1.cm}{\centering MPE (Seen)}} & \multirow{2}{*}{  \parbox{1.cm}{\centering MPE (Novel)}} \\
    & & & & & & & & \\
  \midrule
  Source-Only && ${67.1}$ & $87.0$ & - & - & - && $46.6$ & $66.4$ & - & - & - \\
  \midrule 
  UAN~\citep{you2019universal}   && $15.4$ & $19.7$ & $25.2$ & $1.44$ & $0.214$  && $18.1$ & $40.6$ & $14.8$ & $1.48$ & $0.133$\\
  BODA~\citep{saito2018open} && $63.1$ & $66.2$ & $42.0$ & $0.541$ & $0.162$ && $36.1$ & $17.7$ & $81.6$ & $0.564$ & $0.41$\\ 
  DANCE~\citep{saito2020universal} && $70.4$ & $85.5$ & $14.5$ & $0.784$ & $0.174$ && $47.3$ & $66.4$ & $1.2$ & $0.702$ & $0.28$  \\
  STA~\citep{liu2019separate} && $57.9$ & $69.6$ & $14.9$ & $0.409$ & $0.124$ && $42.6$ & $48.5$ & $34.8$ & $0.798$ & $0.14$\\
  CMU~\citep{fu2020learning} && $62.1$ & $77.9$ & $41.2$ & $0.443$ & $0.183$ && $35.4$ & $46.0$ & $15.5$ & $0.695$ & $0.161$\\
  \midrule
  Domain Disc. && $47.4$ & $87.0$ & $30.6$ & - & $0.331$ && $45.8$ & $66.5$ & $39.1$ & - & $\bf{0.046}$\\
  $k$-PU && $83.6$ & $79.4$ & $\bf{98.9}$ & $\bf{0.062}$ & $0.036$ && $36.3$ & $22.6$ & $\bf{99.1}$ & $6.31$ & $0.298$  \\ 
  PULSE (Ours) && $\bf{86.1}$ & $\bf{91.8}$ &  $88.4$ & ${0.091}$ & $\bf{0.008}$ && $\bf{63.4}$ & $\bf{67.2}$ & $63.5$ & $\bf{0.365}$ & $0.078$ \\ 
  \bottomrule 
  \end{tabular}  
  
  \vspace{5pt}

  \begin{tabular}{@{}*{13}{c}@{}}
  \toprule
   && \multicolumn{5}{c}{\textbf{Entity30}} && \multicolumn{5}{c}{\textbf{Newsgroup20}}   \\
    \multirow{2}{*}{Method} &&   \multirow{2}{*}{  \parbox{1.cm}{\centering Acc (All)}} & \multirow{2}{*}{  \parbox{1.cm}{\centering Acc (Seen)}} & \multirow{2}{*}{  \parbox{1.cm}{\centering  Acc (Novel)}} & \multirow{2}{*}{  \parbox{1.cm}{\centering MPE (Seen)}} & \multirow{2}{*}{  \parbox{1.cm}{\centering MPE (Novel)}} & &  \multirow{2}{*}{  \parbox{1.cm}{\centering Acc (All)}} & \multirow{2}{*}{  \parbox{1.cm}{\centering Acc (Seen)}} & \multirow{2}{*}{  \parbox{1.cm}{\centering Acc (Novel)}} & \multirow{2}{*}{  \parbox{1.cm}{\centering MPE (Seen)}} & \multirow{2}{*}{  \parbox{1.cm}{\centering MPE (Novel)}} \\
    & & & & & & & & \\
  \midrule
  Source-Only && ${32.0}$ & $53.5$ & - & - & - && $39.3$ & $64.4$ & - & - & - \\
  \midrule 
  BODA~\citep{saito2018open} && $42.22$ & $25.9$ & $67.2$ & $0.367$ & $0.189$ && $43.4$ & $38.0$ & $34.1$ & $0.550$ & $0.167$\\ 
  \midrule
  Domain Disc. && $43.2$ & $53.5$ & $68.0$ & - & $0.135$ && $50.9$ & $64.4$ & $\bf{93.2}$ & - & $0.176$\\
  $k$-PU && $50.7$ & $22.3$ & $\bf{94.4}$ & ${0.99}$ & $0.394$ && $52.1$ & $57.8$ & $42.7$ & $0.776$ & $0.373$  \\ 
  PULSE (Ours) && $\bf{58.0}$ & $\bf{54.3}$ &  $72.2$ & $\bf{0.215}$ & $\bf{0.054}$ && $\bf{62.2}$ & $\bf{65.0}$ & ${83.6}$ & $\bf{0.232}$ & $\bf{0.061}$ \\ 
  \bottomrule 
  \end{tabular}  
  
    \vspace{5pt}
    
  \begin{tabular}{@{}*{13}{c}@{}}
  \toprule
   && \multicolumn{5}{c}{\textbf{Tabula Muris}} && \multicolumn{5}{c}{\textbf{BreakHis}}   \\
    \multirow{2}{*}{Method} &&   \multirow{2}{*}{  \parbox{1.cm}{\centering Acc (All)}} & \multirow{2}{*}{  \parbox{1.cm}{\centering Acc (Seen)}} & \multirow{2}{*}{  \parbox{1.cm}{\centering  Acc (Novel)}} & \multirow{2}{*}{  \parbox{1.cm}{\centering MPE (Seen)}} & \multirow{2}{*}{  \parbox{1.cm}{\centering MPE (Novel)}} & &  \multirow{2}{*}{  \parbox{1.cm}{\centering Acc (All)}} & \multirow{2}{*}{  \parbox{1.cm}{\centering Acc (Seen)}} & \multirow{2}{*}{  \parbox{1.cm}{\centering Acc (Novel)}} & \multirow{2}{*}{  \parbox{1.cm}{\centering MPE (Seen)}} & \multirow{2}{*}{  \parbox{1.cm}{\centering MPE (Novel)}} \\
    & & & & & & & & \\
  \midrule
  Source-Only && ${33.8}$ & $93.3$ & - & - & - && $70.0$ & $95.8$ & - & - & - \\
  \midrule 
  BODA~\citep{saito2018open} && $76.5$ & $59.8$ & $87.0$ & $0.200$ & $0.079$ && $71.5$ & $81.8$ & $44.0$ & $0.163$ & $0.077$\\ 
  \midrule
  Domain Disc. && $73.0$ & $93.3$ & $\bf{94.7}$ & - & $0.071$ && $56.5$ & $95.8$ & $\bf{90.4}$ & - & $0.09$\\
  $k$-PU && $85.9$ & $91.6$ & $83.3$ & $\bf{0.279}$ & $0.307$ && $75.6$ & $71.7$ & ${86.1}$ & ${0.094}$ & $0.058$  \\ 
  PULSE (Ours) && $\bf{87.8}$ & $\bf{94.6}$ &  $88.8$ & ${0.388}$ & $\bf{0.058}$ && $\bf{79.1}$ & $\bf{96.1}$ & ${76.3}$ & $\bf{0.090}$ & $\bf{0.054}$ \\ 
  \bottomrule 
  \end{tabular}  
  \vspace{5pt}
 
  \begin{tabular}{@{}*{8}{c}@{}}
  \toprule
   && \multicolumn{5}{c}{\textbf{Dermnet}}    \\
    \multirow{2}{*}{Method} &&   \multirow{2}{*}{  \parbox{1.cm}{\centering Acc (All)}} & \multirow{2}{*}{  \parbox{1.cm}{\centering Acc (Seen)}} & \multirow{2}{*}{  \parbox{1.cm}{\centering  Acc (Novel)}} & \multirow{2}{*}{  \parbox{1.cm}{\centering MPE (Seen)}} & \multirow{2}{*}{  \parbox{1.cm}{\centering MPE (Novel)}} \\
    & & & & \\
  \midrule
  Source-Only && ${41.4}$ & $53.6$ & - & - & -  \\
  \midrule 
  BODA~\citep{saito2018open} && $43.8$ & $31.4$ & $58.4$ & $\bf{0.401}$ & $0.207$\\ 
  \midrule
  Domain Disc. && $40.6$ & $53.6$ & $82.7$ & - & $0.083$ \\
  $k$-PU && $46.0$ & $26.0$ & $\bf{89.9}$ & ${1.44}$ & $0.313$  \\ 
  PULSE (Ours) && $\bf{48.9}$ & $\bf{53.7}$ &  $57.7$ & $\bf{0.41}$ & $\bf{0.043}$  \\ 
  \bottomrule 
  \end{tabular}  
  \vspace{-5pt}
\end{table}

\begin{figure}[t]
    \centering
    \subfigure[CIFAR10 ]{\includegraphics[width=0.3\linewidth]{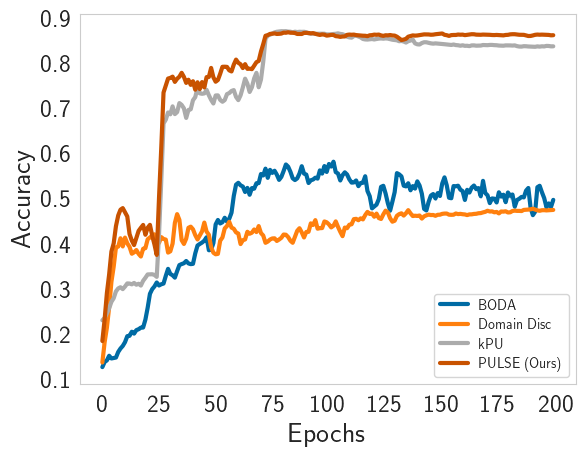}} \hfil
    \subfigure[CIFAR100  ]{\includegraphics[width=0.3\linewidth]{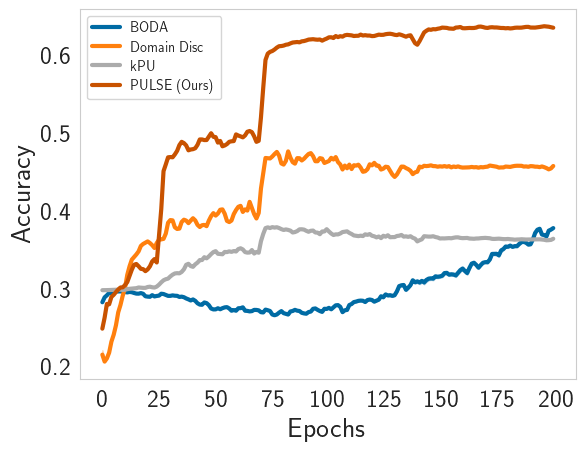}} \hfil
    \subfigure[Entity30 ]{\includegraphics[width=0.3\linewidth]{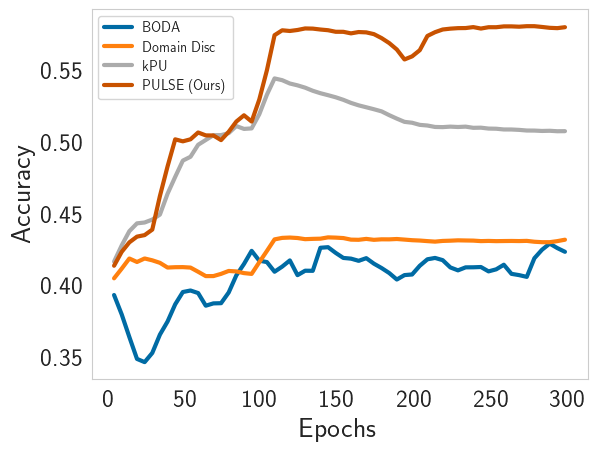}} \hfil
    \subfigure[Newsgroups20 ]{\includegraphics[width=0.3\linewidth]{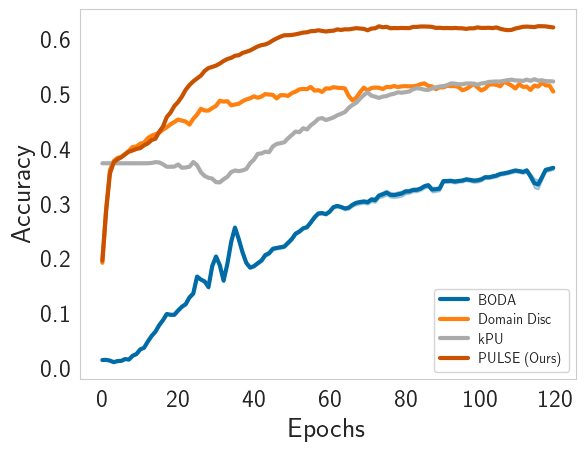}} \hfil
    \subfigure[Tabula Muris]{\includegraphics[width=0.3\linewidth]{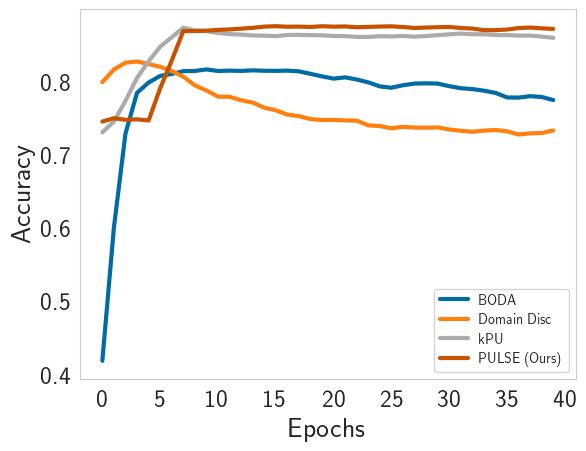}} \hfil
    \subfigure[BreakHis ]{\includegraphics[width=0.3\linewidth]{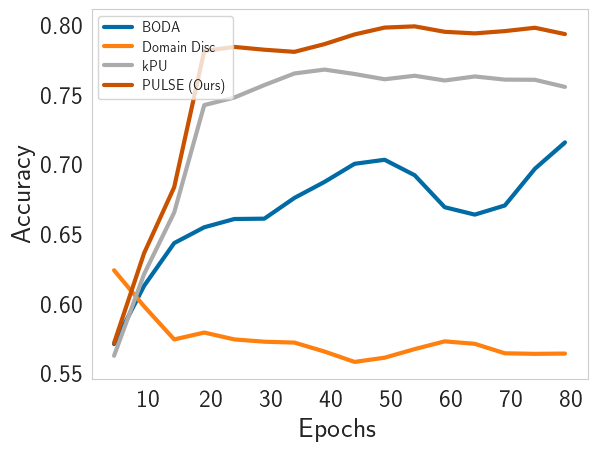}} \hfil
    \subfigure[Dermnet ]{\includegraphics[width=0.3\linewidth]{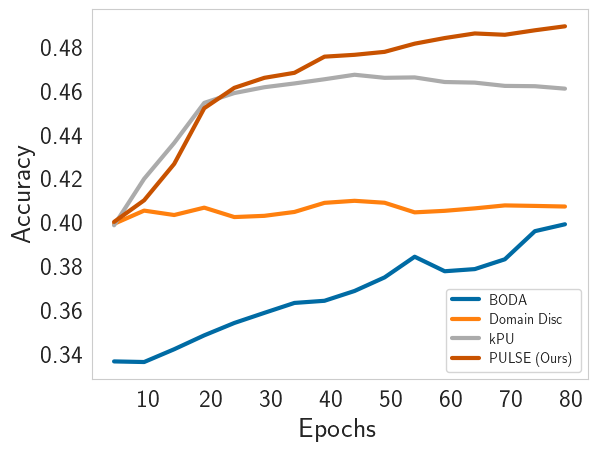}} \hfil
    
    \caption{\textbf{Epoch wise results for target accuracy.} Results aggregated over 3 seeds. PULSE maintains stable and superior performance when compared to alternative methods. } 
    \label{fig:accuracy_plot}
\end{figure}

\begin{figure}[t]
    \centering
    \subfigure[CIFAR10 ]{\includegraphics[width=0.3\linewidth]{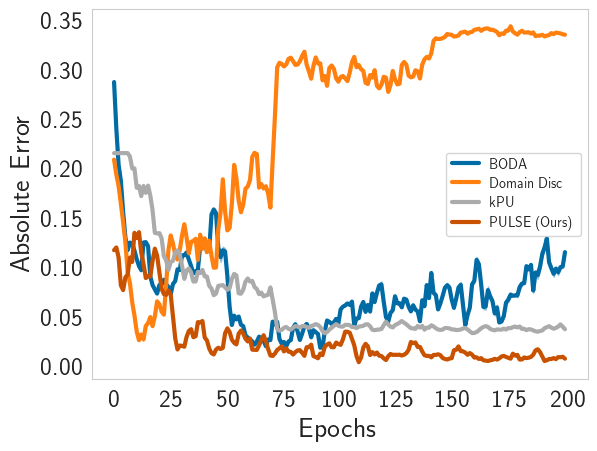}} \hfil
    \subfigure[CIFAR100  ]{\includegraphics[width=0.3\linewidth]{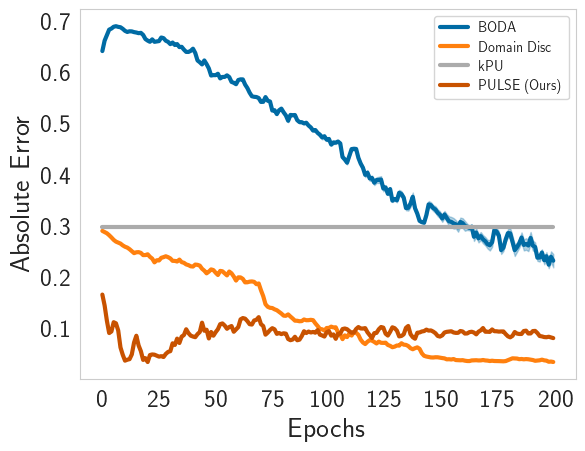}} \hfil
    \subfigure[Entity30 ]{\includegraphics[width=0.3\linewidth]{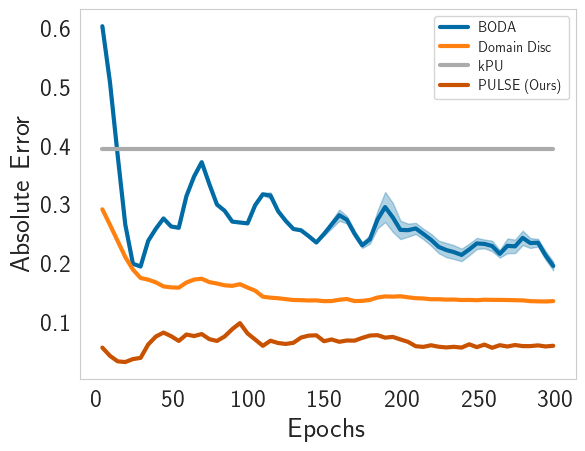}} \hfil
    \subfigure[Newsgroups20 ]{\includegraphics[width=0.3\linewidth]{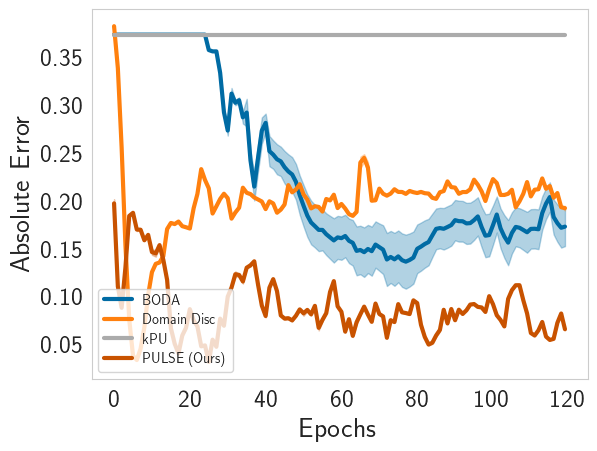}} \hfil
    \subfigure[Tabula Muris]{\includegraphics[width=0.3\linewidth]{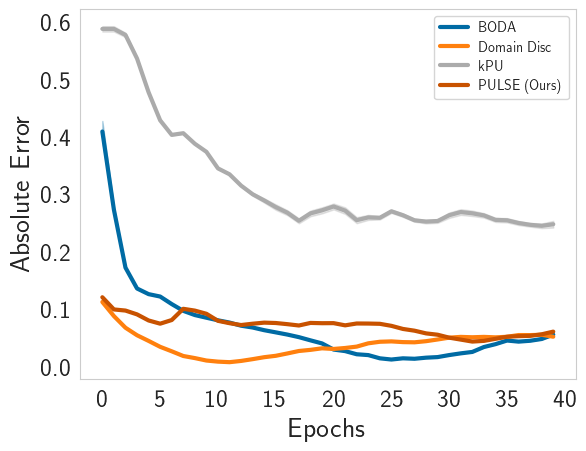}} \hfil
    \subfigure[BreakHis ]{\includegraphics[width=0.3\linewidth]{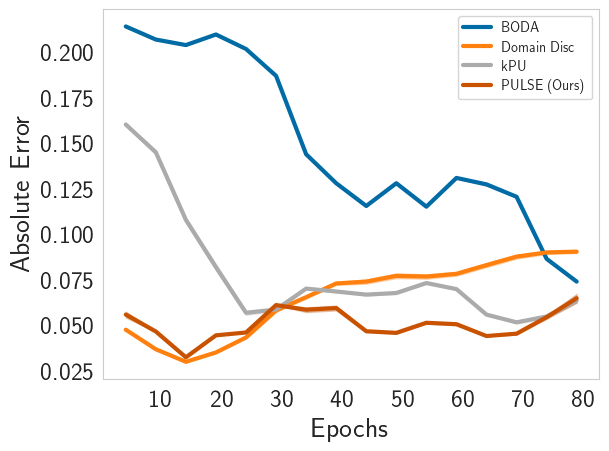}} \hfil
    \subfigure[Dermnet ]{\includegraphics[width=0.3\linewidth]{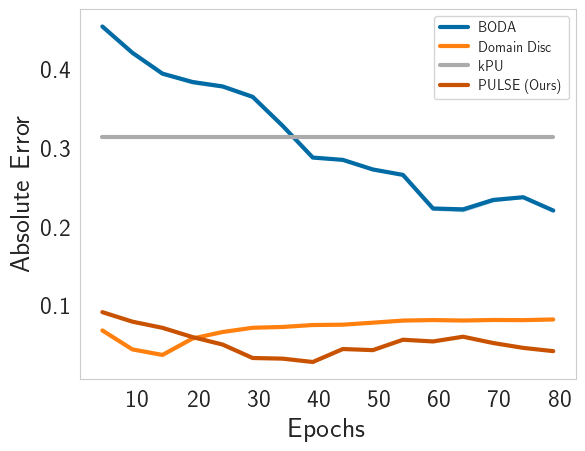}} \hfil
    
    \caption{\textbf{Epoch wise results for novel prevalence estimation.} Results aggregated over 3 seeds. PULSE maintains stable and superior performance when compared to alternative methods. } 
    \label{fig:mpe_plot}
\end{figure}

\update{\subsection{Age Prediction Task}}

\update{We consider an experiment on UTK Face dataset\footnote{\url{https://susanqq.github.io/UTKFace/}}.
We create an 8-way class classification problem where we split the age in the following 8 groups: $0$--$10$, $11$--$20$, $\cdots$, $60$--$70$ and $>70$. We consider the first 7 age groups in source and introduce age group $>70$ into the target data. OSLS continues to outperform the $k$PU baseline for novel prevalence estimation. Additionally, for target classification  performance of OSLS is similar to k$PU$ baseline (ref. \tabref{table:results_age_prediction}).
}

\update{
\begin{table}[h]
  \centering
  \small
  \setlength{\tabcolsep}{1.5pt}
  \renewcommand{\arraystretch}{1.2}
  \caption{Results on age prediction dataset. We observe that the prevalence of the novel class as estimated with our PULSE framework is significantly closer to the true estimate. Additionally target classification performance of OSLS is similar to that of $k$PU both of which significantly improve over domain discriminator and source only baselines.  }\label{table:results_age_prediction}
   \vspace{5pt}
  \begin{tabular}{@{}*{9}{c}@{}}
  \toprule
   &&    
   \multicolumn{2}{c}{\textbf{UTK Face}}  &  \\
    \multirow{2}{*}{Method} 
     &&    \multirow{2}{*}{  \parbox{1.0cm}{\centering  Acc (All)}}  &  \multirow{2}{*}{  \parbox{1.cm}{\centering MPE (Novel)}} 
      &  \\
      & & &  \\
  \midrule
  Source Only && $50.1$ & $0.11$ \\ 
  Domain Disc. && $52.4$ & $0.08$     \\ 
  kPU && $56.7$ & ${0.11}$   \\ 
  PULSE (Ours) && $56.8$ & ${0.01}$   \\
  \bottomrule 
  \end{tabular}  
  \vspace{-5pt}
\end{table}}

\end{document}